\documentclass[10pt,twocolumn,letterpaper]{article}

\usepackage[pagenumbers]{wacv} 

\newcounter{oldtocdepth}

\definecolor{wacvblue}{rgb}{0.21,0.49,0.74}
\usepackage[pagebackref,breaklinks,colorlinks,allcolors=wacvblue]{hyperref}
\usepackage{graphicx}
\usepackage{booktabs}

\usepackage[ruled,vlined]{algorithm2e}
\usepackage{bbm}
\usepackage{amsthm}

\newtheorem{definition}{Definition}
\newtheorem{proposition}{Proposition}
\newtheorem{corollary}{Corollary}
\newtheorem{lemma}{Lemma}
\theoremstyle{definition}
\newtheorem{remark}{Remark}
\newtheorem{theorem}{Theorem}

\def\wacvPaperID{94}
\def\confName{WACV}
\def\confYear{2027}

\usepackage{marginnote}

\usepackage{soul}
\usepackage{tikz}

\newcommand\encircle[1]{%
  \tikz[baseline=(X.base)] 
    \node (X) [draw, shape=circle, inner sep=0] {\strut #1};}
\title{Batch Normalization Amplifies Memorization and Privacy Risks}

\author{Ngoc Phu Doan\\
Queen's University Belfast\\
{\tt\small ndoan01@qub.ac.uk}
\and
Chongyan Gu\\
Queen's University Belfast\\
{\tt\small c.gu@qub.ac.uk}
\and
Ihsen Alouani\\
Queen's University Belfast\\
{\tt\small i.alouani@qub.ac.uk}
}

\begin{document}
\maketitle

\begin{abstract}
Batch Normalization (BN) is widely adopted to enable faster convergence and more stable training of deep neural networks. However, its impact on privacy and memorization has remained largely unexplored. In this work, we investigate the effect of BN layers on the memorization of atypical or outlier samples and its implications for privacy leakage. We conduct an extensive empirical study using three complementary approaches: (i) unintended memorization of out-of-distribution samples, (ii) per-sample influence, and (iii) susceptibility to membership inference attacks (MIA). Across multiple datasets and architectures, we consistently observe that BN substantially increases the memorization of outliers compared to models without BN. Critically, this amplified memorization translates directly into privacy vulnerabilities: models with BN exhibit significantly higher susceptibility to MIAs. We complement our empirical findings with a mechanistic analysis under the exact BN backward pass, which shows that BN amplifies the per-step margin growth of outlier samples during training. Our results highlight an underappreciated privacy risk associated with BN and provide both practical and theoretical insights into how normalization layers can amplify the influence of rare or sensitive training examples.
\end{abstract}
\section{Introduction}
\label{sec:introduction}
Batch normalization (BN) has become a cornerstone in modern deep learning architectures, normalizing activations across intermediate layers to enhance training stability and overall model performance.  It has shown consistent improvement in models' accuracy and training convergence speed, making it a fundamental component of state-of-the-art architectures. Nevertheless, despite its well-established advantages and pervasive use, the deeper implications of BN on the behavior and properties of deep neural networks remain poorly understood. The original motivation for BN, reducing internal covariance shift \cite{ioffe2015batch}, has been disputed \cite{santurkar2018does}. The hypothesis that normalization layers enhance smoothness is supported through both empirical and theoretical analyses \cite{santurkar2018does,bjorck2018understanding,lyu2022understanding}, while the Hessian analysis in \cite{9378171} disputes this hypothesis and shows that BN does not necessarily make the loss landscape smooth.

Beyond optimization dynamics, recent work has explored BN's implications for model behavior. For example, several works have investigated the impact of BN on the models' robustness to adversarial perturbations \cite{wang2022removing, zhang2022achieving, Kong2023BatchNormVulnerability}, eventually concluding a negative impact of BN with an increasing vulnerability to adversarial noise. This suggests that BN might have unintended side effects, influencing the way models represent and respond to data. While robustness has attracted substantial attention, the \emph{privacy implications} of BN remain largely overlooked. In particular, the relationship between BN and \emph{memorization} has not been systematically studied. Memorization constitutes a primary avenue for privacy leakage, underpinning attacks like MIA~\cite{shokri2017membership, carlini2019secret} and sensitive information extraction \cite{carlini2023extracting}. Understanding whether BN accelerates memorization is therefore critical in assessing its broader security consequences.

\noindent\textbf{Motivation: Why Outlier Memorization Matters?} Prior work has established that memorization in deep learning is not uniformly distributed across the training set: outliers, rare samples, and mislabeled data are \textit{disproportionately} likely to be memorized \cite{feldman20, carlini2022membership, 10.5555/3600270.3601234}. From a privacy perspective, this phenomenon is particularly concerning because outliers often contain sensitive information, such as individuals with rare medical conditions, minority demographic characteristics, or unique behavioral patterns, precisely the cases where privacy protection is most critical.  In this work, we present the first comprehensive study that establishes a link between BN and memorization and privacy leakage, particularly for outliers and rare samples. Specifically, our results demonstrate that BN both accelerates memorization of these vulnerable samples and amplifies their leakage under MIAs. This raises a critical concern: if BN further emphasizes the very samples that are most vulnerable to memorization, its widespread use in modern architectures may inadvertently increase privacy risks.

\noindent \textbf{Contributions.} We make the following contributions: 
\begin{itemize}
    \item On the empirical side, we provide a systematic evaluation of memorization under three complementary lenses: (\emph{i}) controlled label-flip and out-of-distribution (OOD) experiments that test the model’s propensity to memorize outliers, (\emph{ii}) per-sample influence analysis via LOO that reveals the sensitivity of models to individual examples, and (\emph{iii}) MIA that serve as an auditing tool to quantify privacy risk. Together, these approaches offer a multifaceted view of memorization in BN versus no-BN models.
    \item We found that BN consistently leads to \textbf{faster and stronger memorization} of outliers, as well as \textbf{increased leakage} under membership inference attacks.
    
    \item On the theoretical side, we provide a rationale why BN disproportionately accelerates the memorization of outlier or noisy samples.  Working with the \emph{exact} BN backward pass (i.e., differentiating through the batch statistics rather than freezing them), we prove that BN amplifies per-step margin growth on tail or mislabeled samples by a factor of $(\gamma/\sigma)^{2}\kappa_{\mathrm{eff}}^{2}$, where $\gamma$ is BN's learnable scale, $\sigma$ is the \emph{batch} standard deviation of the pre-activations, and $\kappa_{\mathrm{eff}}\in(0,1]$ is a geometric factor introduced by the exact backward. We further show \emph{why} $\gamma/\sigma$ grows: $\gamma$ is driven by the unfit tail while $\sigma$ is set by the bulk, so the ratio rises in proportion to the memorization load, and we observe $(\gamma/\sigma)>1$ across a wide range of trained models. This quadratic amplification is particularly pronounced for channels with low variance relative to their learned scale parameter. 
    \item  We introduce a BN parameter regularization algorithm (\textbf{BNReg}) which constrains the ratio $(\gamma/\sigma)$ during training that sufficiently mitigates memorization on outliers while retaining generalizability.

\end{itemize}
\noindent These results reveal a fundamental tension between optimization benefits and privacy risks. Our findings suggest practitioners should carefully weigh these tradeoffs when deploying BN in privacy-sensitive applications.
\section{Background}
\label{sec:preliminaries}

\noindent \textbf{Batch Normalization.} BN has been widely used in Deep Neural Networks since it accelerates training convergence and reduces the internal covariance shift to stabilize learning dynamics \cite{ioffe2015batch}. BN normalizes the data representation space of intermediate layers to a zero-mean distribution to strengthen the discrimination among data points, thus making it easier for the succeeding layers to capture patterns. Let $\mathcal{B}_l=\{z_{1...m}^l\}$ be the mini-batch output of the model at the $l^{th}$ layer. Given a batch size $m$, mini-batch mean and variance are computed during training as $\mu_{\mathcal{B}_l}=1/m\sum_{i=1}^{m} z_i$ and $\sigma_{\mathcal{B}_l}^2=1/m\sum_{i=1}^{m}(z_i-\mu_{\mathcal{B}_l})^2$. BN at $l^{th}$ layer is determined by:
\begin{equation}
    y_i=BN(\mathcal{B}_l)=\gamma \frac{z_i-\mu_{\mathcal{B}_l}}{\sqrt{\sigma_{\mathcal{B}_l}^2+\epsilon}} + \beta,
\end{equation}
where $\gamma$, $\beta$ are the scale and shift learnable parameters and $\epsilon > 0$ is an arbitrarily small number to avoid division by 0. 

\noindent\textbf{Memorization.} In machine learning, \emph{memorization} refers to the phenomenon where a model’s prediction on a training example depends directly on that specific example being present in the training set, rather than on generalizable patterns shared across the data distribution~\cite{feldman20}.
Prior work shows that \emph{outliers, rare, noisy, or mislabeled examples} are disproportionately prone to memorization. For instance,~\cite{carlini2023extracting} demonstrated that memorized content is often concentrated in atypical or low-probability regions of the input space. This tail sensitivity underlies both the generalization-memorization tradeoff~\cite{feldman20} and the privacy risks of large-scale models~\cite{carlini2019secret}.

\noindent\textbf{Membership Inference Attack.} MIA is a privacy attack where an adversary aims to determine whether a particular data point was included in the training set of a model~\cite{shokri2017membership, carlini2022membership}. Successful MIA violates data privacy: it reveals that an individual's information was used for training, which can have serious consequences in domains such as healthcare (revealing medical history), finance (exposing transaction records), or personal data (disclosing user behavior). MIA exploits a fundamental asymmetry: trained models often exhibit lower loss on training samples compared to unseen data, creating a distinguishable signal that adversaries can use.

Beyond being an attack vector, MIA serves as a \emph{privacy auditing tool} that quantifies the practical privacy risk introduced by memorization. The success rate of MIA, therefore, provides a direct measure of privacy leakage: higher attack accuracy indicates potential risk of sensitive information extraction about individuals in the training set~\cite{song2021systematic}.

\section{Methodology}
\label{sec:methodology}
\noindent\textbf{Research Question.} 
Memorization in deep learning disproportionately affects outliers, rare samples, and atypical examples \cite{feldman20, carlini2022privacy}. This poses severe privacy risks because such samples often represent individuals with sensitive characteristics (rare medical conditions, minority demographics, etc.) and are the primary targets of successful MIAs and data extraction attacks \cite{carlini2022membership}. Given BN's established role in accelerating optimization \cite{ioffe2015batch}, a natural question arises: \textit{Does the same mechanism that accelerates learning on typical samples also accelerate memorization of outliers?} If so, BN may inadvertently amplify privacy risks for the most vulnerable training examples.


\noindent \textbf{Problem formulation.} Let $\mathcal{D}$ be the data population distribution. We refer to the learning algorithm $\mathcal{A}$ on the training data $S\sim\mathcal{D}^n$ that learns a probabilistic function $f:\mathbb{R}^d\rightarrow \mathbb{R}^\mathcal{C}$, where $d, \mathcal{C}$ denote the dimension of the input space and the number of classes, respectively. We assume that $S$ contains an atypical set (outliers, rare samples) $S^{\text{out}}$ and a clean set $S^{\text{in}}$ or, equivalently, $S=S^{\text{in}}\bigcup S^{\text{out}}$. Let $k$ denote the ratio of outliers in the training set $S$; i.e., $k=|S^{\text{out}}|/|S^{\text{in}}|$. Let $\mathcal{M}(f, S)$ be a given $\textit{memorization measure}$, which quantifies how much memorization the model $f$ exhibits on the training data $S$. There are several definitions of memorization in the literature and therefore different metrics:

\noindent \textbf{i)} Unintended Memorization: A model exhibits unintended memorization where its prediction on a training example $(x_i, y_i)$ depends on that specific example being in the training set rather than on generalizable patterns \cite{carlini2019secret, tirumala2022memorization}.

\noindent \textbf{ii)} Sample Influence: Memorization manifests through the disproportionate influence that certain samples exert on model parameters during training \cite{feldman20, zhang2023counterfactual}.



\begin{figure}[t]
    \centering
    \includegraphics[width=\linewidth]{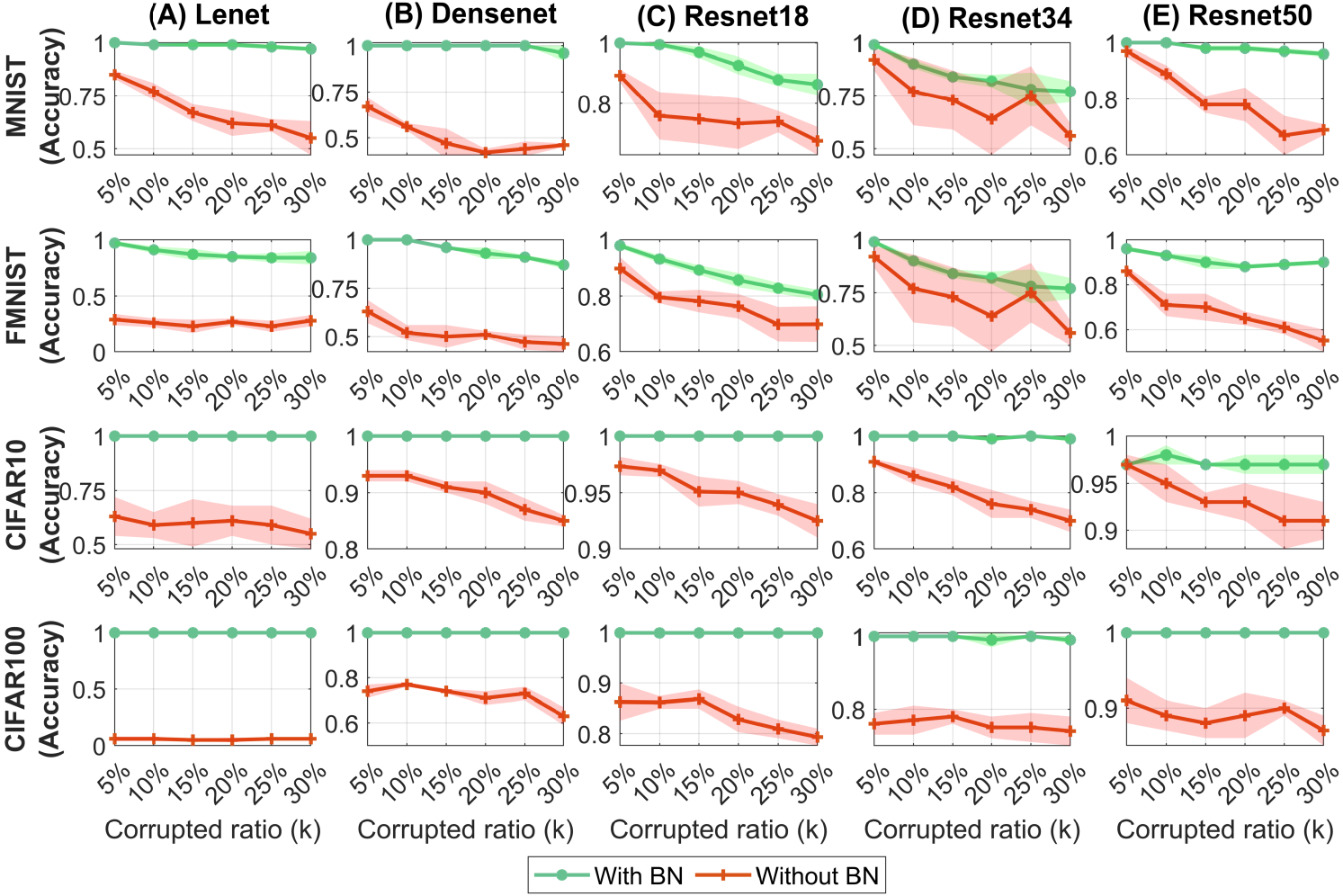}
    \caption{Memorization of models with BN (green lines) and no-BN (red lines) across diverse label-corrupted datasets, including MNIST, FashionMNIST, CIFAR10, and CIFAR100.}
    \label{fig:hard_memorization}
\end{figure}

\noindent\textbf{Experimental methodology.} We investigate BN's impact on memorization through three complementary lenses, each designed to capture different aspects of how models internalize training data:
\noindent\textbf{\encircle{a} Forced Memorization via Label Corruption and noisy data} (Sections~\ref{subsec:forced_memorization}). This experiment corresponds to the unintended memorization case. We examine whether BN accelerates memorization by training on samples $(x, \tilde{y})$ where labels $\tilde{y}$ are intentionally corrupted (i.e., $\tilde{y}\neq y$). Since no generalizable pattern exists between $x$ and $\tilde{y}$, high accuracy on these samples indicates pure memorization. We also assess the impact of BN on OOD data $(\tilde{x}, y)$ where $\tilde{x}$ is sampled from another distribution. This controlled setting allows us to quantify memorization capacity in isolation from generalization.
\noindent \textbf{\encircle{b} Mechanistic Analysis via leave-one-out} (Section~\ref{subsec:lood}) We estimate the model memorization through the sample influence by the leave-one-out (LOO) mechanism, inspired by prior works~\cite{feldman20, ye2024leave}.
\noindent \textbf{\encircle{c} Privacy Leakage via MIA} (Section~\ref{subsec:mia}). We quantify the distinguishability of training samples, using MIAs on naturally occurring OOD samples. These samples, identified through statistical outlier detection, represent real-world privacy risks where training data characteristics can be inferred. 

\noindent \textbf{Datasets and models.} We conduct experiments on four image classification datasets: MNIST \cite{deng2012mnist}, FashionMNIST \cite{xiao2017fashion}, CIFAR10 \cite{krizhevsky2009learning} and CIFAR100 \cite{krizhevsky2009learning}. For model architectures, we employ Lenet \cite{lecun2002gradient}, Densenet \cite{huang2017densely}, Resnet18, Resnet34 and Resnet50 \cite{he2016deep}. For forced memorization, we repeat experiments five times and report the averaged value to guarantee the robustness of our results. We additionally quantify the memorization of outliers on ViT \cite{dosovitskiy2020image} with Layer Norm \cite{ba2016layernormalization} (c.f. Section E.5 Supplementaries); more complex datasets such as SVHN \cite{netzer2011reading} and Tiny ImageNet \cite{huynh2022vision} (c.f. Section E.6 Supplementaries); and diverse normalization techniques such as Instance Norm \cite{ulyanov2016instance}, Group Norm \cite{wu2018group}, Power Norm \cite{shen2020powernorm}, and Norm Matters \cite{hoffer2018norm} (c.f. Section E.4 Supplementaries).

\noindent\textbf{Implementation details and Ablation study.} Details of our models' configuration and hyperparameter sensitivity can be found in Sections E and F of the Supplementaries.

\section{Experiments}
\label{sec:experiment}
\subsection{Forced memorization}
\label{subsec:forced_memorization}

\noindent\textbf{Methodology.} We conduct a memorization scheme where models are constrained to overfit data containing atypical samples, following setting \encircle{a} detailed in Section \ref{sec:methodology}. 
This forced memorization setting is commonly investigated in understanding generalization and memorization of deep neural networks so far \cite{zhang2016understanding}. 
In the out-of-distribution setting, new data points are sampled from other distributions, except that their labels still belong to the same label group of the original dataset (see Algorithms~B.1 and B.2 detailed in the Supplementary). The model architecture having a bigger marginal privacy risk on corrupted data indicates a higher capacity for data memorization.

\begin{figure}[t]
    \centering
    \includegraphics[width=\linewidth]{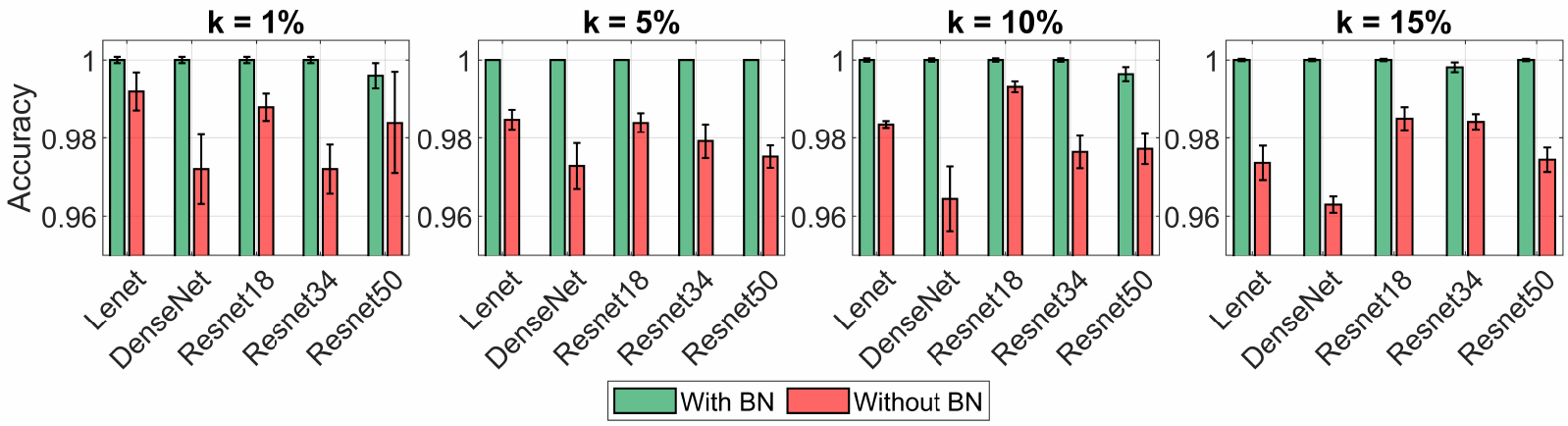}
    \vspace{-0mm}\caption{Accuracy of BN models versus no-BN models on out-of-distribution CIFAR10 datasets across different architectures.
    }\vspace{-0mm}
    \label{fig:ood_acc}
\end{figure}

\noindent\textbf{Results.} We inspect the behavior of BN models and no-BN models on datasets with flipped labels, as shown in Figure \ref{fig:hard_memorization}. We evaluate five architectures: Lenet (A), Densenet (B), Resnet18 (C), Resnet34 (D) and Resnet50 (E), across different flipping ratios $k$. The observation is that the models incorporating BN on the flipped-label portion (green curves) consistently achieve higher accuracy than models without BN (red curves) across all datasets and all ratios. For example, the corrupted accuracy of Resnet50 on MNIST can drop significantly between 3\% and 29\% when increasing the mislabeled ratio. This indicates that BN facilitates fitting noisy samples, thereby increasing the model's capacity for memorizing training data. 
Moreover, the margin of memorization is larger when increasing $k$, which can be seen clearly in CIFAR10. This trend implies that no-BN models diminish memorization on corrupted data when the data distribution becomes increasingly biased to the outlier distribution. In contrast, models with BN remain largely unaffected by this distributional shift, maintaining higher memorization capacity. 
Furthermore, Figure \ref{fig:ood_acc} compares models' accuracy on the OOD CIFAR10 across different architectures. While the models with BN perfectly fit the outlier data, the models without BN consistently show a weaker ability to fit OOD data, by around 3\% in terms of accuracy across varied values of $k$. Therefore, integrating BN layers into model architectures can improve the capacity to memorize OOD data distributions. 

\begin{figure}[t]
    \centering
    \includegraphics[width=\linewidth]{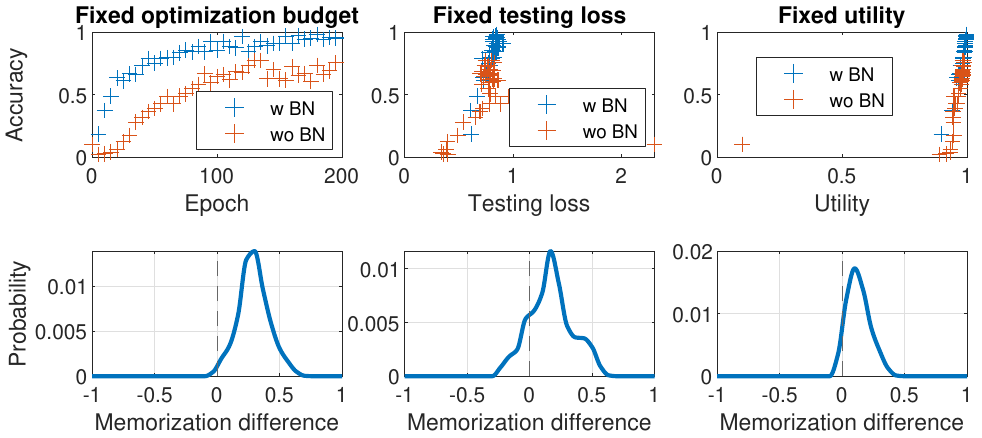}
    \caption{Memorization of models with and without BN across optimization budget, testing loss, and utility (first row). The memorization difference distribution between BN and no-BN models across the same budget range (second row).}
    \label{fig:memorization_control}
\end{figure}

\begin{figure}[t]
    \centering
    \includegraphics[width=1\linewidth]{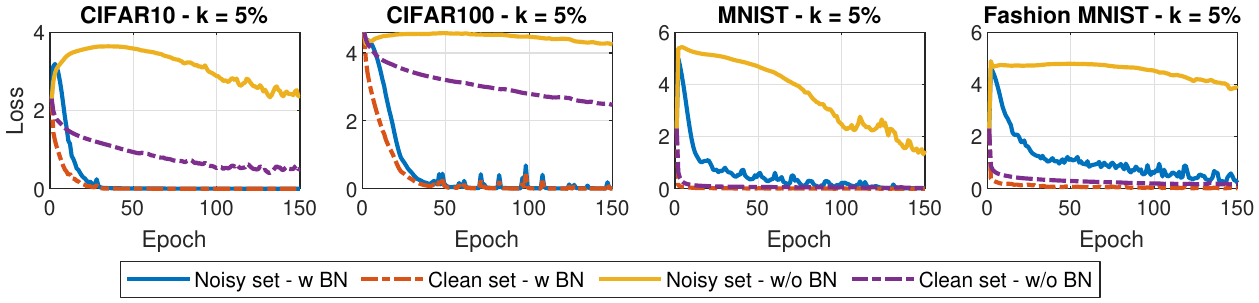}
    \vspace{-0mm}\caption{Loss values over epochs on MNIST, FashionMNIST, CIFAR10, CIFAR100. 
    }
    \label{fig:loss_analyse}
\end{figure}
\noindent\textbf{Memorization amplification under controlled budgets.} Our results on memorization in BN models and no-BN models under matched computational budgets, covering optimization, testing loss, and utility, are illustrated in Figure \ref{fig:memorization_control}. Specifically, we evaluate memorization by measuring the performance on corrupted labels across different checkpoints using two complementary views: (1) memorization of checkpoints across the full range of budgets and (2) memorization-amplified distribution obtained by binning checkpoints by budgets. BN models exhibit consistently higher memorization than no-BN models, and the amplified distribution skews toward the positive values, indicating BN models amplify memorization even when the computational budget is constant.

\noindent\textbf{Convergence speed analysis.} Figure \ref{fig:loss_analyse} visualizes loss trends by epochs of Lenet (with BN versus without BN) on four datasets with the ratio $k=5\%$. This analysis is conducted on both the noisy set and the clean set. Over datasets, the loss trend of the model incorporating BN on the noisy dataset (a solid blue line) dramatically approaches zero much faster than the trend of the model without BN (the solid yellow line). This observation indicates that besides increasing memorization capacity, BN can also accelerate memorizing atypical samples during learning.

\subsection{Leave-one-out influence}
\label{subsec:lood}
\noindent\textbf{Methodology.} We estimate the memorization through the sample influence by the LOO mechanism. Specifically, the influence of a sample $z$ on a model $f$ can be determined by $\text{mem}(z) = \text{dist}\Big(f_{S\sim \mathcal{D}^n, z\in S}(z), f_{S\sim \mathcal{D}^n, z\notin S}(z)\Big)$. Here we choose the Jensen-Shannon divergence as the distance to bound the memorization influence~\cite{hoyos2023representation}. A higher influence value indicates a higher memorization. We conduct experiments with a corruption rate of $10\%$. Instead of retraining once per point in the corrupted set, we train a modest number of models on independent subsets of $\mathcal{D}$. Then, each corrupted-label datapoint $z$ lands in some subsets and out of others; we, therefore, can compute the value of $\text{mem}(z)$. 

\noindent\textbf{Results.} Figure~\ref{fig:lood} shows that BN models are strongly influenced by noisy samples, as their influence-value distribution remains consistently narrow and concentrated in a high-value region. In contrast, the distributions for no-BN models are more spread out and clearly distinct from those of BN models. This concentrated influence pattern has important implications for privacy, as it suggests rare samples are disproportionately memorized in BN models.
\begin{figure}[t]
    \centering
    \includegraphics[width=\linewidth]{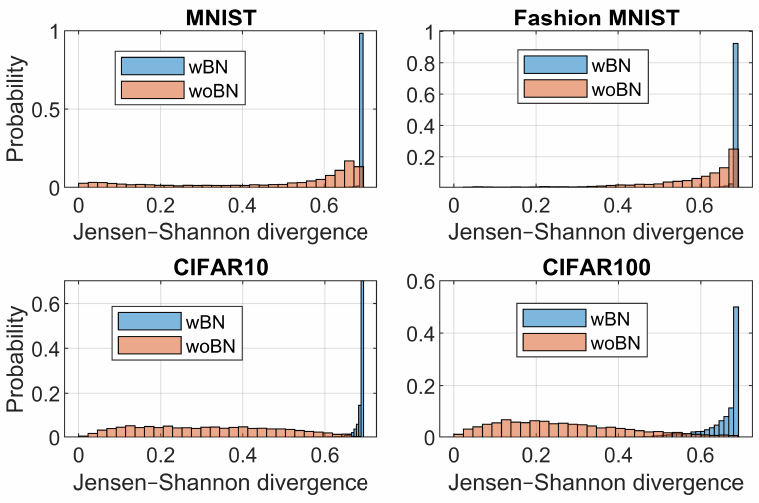}
    \caption{Effect of noisy samples on model memorization estimated by leave-one-out proxy.}\vspace{-5mm}
    \label{fig:lood}
\end{figure}


\subsection{Privacy Implication: Membership Inference}
\label{subsec:mia}
\noindent\textbf{Methodology.} To further investigate the role of BN in privacy, we conduct MIA through Likelihood Ratio Attack (LIRA) \cite{carlini2022membership}, which is one of the state-of-the-art MIAs to quantify models' privacy leakage. Specifically, given an arbitrary dataset $\mathcal{D}$, to mimic the behavior of the pretrained $f^*$,  LIRA introduces a collection of shadow models $f$, which have the same architecture as the target model $f^*$. The difference between models training with and without $x$ is estimated by a ratio: $\Lambda(f;x,y)=p(f; \mathbb{Q}_{in}(x,y))/p(f; \mathbb{Q}_{out}(x,y))$, where $\mathbb{Q}_{in}(x,y)\subset\mathcal{D}$, $(x, y)\in\mathbb{Q}_{in}(x,y)$ and $\mathbb{Q}_{out}(x,y)\subset\mathcal{D}$, $(x, y)\not\in\mathbb{Q}_{out}(x,y)$. Then, hypothesis testing is used on values of $\Lambda$ to make a decision if $(x, y)$ is in the target model's training data. We design an experiment to examine the direct impact of BN on the model's privacy. Specifically, we construct an out-of-distribution dataset of CIFAR10 dataset with the outlier ratio $k=1\%$ and $k=5\%$. 

\begin{figure}[t]
    \centering
    \includegraphics[width=\columnwidth]{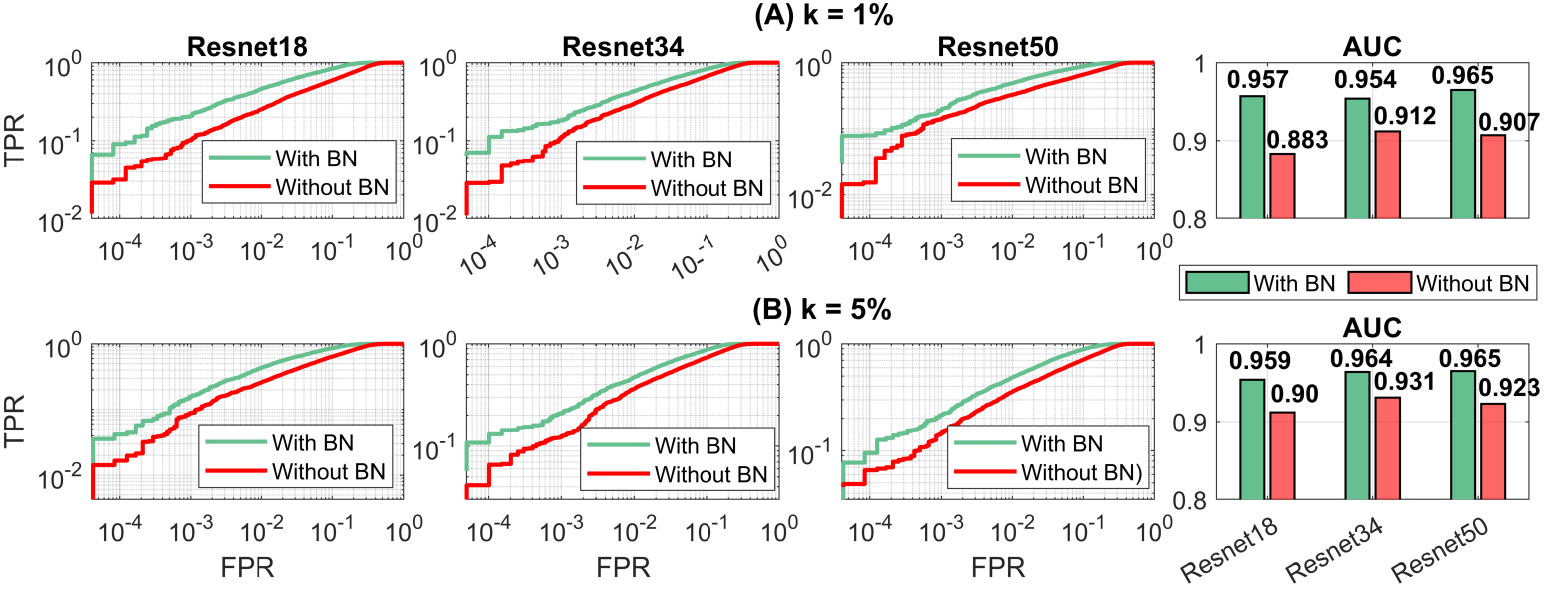}
    \caption{Comparison of attacking performance of LIRA across various DNNs, with and without BN, with a $1\%$-and $5\%$-corrupted label CIFAR10. (Best viewed in color)}
    \label{fig:mia_flip}
\end{figure}

\noindent\textbf{Results.}  We evaluate the privacy of three architectures: Resnet18, Resnet34 and Resnet50. The attack performance is reported in Figure \ref{fig:mia}. The result shows that, on average, the AUC of LIRA in models with BN layers is higher by at least $3\%$ for the $1\%$-atypical mixture and $4\%$ for the $5\%$-atypical mixture, compared to the models without BN. Furthermore, the ROC curves reveal that even at low False Positive Rate (FPR), the attack achieves consistently stronger performance against BN-based models. We further compare the performance of MIA on corrupted label CIFAR10, which is shown in Figure \ref{fig:mia_flip}. The observation shows that the attack performance on models incorporating BN is more vulnerable to MIA, with a margin of around $5\%$, illustrating an increase in privacy leakage. 
\section{Theoretical Analysis}
\label{sec:theory}

Our empirical results consistently show that BN amplifies the memorization of outliers. In this section we develop a theoretical account of \emph{why} this occurs. Unlike prior BN analyses that freeze the batch statistics, we work with the \emph{exact} BN backward pass, in which the gradient is routed through $\mu(w)$ and $\sigma(w)$. We derive the amplification factor exactly, and recover the fixed-statistics result as a large-batch corollary. Detailed proofs are in Section~A.1 of the Supplementary.
\paragraph{Setup.}
Consider a single convolutional channel computing pre-activations on a
mini-batch $\mathcal{B}=\{x_i\}_{i=1}^{B}\subset\mathbb{R}^{d}$,
\begin{equation}
h_i \;=\; w^{\top}x_i + b ,
\label{eq:preact}
\end{equation}
followed by a top linear classifier mapping the (possibly normalized) activation
to a logit $z_i$, with binary label $y_i\in\{+1,-1\}$ and logistic loss
$\ell_i=\log(1+e^{-y_i z_i})$. We compare two models that differ only in whether
BN is applied between the convolution and the classifier:
\begin{align}
\text{\textbf{no-BN:}}\quad
& z_i = a\,h_i + c,
\qquad s_{\mathrm{noBN}} = a,
\label{eq:nobn}\\[2pt]
\text{\textbf{BN:}}\quad
& u_i = \frac{h_i-\mu}{\sigma},
\qquad
\hat h_i = \gamma u_i+\beta,
\notag\\
& z_i = a\hat h_i+c,
\qquad
s_{\mathrm{BN}} = \frac{a\gamma}{\sigma}.
\label{eq:bn}
\end{align}
where $\mu=\tfrac1B\sum_i h_i$ and $\sigma^2=\tfrac1B\sum_i(h_i-\mu)^2$ are the
\emph{batch} statistics (functions of $w$), and $\gamma,\beta$ are BN's learnable
scale and shift. We write $\varphi(v)=1/(1+e^{-v})$ for the logistic sigmoid,
$m_i=y_iz_i$ for the margin, and
\begin{equation}
\rho_i \;:=\; -\,y_i\,\varphi(-m_i)
\label{eq:rho}
\end{equation}
for the per-sample logit residual, so that $\partial\ell_i/\partial z_i=\rho_i$.
Throughout, $L=\sum_i\ell_i$ and $X\in\mathbb{R}^{B\times d}$ has rows $x_i^\top$
with Gram matrix $K=XX^{\top}$.

\begin{figure}[t]
    \centering
    \includegraphics[width=\columnwidth]{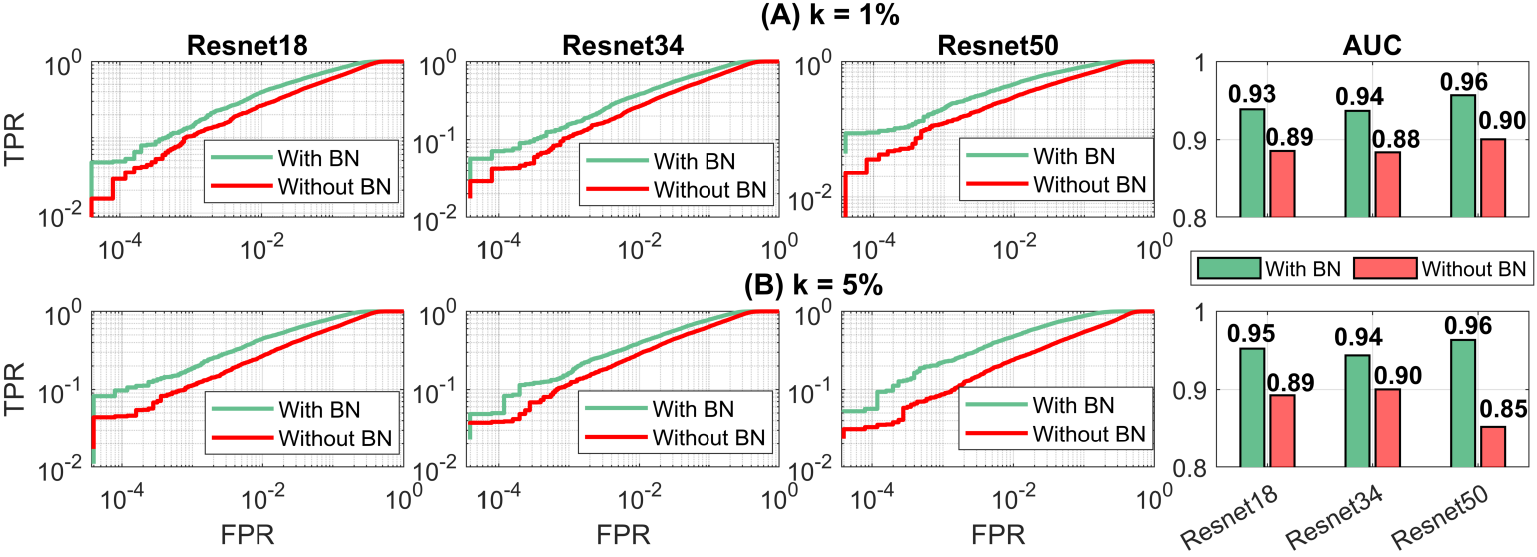}
    \caption{Comparison of attacking performance of LIRA across various DNNs, comparing models with BN versus without BN, with a $1\%$-atypical and $5\%$-atypical CIFAR10. Denote TPR (True Positive Rate), FPR (False Positive Rate) (Best view in color).}
    \label{fig:mia}
\end{figure}

\begin{definition}[Tail sample]
\label{def:tail}
A \emph{tail sample} $x^{\star}\in\mathcal{B}$ is one whose pre-activation lies
far from the batch mean, $h^{\star}=\mu+t\sigma$ with $|t|\gg 1$. We write
$t:=u_{\star}$ for its normalized activation and index it by $\star$.
\end{definition}

\subsection{The exact BN backward pass}
\label{sec:exact-backward}

The entire analysis rests on one structural fact, which we state first because it
governs both the magnitude and the \emph{direction} of the outlier's gradient.

\begin{lemma}[BN backward is an orthogonal projection]
\label{lem:projection}
Let $\mathbf{1}=(1,\dots,1)^{\top}$ and $u=(u_1,\dots,u_B)^{\top}$. Then
$\mathbf{1}^{\top}u=0$ and $\|u\|_2^2=B$, and the exact gradient of $L$ with
respect to the pre-activations is
\begin{equation}
\frac{\partial L}{\partial h}
\;=\;\frac{a\gamma}{\sigma}\,P\,\rho,
\qquad
P \;:=\; I-\tfrac1B\mathbf{1}\mathbf{1}^{\top}-\tfrac1B\,u\,u^{\top},
\label{eq:exact-backward}
\end{equation}
where $P$ is the \emph{orthogonal projector} onto
$\operatorname{span}\{\mathbf{1},u\}^{\perp}$, i.e.\ $P=P^{\top}=P^{2}$ and
$\|P\|_2=1$. Componentwise,
\begin{equation}
\begin{aligned}
\frac{\partial L}{\partial h_j}
&= \frac{a\gamma}{\sigma}
   \Big(\rho_j-\bar\rho-u_j\,\overline{\rho u}\Big),\\
\bar\rho &= \frac{1}{B}\sum_i\rho_i,
\qquad
\overline{\rho u}
= \frac{1}{B}\sum_i\rho_i u_i .
\end{aligned}
\label{eq:exact-backward-components}
\end{equation}
Moreover the same operator $\tfrac1\sigma P$ is the Jacobian
$\partial u/\partial h$, so BN applies $P$ on both the forward perturbation and
the backward pass.
\end{lemma}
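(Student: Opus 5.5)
We compute the Jacobian $\partial u/\partial h$ first and then obtain the backward pass by the chain rule. The projector identity comes out of two orthogonality facts about $\mathbf{1}$ and $u$.

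\emph{Step 1: the properties of $u$.} From $u_i=(h_i-\mu)/\sigma$ we get $\mathbf{1}^\top u=\tfrac1\sigma\sum_i(h_i-\mu)=0$ by the definition of $\mu$. We also get $\|u\|_2^2=\tfrac1{\sigma^2}\sum_i(h_i-\mu)^2=B\sigma^2/\sigma^2=B$ by the definition of $\sigma^2$. Throughout we take $\epsilon=0$, as in the setup, and assume $\sigma>0$.

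\emph{Step 2: the Jacobian.} Differentiating gives $\partial\mu/\partial h_j=1/B$. Next, $\partial\sigma^2/\partial h_j=\tfrac2B(h_j-\mu)$, where the term coming from $\partial\mu/\partial h_j$ vanishes because $\sum_i(h_i-\mu)=0$. Hence $\partial\sigma/\partial h_j=(h_j-\mu)/(B\sigma)=u_j/B$. The quotient rule then gives
\[
\frac{\partial u_i}{\partial h_j}=\frac{1}{\sigma}\Big(\delta_{ij}-\tfrac1B\Big)-\frac{h_i-\mu}{\sigma^2}\cdot\frac{u_j}{B}=\frac1\sigma\Big(\delta_{ij}-\tfrac1B-\tfrac1B u_iu_j\Big).
\]
This is exactly $\tfrac1\sigma P_{ij}$, which proves the Jacobian claim in the final sentence of Lemma~\ref{lem:projection}. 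The forward perturbation statement $\delta u=\tfrac1\sigma P\,\delta h$ is this same identity read as a first-order expansion.

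\emph{Step 3: the backward pass.} Since $z_i=a(\gamma u_i+\beta)+c$, we have $\partial L/\partial u_i=a\gamma\rho_i$. By the chain rule, $\partial L/\partial h=(\partial u/\partial h)^\top(a\gamma\rho)=\tfrac{a\gamma}{\sigma}P^\top\rho$. Since $P$ is visibly symmetric, this equals $\tfrac{a\gamma}{\sigma}P\rho$. Expanding $P\rho=\rho-\tfrac1B\mathbf{1}(\mathbf{1}^\top\rho)-\tfrac1B u(u^\top\rho)$ gives the componentwise formula with $\bar\rho$ and $\overline{\rho u}$.

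\emph{Step 4: $P$ is the orthogonal projector.} By Step 1, the vectors $e_1=\mathbf{1}/\sqrt B$ and $e_2=u/\sqrt B$ are orthonormal. Therefore $P=I-e_1e_1^\top-e_2e_2^\top$ is the orthogonal projector onto $\operatorname{span}\{\mathbf{1},u\}^\perp$. Idempotence $P^2=P$ then follows directly: the cross terms vanish by orthogonality, and $(e_ke_k^\top)^2=e_ke_k^\top$ because each $e_k$ has unit norm. Symmetry is immediate. Whenever $B\ge3$, the range of $P$ is nontrivial, so its eigenvalues are $0$ and $1$, and hence $\|P\|_2=1$.

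The main obstacle is the bookkeeping in Step 2: the $\mu$-dependence inside $\sigma$ must be handled correctly. The key observation is that this contribution drops out because $\sum_i(h_i-\mu)=0$; without it, the resulting matrix would not be symmetric and the projector structure would not appear.
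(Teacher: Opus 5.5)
Your proof is correct and follows the paper's argument essentially step for step: the two normalization constraints, then the Jacobian $\partial u_i/\partial h_j=\tfrac1\sigma P_{ij}$ obtained from $\partial\sigma/\partial h_j=u_j/B$, then the orthonormal pair $\mathbf{1}/\sqrt{B},\,u/\sqrt{B}$ that exhibits $P$ as a projector, and finally the chain rule using $P=P^{\top}$; you only take the last two steps in the opposite order. Your remark that $\|P\|_2=1$ requires $B\ge 3$ is a small refinement the paper leaves implicit: for $B=2$ the span of $\mathbf{1}$ and $u$ is all of the space, so $P=0$.
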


The fixed-statistics model used in prior work retains only the first term,
$\partial L/\partial h_j\approx\frac{a\gamma}{\sigma}\rho_j$. The two discarded
terms are exactly the mean-subtraction and variance-coupling channels, and they
are the ones the reviewer of any fixed-statistics analysis should worry about.
Lemma~\ref{lem:projection} lets us quantify them in closed form.

Because $P$ is an orthogonal projector, its diagonal entries lie in $[0,1]$. The
diagonal entry at the outlier is the quantity that controls everything that
follows.

\begin{lemma}[Tail sample leverage]
\label{lem:leverage}
Define \emph{BN attenuation}
\begin{equation}
\kappa \;:=\; P_{\star\star} \;=\; 1-\frac{1+t^{2}}{B}.
\label{eq:kappa}
\end{equation}
Then the normalization constraints $\sum_i u_i=0$, $\sum_i u_i^2=B$ force
\begin{equation}
t^{2}\;\le\;B-1
\qquad\Longrightarrow\qquad
\kappa\in[0,1),
\label{eq:leverage-bound}
\end{equation}
with $\kappa=0$ attained only in the degenerate single-spike batch
$u=(\pm\sqrt{B-1},\mp\tfrac{1}{\sqrt{B-1}},\dots)$. For fixed $t$,
$\kappa\to1$ as $B\to\infty$.
\end{lemma}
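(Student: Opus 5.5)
The diagonal entry can be read directly off the explicit form of the projector in Lemma~\ref{lem:projection}. Since $P=I-\tfrac1B\mathbf{1}\mathbf{1}^\top-\tfrac1B uu^\top$, we get
\[
P_{\star\star}=1-\tfrac1B\cdot 1-\tfrac1B u_\star^2=1-\tfrac{1+t^2}{B},
\]
which is (\ref{eq:kappa}). Because $P$ is an orthogonal projector, $P_{\star\star}=e_\star^\top P e_\star=\|Pe_\star\|_2^2\in[0,1]$, so $\kappa\ge0$ already gives $t^2\le B-1$. Since the degenerate cases are also needed, I will instead derive the bound directly from the normalization constraints.

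\textbf{The bound $t^2\le B-1$.} The constraints $\sum_i u_i=0$ and $\sum_i u_i^2=B$ are inherited from the definitions of $\mu$ and $\sigma$ in (\ref{eq:bn}) (taking $\epsilon=0$). They give $\sum_{i\ne\star}u_i=-t$ and $\sum_{i\ne\star}u_i^2=B-t^2$. Cauchy--Schwarz on the remaining $B-1$ coordinates gives
\[
t^2=\Big(\sum_{i\ne\star}u_i\Big)^2\le (B-1)\sum_{i\ne\star}u_i^2=(B-1)(B-t^2),
\]
hence $t^2B\le (B-1)B$, i.e.\ $t^2\le B-1$. Substituting into (\ref{eq:kappa}) gives $\kappa\ge 1-B/B=0$. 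For the upper end, $\kappa=1-(1+t^2)/B\le 1-1/B<1$ strictly for any finite $B$.

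\textbf{The equality case.} $\kappa=0$ iff $t^2=B-1$, which is exactly equality in Cauchy--Schwarz. This forces all $u_i$, $i\ne\star$, to be equal, to the common value $-t/(B-1)=\mp 1/\sqrt{B-1}$ when $t=\pm\sqrt{B-1}$. This is the single-spike batch in the statement. As a consistency check, $(B-1)\cdot\tfrac{1}{B-1}+(B-1)=B$, so the variance constraint holds.

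\textbf{The limit.} For fixed $t$, $(1+t^2)/B\to0$ as $B\to\infty$, so $\kappa\to1$.

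\textbf{Main obstacle.} The steps are routine. The only real care point is to make explicit that $\sigma$ is the biased batch standard deviation with $\epsilon$ neglected, so that $\|u\|_2^2=B$ holds exactly. If $\epsilon>0$ were kept, one would get $\|u\|_2^2<B$ and the identity $\kappa=P_{\star\star}$ would need $P$ redefined accordingly. I would state this assumption up front.
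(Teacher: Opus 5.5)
Your proposal is correct and follows essentially the same route as the paper: Cauchy--Schwarz on the $B-1$ non-tail coordinates using $\sum_i u_i=0$ and $\sum_i u_i^2=B$ gives $t^2\le B-1$. Tightness then forces the equal values $u_j=-t/(B-1)$ of the single-spike batch, and the paper likewise uses the projector identity $P_{\star\star}=\|Pe_\star\|_2^2\in[0,1]$ only as a consistency check; your $\epsilon$ caveat corresponds to the paper's footnote on the numerical stabilizer.
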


Lemma~\ref{lem:leverage} shows that the outlier's share of the batch variance is exactly $t^{2}/B$, and BN's own normalization caps how extreme a single sample can be.

\subsection{Per-step margin amplification}

\begin{proposition}[Exact per-step margin growth]
\label{prop:exact-margin}
Fix a mini-batch containing a tail sample $(x^{\star},y^{\star})$ and take one
gradient descent step of size $\eta$ on the filter weights $w$, differentiating
through $\mu(w)$ and $\sigma(w)$. To first order in $\eta$, the outlier's margin
changes by
\begin{equation}
\begin{aligned}
\Delta m_{\star}
&= \eta\,\varphi(-m_{\star})
   \Big(\frac{a\gamma}{\sigma}\Big)^2
   (PKP)_{\star\star} \\
&= \eta\,\varphi(-m_{\star})
   \Big(\frac{a\gamma}{\sigma}\Big)^2
   \big\|X^{\top}p_{\star}\big\|_2^2
   \;\ge\;0.
\end{aligned}
\label{eq:exact-margin}
\end{equation}
where $p_{\star}=P e_{\star}$ has entries
$(p_\star)_\star=\kappa$ and $(p_\star)_j=-(1+u_jt)/B$ for $j\neq\star$, and where
the bulk is assumed well-classified ($\rho_j\approx0$, $j\neq\star$), as holds in
the memorization regime.
\end{proposition}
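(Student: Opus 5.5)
The argument is a first-order chain-rule computation built on Lemma~\ref{lem:projection}. One gradient step on $w$ gives $\Delta w=-\eta\,\nabla_w L$. To first order, $\Delta m_\star = y_\star\,\Delta z_\star = y_\star\,\nabla_w z_\star^{\top}\Delta w$, so we need two ingredients: the exact gradient $\nabla_w L$ and the exact sensitivity $\nabla_w z_\star$. Both must be taken through $\mu(w)$ and $\sigma(w)$. Since $h=Xw+b\mathbf{1}$, we have $\partial h/\partial w = X$. This reduces everything to derivatives with respect to the pre-activation vector $h$.

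\textbf{Backward step.} By Lemma~\ref{lem:projection}, $\partial L/\partial h = \frac{a\gamma}{\sigma}P\rho$, so
\[
\nabla_w L = X^{\top}\frac{\partial L}{\partial h} = \frac{a\gamma}{\sigma}X^{\top}P\rho.
\]
Under the memorization-regime assumption $\rho_j\approx 0$ for $j\neq\star$, this collapses to $\rho\approx\rho_\star e_\star$, with $\rho_\star=-y_\star\varphi(-m_\star)$. Hence
\[
\Delta w \approx \eta\, y_\star\varphi(-m_\star)\frac{a\gamma}{\sigma}X^{\top}Pe_\star .
\]

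\textbf{Forward step.} Here we use the second half of Lemma~\ref{lem:projection}: $\partial u/\partial h=\frac1\sigma P$. Since $z_\star=a\gamma u_\star+a\beta+c$, this gives $\partial z_\star/\partial h = \frac{a\gamma}{\sigma}P^{\top}e_\star = \frac{a\gamma}{\sigma}Pe_\star$, and therefore $\nabla_w z_\star = \frac{a\gamma}{\sigma}X^{\top}Pe_\star$. Combining with the backward step and using $y_\star^2=1$,
\[
\Delta m_\star = \eta\,\varphi(-m_\star)\Big(\frac{a\gamma}{\sigma}\Big)^2 (X^{\top}p_\star)^{\top}(X^{\top}p_\star).
\]
This is $\|X^{\top}p_\star\|_2^2 = e_\star^{\top}PKPe_\star=(PKP)_{\star\star}$, using $P=P^{\top}$. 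Nonnegativity is immediate: it is a squared norm times positive scalars.

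\textbf{Entries of $p_\star$.} These are read off from the definition of $P$: $(Pe_\star)_j=\delta_{j\star}-\frac1B-\frac1B u_ju_\star$. With $u_\star=t$, this gives $(p_\star)_\star = 1-\frac{1+t^2}{B}=\kappa$, matching Lemma~\ref{lem:leverage}, and $(p_\star)_j=-(1+u_jt)/B$ for $j\neq\star$.

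\textbf{Main obstacle.} The step needing most care is the forward Jacobian. One has to verify that $\partial u/\partial h$ really equals $\frac1\sigma P$, which is asserted in Lemma~\ref{lem:projection}, so I would invoke it directly. If a check is needed, it follows from $\partial\mu/\partial h_j=1/B$ and $\partial\sigma/\partial h_j=u_j/B$. A second point is to state clearly that the bias $b$ is either held fixed or drops out: its gradient direction is $\mathbf{1}$, which $P$ annihilates. Finally, the approximation $\rho_j\approx0$ should be flagged as the only non-exact step, with the neglected cross terms being $O(\max_{j\ne\star}|\rho_j|)$.
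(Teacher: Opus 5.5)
Your proposal is correct and follows essentially the same route as the paper's proof. The paper uses the exact backward $\nabla_w L=\frac{a\gamma}{\sigma}X^{\top}P\rho$, propagates $\delta h=X\Delta w$ forward through the same Jacobian $\frac{1}{\sigma}P$, sets $\rho=\rho_\star e_\star$, and reads off $(PKP)_{\star\star}=\|X^{\top}p_\star\|_2^2$ and the entries of $p_\star$ from $P_{ij}$. Your form $y_\star\nabla_w z_\star^{\top}\Delta w$ is the same first-order computation written as an inner product, and your remark that the bias drops out because $P\mathbf{1}=0$ is a harmless extra.
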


Two consequences deserve emphasis. First, $\Delta m_{\star}$ is a squared norm
and is therefore \emph{nonnegative for every batch composition}: routing the
gradient through $\mu$ and $\sigma$ cannot reverse the direction of margin
growth on the outlier. Second, the $(a\gamma/\sigma)^{2}$ prefactor survives the
exact treatment untouched; the entire effect of the previously neglected terms is absorbed into the geometric factor $(PKP)_{\star\star}$.

\begin{corollary}[Exact BN amplification factor]
\label{cor:amplification}
Under identical initialization and step size, the ratio of per-step margin growth between the BN and no-BN models on the same tail sample is
\begin{equation}
\mathcal{A}
\;=\;\frac{\Delta m_{\mathrm{BN}}}{\Delta m_{\mathrm{noBN}}}
\;=\;\Big(\frac{\gamma}{\sigma}\Big)^{2}\kappa_{\mathrm{eff}}^{2},
\qquad
\kappa_{\mathrm{eff}}^{2}
:=\frac{\|X^{\top}p_{\star}\|_2^{2}}{\|x^{\star}\|_2^{2}} .
\label{eq:amplification}
\end{equation}
If the tail sample is approximately orthogonal to the bulk ($\langle x_j,x^{\star}\rangle\approx0$ for $j\neq\star$, the generic situation for an outlier), then $\kappa_{\mathrm{eff}}^{2}\ge\kappa^{2}$, and if in addition the bulk contribution is negligible, $\kappa_{\mathrm{eff}}^{2}\to\kappa^{2}=\big(1-\tfrac{1+t^2}{B}\big)^{2}$.
\end{corollary}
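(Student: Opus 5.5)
We prove Corollary~\ref{cor:amplification} by computing $\Delta m_{\mathrm{noBN}}$ in the same first-order way as Proposition~\ref{prop:exact-margin}, dividing, and then bounding $\|X^{\top}p_\star\|_2^2$ under the orthogonality assumption.

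\textbf{No-BN margin growth.} Here $z_i=a(w^\top x_i+b)+c$, so $\partial L/\partial h=a\rho$ with no projection, and $\nabla_w L=aX^\top\rho$. One step $w\mapsto w-\eta\nabla_w L$ changes $h_\star$ by $-\eta a\,x^{\star\top}X^\top\rho$. Taking $\rho_j\approx0$ for $j\neq\star$, as in the proposition, gives $\Delta h_\star=-\eta a\rho_\star\|x^\star\|_2^2$. Hence
\[
\Delta m_\star=y_\star a\,\Delta h_\star=\eta\,\varphi(-m_\star)\,a^2\|x^\star\|_2^2,
\]
using $y_\star\rho_\star=-\varphi(-m_\star)$ and $y_\star^2=1$. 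Under identical initialization, $a$ and $\varphi(-m_\star)$ coincide for the two models at the step considered, so dividing Eq.~\eqref{eq:exact-margin} by this expression gives
\[
\mathcal A=\Big(\frac{\gamma}{\sigma}\Big)^2\frac{\|X^\top p_\star\|_2^2}{\|x^\star\|_2^2}.
\]
This is exactly the definition of $\kappa_{\mathrm{eff}}^2$. This step requires interpreting ``identical initialization'' as equal current margins $m_\star$ and equal top weight $a$. I would state that interpretation explicitly.

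\textbf{Lower bound.} Write $X^\top p_\star=\kappa x^\star+v$, where $v=\sum_{j\neq\star}(p_\star)_jx_j$ and $(p_\star)_j=-(1+u_jt)/B$ from Proposition~\ref{prop:exact-margin}. Then
\[
\|X^\top p_\star\|^2=\kappa^2\|x^\star\|^2+2\kappa\langle x^\star,v\rangle+\|v\|^2.
\]
Orthogonality $\langle x_j,x^\star\rangle\approx0$ for $j\neq\star$ kills the cross term. Since $\|v\|^2\ge0$, this gives $\kappa_{\mathrm{eff}}^2\ge\kappa^2$, with $\kappa$ taken from Lemma~\ref{lem:leverage}. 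If additionally $\|v\|^2\ll\kappa^2\|x^\star\|^2$ (negligible bulk contribution), then $\kappa_{\mathrm{eff}}^2\to\kappa^2=(1-\tfrac{1+t^2}{B})^2$.

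\textbf{Main obstacle.} The delicate point is that ``$\approx0$'' orthogonality leaves a small cross term $2\kappa\langle x^\star,v\rangle$ of indefinite sign. I would control it with Cauchy--Schwarz, $|\langle x^\star,v\rangle|\le\sum_j|(p_\star)_j||\langle x_j,x^\star\rangle|$, which is a first-order correction. Thus the inequality holds exactly under exact orthogonality and up to $O(\max_j|\langle x_j,x^\star\rangle|)$ otherwise.

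A secondary subtlety is the no-BN bias. Its gradient affects $h_\star$ only through $\Delta b$, giving an additive $\eta a^2\varphi(-m_\star)$. The BN model's $b$ receives zero gradient because $P\mathbf1=0$. The cleanest route is to treat only the filter weights $w$ as updated, matching the proposition's statement, or to absorb $b$ into $x$ via an augmented constant coordinate.
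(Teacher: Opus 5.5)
Your proposal is correct and follows the paper's proof essentially line for line. You compute $\Delta m_{\mathrm{noBN}}=\eta\,\varphi(-m_\star)\,a^2\|x^\star\|_2^2$, divide it into the expression from Proposition~\ref{prop:exact-margin}, and expand $\|X^\top p_\star\|_2^2$ into $\kappa^2\|x^\star\|_2^2$, a cross term that orthogonality removes, and a nonnegative bulk term; your extra caveats (matched initialization read as equal current $m_\star$ and $a$, the first-order error under only approximate orthogonality, and updating only $w$ so the no-BN bias gradient does not contribute) are sound refinements the paper leaves implicit.
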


\begin{remark}[Fixed-statistics model as the large-batch limit]
\label{cor:fixed-stats}
For fixed $t$ and $\|x^{\star}\|$, $\kappa\to1$ and
$\kappa_{\mathrm{eff}}\to1$ as $B\to\infty$, so \eqref{eq:exact-margin} reduces to
\begin{equation}
\frac{\Delta m_{\mathrm{BN}}}{\Delta m_{\mathrm{noBN}}}
=\frac{s_{\mathrm{BN}}^{2}}{s_{\mathrm{noBN}}^{2}}
=\Big(\frac{\gamma}{\sigma}\Big)^{2},
\label{eq:fixed-stats}
\end{equation}
\end{remark}



\subsection{Gradient direction}
\label{sec:direction}

Amplified gradient \emph{norms} alone do not establish memorization: a uniformly
rescaled step along the shared bulk direction would raise norms without making
the model any more outlier-specific. 

\begin{figure}[t]
    \centering
    \includegraphics[width=1.0\columnwidth]{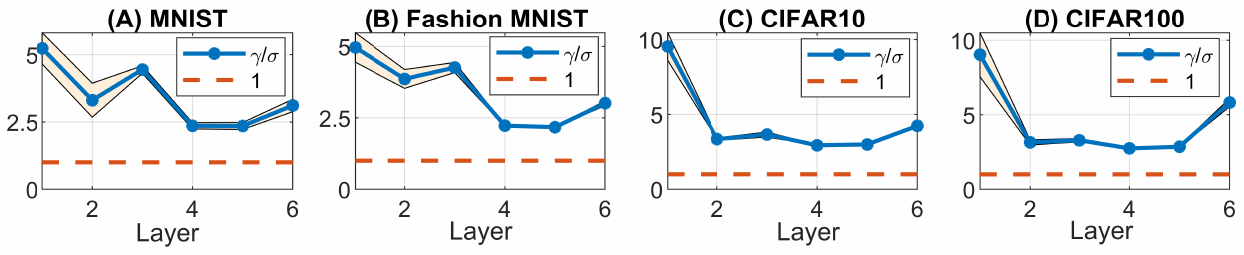}
    \caption{Ratio $\gamma/\sigma$ of BN layers on Lenet architecture with various datasets and $k=10\%$. }
    \label{fig:parameter_ratio1}
\end{figure}

\begin{proposition}[Direction reshaping]
\label{prop:direction}
In the memorization regime the exact batch gradient is
\begin{equation}
\begin{aligned}
\nabla_w L_{\mathrm{BN}}
&= \frac{a\gamma\rho_{\star}}{\sigma}
\Bigg[
\underbrace{\kappa\,x^{\star}}_{\text{outlier-specific}}
-\underbrace{\frac{1}{B}\sum_{j\neq\star}
(1+u_jt)\,x_j}_{\text{bulk back-reaction}}
\Bigg],\\[2pt]
\nabla_w L_{\mathrm{noBN}}
&= a\rho_{\star}\,x^{\star}.
\end{aligned}
\label{eq:direction}
\end{equation}
The no-BN gradient is a scalar multiple of $x^{\star}$; the BN gradient is not.
The cosine between them is
$\cos\theta=\kappa\|x^{\star}\|_2/\|X^{\top}p_{\star}\|_2\le1$, with equality
iff the bulk back-reaction vanishes.
\end{proposition}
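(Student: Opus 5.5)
The proof combines Lemma~\ref{lem:projection} with the chain rule $\nabla_w L=X^{\top}\,\partial L/\partial h$, which holds because $h=Xw+b\mathbf 1$. In the BN model, Lemma~\ref{lem:projection} gives
\[
\nabla_w L_{\mathrm{BN}}=\frac{a\gamma}{\sigma}X^{\top}P\rho .
\]
The memorization-regime assumption of Proposition~\ref{prop:exact-margin} ($\rho_j\approx0$ for $j\neq\star$) reduces $\rho$ to $\rho_\star e_\star$. Hence
\[
\nabla_w L_{\mathrm{BN}}=\frac{a\gamma\rho_\star}{\sigma}X^{\top}p_\star,\qquad p_\star=Pe_\star .
\]
Next I compute the entries of $p_\star$ from $P=I-\tfrac1B\mathbf 1\mathbf 1^{\top}-\tfrac1B uu^{\top}$:
\begin{itemize}
\item the $\star$ entry is $1-\tfrac1B-\tfrac{t^2}{B}=\kappa$, as in Lemma~\ref{lem:leverage};
\item the $j\neq\star$ entries are $-\tfrac1B(1+u_jt)$.
\end{itemize}
Expanding $X^{\top}p_\star=\sum_i (p_\star)_i x_i$ then gives exactly the bracket $\kappa x^\star-\tfrac1B\sum_{j\neq\star}(1+u_jt)x_j$.

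For the no-BN model, $\partial L/\partial h_j=a\rho_j$, with no batch coupling. Under the same regime, $\nabla_w L_{\mathrm{noBN}}=a\rho_\star x^\star$. So the no-BN gradient is trivially a multiple of $x^\star$.

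For the BN claim I interpret ``not a multiple'' generically: the BN gradient is parallel to $x^\star$ iff the bulk back-reaction $r:=\tfrac1B\sum_{j\neq\star}(1+u_jt)x_j$ is itself parallel to $x^\star$. This fails whenever the bulk inputs span directions outside $x^\star$ and the coefficients $1+u_jt$ do not all vanish. I will state it this way rather than claim it unconditionally.

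The cosine follows by cancelling the common nonzero scalars; if $\gamma/\sigma>0$ the signs agree, and otherwise one reads $|\cos\theta|$. Write $g=X^{\top}p_\star$. Then
\[
\cos\theta=\frac{\langle g,x^\star\rangle}{\|g\|\,\|x^\star\|} .
\]
The key identity is $\langle g,x^\star\rangle=e_\star^{\top}KPe_\star=(KP)_{\star\star}$. The cleanest route is the approximate orthogonality $\langle x_j,x^\star\rangle\approx0$ from Corollary~\ref{cor:amplification}. Under it, $\langle g,x^\star\rangle=\kappa\|x^\star\|^2$, which gives
\[
\cos\theta=\frac{\kappa\|x^\star\|}{\|X^{\top}p_\star\|} .
\]
The same orthogonality makes the decomposition $g=\kappa x^\star-r$ orthogonal, so $\|g\|^2=\kappa^2\|x^\star\|^2+\|r\|^2\ge\kappa^2\|x^\star\|^2$. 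This yields $\cos\theta\le1$, with equality iff $r=0$, i.e.\ iff the bulk back-reaction vanishes.

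The main obstacle is this cosine step. Without orthogonality, the inner product picks up cross terms $-\tfrac1B\sum_{j\neq\star}(1+u_jt)\langle x_j,x^\star\rangle$, and the stated closed form no longer holds. I will make the orthogonality assumption explicit, as in Corollary~\ref{cor:amplification}. In the general case, Cauchy--Schwarz still gives $\cos\theta\le1$, with equality iff $g\parallel x^\star$.
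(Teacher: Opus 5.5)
Your proposal is correct and follows the paper's proof step for step: you substitute $\rho=\rho_\star e_\star$ into $\frac{a\gamma}{\sigma}X^{\top}P\rho$, read off the entries of $p_\star$, and evaluate the cosine under bulk orthogonality, and your decomposition $\|X^{\top}p_\star\|_2^2=\kappa^2\|x^\star\|_2^2+\|r\|_2^2$ makes the equality case rigorous where the paper jumps from $X^{\top}p_\star\parallel x^\star$ straight to vanishing back-reaction. The one loose step is your sufficient condition for the BN gradient not being a multiple of $x^\star$, because the bulk inputs $x_j$ can be linearly dependent and cancel inside $r$; a clean replacement is scale invariance, $w^{\top}X^{\top}P=\big((\mu-b)\mathbf 1+\sigma u\big)^{\top}P=0$, so the BN gradient is always orthogonal to $w$ and can be parallel to $x^\star$ only by vanishing, whenever $w^{\top}x^\star=h_\star-b\neq0$.
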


The bulk back-reaction term is nonzero even though $\rho_j\approx0$ for
$j\neq\star$: BN routes the outlier's error signal onto \emph{other} samples'
input directions through $\mu$ and $\sigma$. Therefore, BN actively \emph{decorrelates} the outlier's update from the bulk-mean gradient direction, which is the geometric signature of memorization rather than of feature learning. 

\subsection{Why $\gamma/\sigma>1$ emerges: a differential-population argument}
\label{sec:gamma-sigma}

Corollary~\ref{cor:amplification} makes $\gamma/\sigma$ the sole model-dependent
driver of amplification. Treating $\gamma/\sigma>1$ as a bare empirical
regularity would leave the mechanism incomplete. We now argue that the
inequality is a \emph{consequence} of the same tail dynamics the amplification
describes, because $\gamma$ and $\sigma$ are driven by disjoint populations of
samples.

\begin{proposition}[$\gamma$ is tail-driven; the gradient is exact]
\label{prop:gamma-exact}
Since $u_i$ does not depend on $\gamma$, the gradient
\begin{equation}
\frac{\partial L}{\partial\gamma}\;=\;a\sum_{i}\rho_i u_i\;=\;aB\,\overline{\rho u}
\label{eq:gamma-grad}
\end{equation}
is \emph{exact}. The tail sample
contributes $|\partial\ell_{\star}/\partial\gamma|=\varphi(-m_{\star})|a||t|
=\Theta(|t|)$, whereas a well-classified typical sample with $u_i\approx0$ or
$\varphi(-m_i)\approx0$ contributes $\approx0$.
\end{proposition}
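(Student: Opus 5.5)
The proof is a direct chain-rule computation in the BN model \eqref{eq:bn}. The key structural observation is that $u_i=(h_i-\mu)/\sigma$ depends on $w$ and $b$ only, through $\mu(w)$ and $\sigma(w)$, and not on $\gamma$. So differentiating with respect to $\gamma$ never passes through the batch statistics, and the coupling terms that produce the projector $P$ in Lemma~\ref{lem:projection} are absent. I would first record this independence explicitly, noting that $\mu$ and $\sigma$ are computed from $h_i=w^\top x_i+b$ alone.

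The gradient then follows from the chain $\ell_i \to z_i \to \hat h_i \to \gamma$, using three facts:
\begin{itemize}
\item $\partial \ell_i/\partial z_i=\rho_i$, by the definition \eqref{eq:rho};
\item $\partial z_i/\partial \hat h_i=a$;
\item $\partial \hat h_i/\partial\gamma=u_i$, with no further dependence.
\end{itemize}
Summing over the batch gives $\partial L/\partial\gamma=a\sum_i\rho_i u_i$. Rewriting with the batch average $\overline{\rho u}$ from \eqref{eq:exact-backward-components} gives $aB\,\overline{\rho u}$. Because no approximation such as frozen statistics was used, this is exactly the claimed identity \eqref{eq:gamma-grad}.

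For the per-sample contributions, I would substitute $\rho_\star=-y_\star\varphi(-m_\star)$ and $u_\star=t$ from Definition~\ref{def:tail}. Using $|y_\star|=1$, this gives $|\partial\ell_\star/\partial\gamma|=|a|\,\varphi(-m_\star)\,|t|$. The $\Theta(|t|)$ claim needs $\varphi(-m_\star)$ bounded away from $0$. That holds in the regime considered, where the tail sample is not yet fit, so $m_\star\le 0$ and hence $\varphi(-m_\star)\ge 1/2$; I would state this assumption explicitly and treat $a$ as fixed.

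For a typical sample, the contribution $|a|\varphi(-m_i)|u_i|$ is a product, so it is small when either factor is. If $|u_i|\approx0$, the bound $\varphi\le 1$ controls it. If the sample is well classified, $\varphi(-m_i)\approx0$, and I would bound $|u_i|\le\sqrt{B-1}$ using the normalization constraint from Lemma~\ref{lem:leverage}.

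The only delicate point is making the asymptotic claims precise. The exactness part is routine. The risk is that "$\approx0$" for the bulk must not be swamped when multiplied by $|u_i|$. The bound $|u_i|\le\sqrt{B-1}$ handles this, giving a contribution of $O(\sqrt{B}\,\varphi(-m_i))$, which is negligible whenever the margins $m_i$ are large. That is precisely the memorization regime assumed in Proposition~\ref{prop:exact-margin}.
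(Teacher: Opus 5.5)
Your proposal is correct and matches the paper's proof essentially step for step: you note that $u_i$ does not depend on $\gamma$, apply the chain rule through $z_i$, and substitute $\rho_\star=-y_\star\varphi(-m_\star)$ and $u_\star=t$ for the tail contribution. Your explicit bounds are harmless sharpenings of the paper's informal conditions: $\varphi(-m_\star)\ge 1/2$ when $m_\star\le 0$ makes ``bounded away from $0$'' concrete, and $|u_i|\le\sqrt{B-1}$ (valid for every index by the argument of Lemma~\ref{lem:leverage}) makes the bulk's ``$\approx0$'' precise.
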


\begin{proposition}[$\sigma$ is bulk-driven]
\label{prop:sigma-bulk}
The tail sample's share of the batch variance is exactly
\begin{equation}
\frac{(h_{\star}-\mu)^{2}}{\sum_i (h_i-\mu)^{2}}\;=\;\frac{t^{2}}{B}\;\le\;\frac{B-1}{B},
\label{eq:variance-share}
\end{equation}
so for $t^{2}\ll B$ we have $\sigma=\sigma_{\mathrm{bulk}}(1+O(t^{2}/B))$: the
denominator of the amplification ratio is set by the typical-sample spread and
is insensitive to the tail. At $B=256$, a $5\sigma$ outlier accounts for $10\%$
of the variance and a $3\sigma$ outlier for $3.5\%$.
\end{proposition}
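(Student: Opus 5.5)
The plan is to prove the variance-share identity directly from the definition of the normalized activations, then read off the bound from Lemma~\ref{lem:leverage} and the perturbative statement from a first-order expansion.

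\textbf{Step 1: the exact share.} By definition $u_i=(h_i-\mu)/\sigma$, and by Lemma~\ref{lem:projection} $\sum_i u_i^2=B$. Equivalently, $\sum_i(h_i-\mu)^2=B\sigma^2$, which is just the definition of the batch variance. Since $h_\star-\mu=t\sigma$ by Definition~\ref{def:tail}, the ratio in \eqref{eq:variance-share} equals $t^2\sigma^2/(B\sigma^2)=t^2/B$ exactly. The upper bound $t^2/B\le (B-1)/B$ is then Lemma~\ref{lem:leverage}'s constraint $t^2\le B-1$. For completeness I would recall why that constraint holds. From $\sum_{j\neq\star}u_j=-t$, Cauchy--Schwarz gives $\sum_{j\ne\star}u_j^2\ge t^2/(B-1)$. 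Substituting into $B=t^2+\sum_{j\neq\star}u_j^2$ yields $B\ge t^2 B/(B-1)$, i.e.\ $t^2\le B-1$.

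\textbf{Step 2: relating $\sigma$ to the bulk.} Define the bulk mean $\mu_{\mathrm{bulk}}$ and variance $\sigma_{\mathrm{bulk}}^2$ over the $B-1$ samples $j\neq\star$. I would apply the standard decomposition of variance when one point is removed. Write $\mu=\mu_{\mathrm{bulk}}+(h_\star-\mu_{\mathrm{bulk}})/B$, so that $h_\star-\mu_{\mathrm{bulk}}=\tfrac{B}{B-1}t\sigma$. Then
\[
B\sigma^2=(B-1)\sigma_{\mathrm{bulk}}^2+\tfrac{B-1}{B}\,(h_\star-\mu_{\mathrm{bulk}})^2=(B-1)\sigma_{\mathrm{bulk}}^2+\tfrac{B}{B-1}\,t^2\sigma^2 .
\]
Solving gives the exact relation
\[
\sigma^2\Big(1-\tfrac{t^2}{B-1}\Big)=\tfrac{B-1}{B}\,\sigma_{\mathrm{bulk}}^2 .
\]
Expanding for $t^2\ll B$ gives $\sigma^2=\sigma_{\mathrm{bulk}}^2\big(1+O(t^2/B)+O(1/B)\big)$. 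The $O(1/B)$ term comes from the $(B-1)/B$ normalization convention and is dominated by $t^2/B$ whenever $|t|\gtrsim1$, which holds for a tail sample. Taking square roots yields $\sigma=\sigma_{\mathrm{bulk}}(1+O(t^2/B))$.

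\textbf{Step 3: numerics and the main obstacle.} The numerical claims follow by substitution: $25/256\approx 9.8\%\approx10\%$ and $9/256\approx3.5\%$. The only delicate point is Step 2, where one must be careful about whether $t$ is measured against the full-batch or the bulk statistics. I would handle this by keeping the exact relation above before expanding. This shows the correction is a genuine relative $O(t^2/B)$ factor and is not hiding a dependence on $\sigma_{\mathrm{bulk}}$ itself.
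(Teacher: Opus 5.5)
Your proof is correct. Step~1 (the share $t^2/B$ from $\sum_i(h_i-\mu)^2=B\sigma^2$, with the bound from the Cauchy--Schwarz constraint $t^2\le B-1$ of Lemma~\ref{lem:leverage}) and the numerics match the paper's argument exactly.

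Step~2 is where you take a different route. The paper defines $\sigma_{\mathrm{bulk}}^2:=\tfrac{1}{B-1}\sum_{j\neq\star}(h_j-\mu)^2$, i.e.\ the bulk spread about the \emph{full-batch} mean. Splitting the sum, $B\sigma^2=t^2\sigma^2+(B-1)\sigma_{\mathrm{bulk}}^2$, then gives $\sigma^2=\frac{B-1}{B-t^2}\,\sigma_{\mathrm{bulk}}^2$ in one line. You center the bulk at its own mean $\mu_{\mathrm{bulk}}$ instead. That costs you the mean-shift decomposition and gives a different exact constant, $\sigma^2=\frac{(B-1)^2}{B(B-1-t^2)}\,\sigma_{\mathrm{bulk}}^2$.

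What you gain is that your $\sigma_{\mathrm{bulk}}$ depends only on the $B-1$ typical samples, whereas the paper's still depends on $h_\star$ through $\mu$. For example, in the single-spike batch yours correctly vanishes, while the paper's equals $\sigma/\sqrt{B-1}$. So your version backs the ``set by the typical-sample spread'' reading more literally.

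The two definitions differ by $(\mu-\mu_{\mathrm{bulk}})^2=t^2\sigma^2/(B-1)^2$, a relative $O(t^2/B^2)$ correction. The conclusion $\sigma=\sigma_{\mathrm{bulk}}(1+O(t^2/B))$ is therefore the same either way.

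Your caveat that the $O(1/B)$ term is absorbed only because $|t|$ is at least of order one applies equally to the paper's formula, since $\frac{B-1}{B-t^2}-1=\frac{t^2-1}{B-t^2}$. The paper leaves this implicit in the tail assumption $|t|\gg1$.
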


\noindent\textbf{The asymmetry.}
Propositions~\ref{prop:gamma-exact} and \ref{prop:sigma-bulk} identify a
population asymmetry: \emph{$\gamma$ tracks the tail while $\sigma$ tracks the
bulk}. Once atypical samples are present and not yet fit, they supply a
persistent $\Theta(|t|)$ drive on $\gamma$ that typical samples do not, while
contributing only $O(t^2/B)$ to $\sigma$, which additionally stabilizes as the
bulk features converge. The ratio $\gamma/\sigma$ therefore rises during
training, and does so \emph{in proportion to the memorization load}. This makes
$\gamma/\sigma>1$ a signature of the mechanism rather than an independent
assumption: absent a memorizable tail, $\overline{\rho u}\approx0$, $\gamma$ is
not driven upward, and the ratio remains near its initialization
$\gamma_0/\sigma_0$.

\noindent\textbf{Notation.} From here on $n$ denotes the training-step index
and $t$ remains the tail deviation of Definition~\ref{def:tail}.

\begin{theorem}[Self-reinforcing memorization loop]
\label{thm:loop}
Consider the coupled dynamics of $m_{n}=y^{\star}z_{n}$ and
$\gamma_{n}$ under gradient descent with step size $\eta$ on a batch
containing one tail sample with $|t|\gg1$ and $t^{2}<B-1$, with $\sigma$ and
$\kappa_{\mathrm{eff}}$ stationary as in (ii). Write
\begin{equation}
\alpha:=\frac{\eta\,a^{2}\kappa_{\mathrm{eff}}^{2}\|x^{\star}\|_2^{2}}{\sigma^{2}},
\qquad
\beta:=\eta\,|a|\,|t|,
\label{eq:alphabeta}
\end{equation}
so that Propositions~\ref{prop:exact-margin} and~\ref{prop:gamma-exact} give
$m_{n+1}=m_n+\alpha\gamma_n^{2}\varphi(-m_n)$ and
$\gamma_{n+1}=\gamma_n+\beta\,\varphi(-m_n)$. Then:
\begin{enumerate}\itemsep2pt
\item[(i)] $\Delta\gamma_{n}=\eta\,\varphi(-m_{n})\,|a|\,|t|>0$
whenever $y^{\star}$ and $a$ are aligned, and this expression is exact
(Proposition~\ref{prop:gamma-exact});
\item[(ii)] $\sigma$ is stationary to $O(t^{2}/B)$ under the tail's
contribution (Proposition~\ref{prop:sigma-bulk}), so
$\gamma_{n}\!\uparrow$ implies
$(\gamma_{n}/\sigma)^{2}\!\uparrow$, which by
Corollary~\ref{cor:amplification} increases $\Delta m$ at the next step, since
$\kappa_{\mathrm{eff}}$ does not depend on $\gamma$:
\begin{equation}
\text{large }|t|\;\xrightarrow{\text{(i)}}\;\gamma\uparrow
\;\xrightarrow{\text{Cor.~\ref{cor:amplification}}}\;(\gamma/\sigma)^{2}\uparrow
\;\xrightarrow{\text{Prop.~\ref{prop:exact-margin}}}\;\Delta m\uparrow;
\label{eq:loop}
\end{equation}
\item[(iii)] 
Both
$\Delta\gamma_n\to0$ and $\Delta m_n\to0$, yet $\gamma_n$ is strictly increasing and \textit{unbounded}: the trajectory obeys the exact invariant
\begin{equation}
\tfrac{\alpha}{3}\gamma_n^{3}-\beta m_n=\mathrm{const},
\;\;\text{i.e.}\;\;
\gamma_n=\Big(\gamma_0^{3}+\tfrac{3\beta}{\alpha}(m_n-m_0)\Big)^{1/3},
\label{eq:invariant}
\end{equation}
and since $m_n=\log n+O(\log\log n)$,
\begin{equation}
\gamma_n=\Theta\big((\log n)^{1/3}\big),
\quad
\frac{3\beta}{\alpha}=\frac{3\,|t|\,\sigma^{2}}
{|a|\,\kappa_{\mathrm{eff}}^{2}\,\|x^{\star}\|_2^{2}} .
\label{eq:gamma-rate}
\end{equation}
There is thus no finite steady state $\gamma_{\infty}$. There is, however, a
finite $N$ past which $\gamma_n/\sigma>1$ for all $n\ge N$, and $N=0$ whenever
$\sigma\le\gamma_0$.
\end{enumerate}
\end{theorem}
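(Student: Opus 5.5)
Parts (i) and (ii) follow directly from earlier results, so I would dispose of them first and put the effort into (iii).

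\emph{Part (i).} Specialize Proposition~\ref{prop:gamma-exact} to the memorization regime, where $\rho_j\approx0$ for $j\neq\star$. Then $\partial L/\partial\gamma=a\rho_\star t=-a y^\star t\,\varphi(-m_\star)$. A descent step gives $\Delta\gamma=\eta a y^\star t\,\varphi(-m_\star)$. When the signs of $a y^\star$ and $t$ agree (the alignment hypothesis), this equals $\eta|a||t|\varphi(-m_\star)>0$. Exactness is inherited from the fact that $u$ does not depend on $\gamma$.

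\emph{Part (ii).} Proposition~\ref{prop:sigma-bulk} makes $\sigma$ constant up to $O(t^2/B)$. The definition of $\kappa_{\mathrm{eff}}$ in Corollary~\ref{cor:amplification} involves only $X$ and $P$, and these depend on $w$ but not on $\gamma$. Substituting into Proposition~\ref{prop:exact-margin} gives $\Delta m=\alpha\gamma^2\varphi(-m)$, which is increasing in $\gamma$. That yields the chain \eqref{eq:loop} and confirms the stated recursions.

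\emph{Part (iii).} Both increments carry the same factor $\varphi(-m_n)$. Eliminating it gives $\Delta m_n/(\alpha\gamma_n^2)=\Delta\gamma_n/\beta$, that is, $\beta\,\Delta m_n=\alpha\gamma_n^2\Delta\gamma_n$.

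\textbf{Main obstacle.} The invariant \eqref{eq:invariant} is exact only in continuous time, where $d(\tfrac{\alpha}{3}\gamma^3-\beta m)=0$. In discrete time,
\[
\gamma_{n+1}^3-\gamma_n^3=3\gamma_n^2\Delta\gamma_n+3\gamma_n(\Delta\gamma_n)^2+(\Delta\gamma_n)^3 ,
\]
so the invariant picks up an error $O(\sum_n\gamma_n(\Delta\gamma_n)^2)$. I would interpret the statement at first order in $\eta$, consistent with Proposition~\ref{prop:exact-margin}. I would also bound the error: $\sum_n(\Delta\gamma_n)^2\le\beta^2\sum_n\varphi(-m_n)^2$. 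Since $\varphi(-m_n)\sim 1/n$ (derived next), this sum converges. The invariant therefore holds up to a bounded constant, which does not affect the $\Theta$ rates.

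\emph{Growth rate of $m_n$.} Since $\gamma_n\ge\gamma_0>0$ and $\gamma_n$ is nondecreasing, $m_{n+1}\ge m_n+\alpha\gamma_0^2\varphi(-m_n)$. For large $m$, $\varphi(-m)\approx e^{-m}$. Compare with the ODE $\dot m=c\,e^{-m}$, whose solution is $m=\log(cn)$. The lower bound gives $m_n\ge\log n-O(1)$, so $m_n\to\infty$ and $\varphi(-m_n)\to0$. Hence $\Delta m_n\to0$ and $\Delta\gamma_n\to0$.

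\emph{Unboundedness and rate of $\gamma_n$.} The invariant gives $\gamma_n^3=\gamma_0^3+\tfrac{3\beta}{\alpha}(m_n-m_0)\to\infty$, and strict monotonicity follows from $\varphi>0$. For the upper bound, insert $\gamma_n^2=\Theta(m_n^{2/3})$ into the $m$-recursion. This gives $\dot m\asymp m^{2/3}e^{-m}$, so $e^m m^{-2/3}\asymp n$ and $m_n=\log n+\tfrac23\log\log n+O(1)$. The two-sided comparison can be made rigorous with the standard monotone comparison lemma for recursions $x_{n+1}=x_n+g(x_n)$ with decreasing $g$. Together these give $m_n=\log n+O(\log\log n)$, hence $\gamma_n=\Theta((\log n)^{1/3})$. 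The expression for $3\beta/\alpha$ is direct substitution of \eqref{eq:alphabeta}.

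\emph{Threshold $N$.} Since $\gamma_n\uparrow\infty$ and $\sigma$ is stationary, there is a finite first index $N$ with $\gamma_N>\sigma$, and monotonicity keeps $\gamma_n>\sigma$ for all $n\ge N$. If $\sigma<\gamma_0$, then $N=0$. In the boundary case $\sigma=\gamma_0$, strict increase gives the inequality from $n=1$ onward, with equality at $n=0$.
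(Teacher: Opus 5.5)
Your proposal is correct and follows the paper's route. Parts (i) and (ii) come from the exact $\gamma$-gradient under $y^{\star}a\,t>0$ and the $\gamma$-independence of $\kappa_{\mathrm{eff}}$. Part (iii) comes from cancelling the common factor $\varphi(-m_n)$, comparing with the constant-gain dynamics at $c=\alpha\gamma_0^{2}$, and substituting $\gamma^{2}=\Theta(m^{2/3})$ to get $e^{m_n}m_n^{-2/3}\asymp n$.

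You are also more careful than the paper on two points. The paper derives the invariant only in the continuous-time limit, whereas you correctly observe that in discrete time $\tfrac{\alpha}{3}\gamma_n^{3}-\beta m_n$ drifts upward by $\alpha\beta^{2}\gamma_n\varphi(-m_n)^{2}+\tfrac{\alpha}{3}\beta^{3}\varphi(-m_n)^{3}$ per step. You also correctly note that $N=0$ requires $\sigma<\gamma_0$ strictly.

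One thing to tighten: this drift, and likewise $\Delta m_n=\alpha\gamma_n^{2}\varphi(-m_n)$, carries powers of the unbounded $\gamma_n$. So $\varphi(-m_n)\to0$ alone does not give summability or $\Delta m_n\to0$. First record the crude bound $\gamma_n\le\gamma_0+\beta\sum_{k<n}\varphi(-m_k)=O(\log n)$, which follows non-circularly from your lower bound $m_k\ge\log k-O(1)$.
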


\begin{corollary}[Finite-horizon form]
\label{cor:finite-horizon}
Training runs for a finite number of steps $T$. Over that horizon
\begin{equation}
\gamma_T=\Big(\gamma_0^{3}+\tfrac{3\beta}{\alpha}\log T\Big)^{1/3}
+O\!\Big(\tfrac{\log\log T}{(\log T)^{2/3}}\Big),
\label{eq:finite-horizon}
\end{equation}
and the residual drive at the horizon is
$\Delta\gamma_T=\beta\varphi(-m_T)=\Theta\big(T^{-1}(\log T)^{-2/3}\big)$, i.e.\
negligible. The loop is therefore \emph{practically} quiescent by the end of
training even though it does not terminate asymptotically.
\end{corollary}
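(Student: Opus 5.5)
The corollary follows from Theorem~\ref{thm:loop}(iii) once the asymptotic $m_n=\log n+O(\log\log n)$ is made precise enough to evaluate at a finite horizon. I would rely on the exact invariant \eqref{eq:invariant}, which holds at every step with no error. Then
\[
\gamma_T=\big(\gamma_0^{3}+\tfrac{3\beta}{\alpha}(m_T-m_0)\big)^{1/3},
\]
and the task splits into two pieces: an asymptotic expansion of $m_T$, and a perturbation of the cube root.

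\textbf{Step 1: the margin trajectory.} Substitute the invariant into the margin recursion:
\[
m_{n+1}=m_n+\alpha\,\gamma_n^{2}\,\varphi(-m_n),\qquad
\gamma_n^{2}=\Theta\big((m_n)^{2/3}\big),\qquad
\varphi(-m)=e^{-m}\big(1+O(e^{-m})\big).
\]
Compare with the ODE $\dot m=C\,m^{2/3}e^{-m}$, where $C=\alpha(3\beta/\alpha)^{2/3}$. Separating variables gives $\int e^{m}m^{-2/3}\,dm=Cn$, so
\[
e^{m}m^{-2/3}\big(1+O(1/m)\big)=Cn,
\]
and therefore
\[
m_T=\log T+\tfrac23\log\log T+O(1).
\]
To pass from the ODE to the discrete recursion, I would use the standard monotone comparison: the increments tend to zero and the right-hand side is decreasing in $m$ for large $m$, so $F(m_n):=\int^{m_n}e^{s}s^{-2/3}\,ds$ satisfies $F(m_{n+1})-F(m_n)=C+o(1)$. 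Summing over $n$ yields $F(m_T)=CT(1+o(1))$. This is the step I expect to be the main obstacle. The $o(1)$ terms must be controlled so that the error they induce in $m_T$ is $O(\log\log T)$ or better; it only enters after multiplication by the small factor in Step 2, so a crude $O(\log\log T)$ bound is enough.

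\textbf{Step 2: the cube-root perturbation.} Write $m_T-m_0=\log T+\delta_T$ with $\delta_T=O(\log\log T)$. Expanding $(A+\epsilon)^{1/3}=A^{1/3}+O(\epsilon A^{-2/3})$ with $A=\gamma_0^{3}+\tfrac{3\beta}{\alpha}\log T=\Theta(\log T)$ gives
\[
\gamma_T=\big(\gamma_0^{3}+\tfrac{3\beta}{\alpha}\log T\big)^{1/3}+O\!\big(\log\log T\,(\log T)^{-2/3}\big),
\]
which is \eqref{eq:finite-horizon}. The constant $m_0$ is absorbed into this error term.

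\textbf{Step 3: the residual drive.} By Theorem~\ref{thm:loop}(i), $\Delta\gamma_T=\beta\varphi(-m_T)$, and $\varphi(-m_T)\sim e^{-m_T}$. Using the identity $e^{m_T}=CT\,m_T^{2/3}(1+o(1))$ from Step 1 gives
\[
\varphi(-m_T)=\Theta\big(T^{-1}(\log T)^{-2/3}\big).
\]
This rate uses the $\tfrac23\log\log T$ correction, so Step 1 must retain that term explicitly rather than bounding it. The conclusion that the loop is "practically quiescent" then follows because this drive is summable only logarithmically yet is negligible at any realistic $T$, while the invariant shows that $\gamma_n$ still never stabilizes.
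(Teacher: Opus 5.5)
Your proposal is correct and follows the paper's route; the paper derives this corollary inside the proof of Theorem~\ref{thm:loop}(iii) via the invariant, $e^{m}m^{-2/3}\,dm=\Theta(1)\,dn$ giving $m_T=\log T+\tfrac23\log\log T+O(1)$, a cube-root expansion and $\varphi(-m_T)\sim e^{-m_T}$, and your $F$-comparison just makes rigorous its continuous-time step (you are also right that the $\Theta$ rate for $\Delta\gamma_T$ needs the explicit $\tfrac23\log\log T$ term). Two small corrections: the invariant is exact only in the continuous-time limit, since for the discrete updates $\tfrac{\alpha}{3}\gamma_n^{3}-\beta m_n$ grows by $\alpha\gamma_n(\Delta\gamma_n)^{2}+\tfrac{\alpha}{3}(\Delta\gamma_n)^{3}=O(\varphi(-m_n)\,\Delta m_n)$ per step, a summable drift that only adds $O((\log T)^{-2/3})$ to $\gamma_T$; and the drive $\beta\varphi(-m_n)$ is not summable, because its partial sums equal $\gamma_T-\gamma_0=\Theta((\log T)^{1/3})$.
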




\begin{remark}[Gauge invariance]
\label{rem:gauge}
BN makes the network invariant to the scale of $w$, and $\sigma$ co-scales with
$\|w\|$; the \emph{absolute} value of $\gamma/\sigma$ is therefore partly a gauge
artifact. Our claims are stated in two gauge-consistent forms: (a) the
matched-initialization BN vs.\ no-BN comparison of
Corollary~\ref{cor:amplification}, in which the gauge cancels, and (b) the
\emph{monotone trend} of $\gamma/\sigma$ with corruption ratio $k$ within a fixed
architecture. We do not rely on the bare threshold $\gamma/\sigma=1$ as a
scale-free statement.
\end{remark}

\subsection{Multi-step memorization speedup}
\label{sec:multistep}

\begin{proposition}[Time-to-memorize]
\label{prop:multistep}
Under the discrete dynamics
$m_{n+1}=m_{n}+c\,\varphi(-m_{n})$ induced by
Proposition~\ref{prop:exact-margin} at fixed $\gamma$, with
$c=\eta s^{2}\kappa_{\mathrm{eff}}^{2}\|x^{\star}\|_2^{2}$ and
$c_{\mathrm{noBN}}=\eta a^{2}\|x^{\star}\|_2^{2}$, the sequence
$\{m_{n}\}$ is
strictly increasing and unbounded, and the continuous-time travel time from
$m_0$ to a target margin $M$ is
$T(c)=\frac1c[(M+e^{M})-(m_0+e^{m_0})]$. Hence
\begin{equation}
\frac{T_{\mathrm{noBN}}}{T_{\mathrm{BN}}}
=\frac{c_{\mathrm{BN}}}{c_{\mathrm{noBN}}}
=\Big(\frac{\gamma}{\sigma}\Big)^{2}\kappa_{\mathrm{eff}}^{2},
\label{eq:speedup}
\end{equation}
independently of $m_0$, $M$, $\|x^{\star}\|$ and $\eta$.
\end{proposition}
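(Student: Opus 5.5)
The plan has three parts: establish monotonicity and unboundedness of the discrete recursion, pass to the continuous-time ODE to get the closed-form travel time, and take the ratio of the two travel times.

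\emph{Monotonicity and unboundedness.} I would work directly with the recursion $m_{n+1}=m_n+c\,\varphi(-m_n)$. The sigmoid satisfies $\varphi(-m)\in(0,1)$ for every real $m$, and $c>0$ because it is a product of positive quantities. Hence $m_{n+1}>m_n$, so the sequence is strictly increasing.

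For unboundedness I argue by contradiction. Suppose $m_n\le M^\ast<\infty$ for all $n$. Then $\varphi(-m_n)\ge\varphi(-M^\ast)>0$, since $\varphi(-\cdot)$ is decreasing. Each increment would then be at least $c\,\varphi(-M^\ast)$, which forces $m_n\ge m_0+nc\,\varphi(-M^\ast)\to\infty$, a contradiction. A shorter route uses a limit: a bounded monotone sequence converges to some $m_\infty$, and passing to the limit in the recursion gives $\varphi(-m_\infty)=0$, which is impossible.

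\emph{Travel time.} In the continuous-time limit the recursion becomes the separable ODE $\dot m=c\,\varphi(-m)=c/(1+e^{m})$. Separating variables gives $(1+e^{m})\,dm=c\,dt$. Integrating from $m_0$ to $M$ yields
\[
c\,T=(M+e^{M})-(m_0+e^{m_0}),
\]
which is exactly the stated formula for $T(c)$. This quantity is positive for $M>m_0$ because $m+e^{m}$ is strictly increasing.

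\emph{The ratio.} Write $G(m_0,M):=(M+e^{M})-(m_0+e^{m_0})$. This bracket depends only on the endpoints and not on $c$. Therefore
\[
\frac{T_{\mathrm{noBN}}}{T_{\mathrm{BN}}}=\frac{G/c_{\mathrm{noBN}}}{G/c_{\mathrm{BN}}}=\frac{c_{\mathrm{BN}}}{c_{\mathrm{noBN}}},
\]
and $G$ cancels, which removes the dependence on $m_0$ and $M$. Substituting $c_{\mathrm{BN}}=\eta s_{\mathrm{BN}}^2\kappa_{\mathrm{eff}}^2\|x^\star\|_2^2$ with $s_{\mathrm{BN}}=a\gamma/\sigma$, and $c_{\mathrm{noBN}}=\eta a^2\|x^\star\|_2^2$, the factors $\eta$, $a^2$ and $\|x^\star\|_2^2$ all cancel. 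What remains is $(\gamma/\sigma)^2\kappa_{\mathrm{eff}}^2$, which matches Corollary~\ref{cor:amplification}.

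\emph{Main obstacle.} The delicate point is the justification of the continuous-time step. The travel time is stated for the ODE, while the dynamics are discrete. I would state the result for the flow and note that the discrete hitting time agrees with it up to a relative error that vanishes as $c\to0$. The reason is that the step $c\,\varphi(-m)\le c$ is uniformly small and the vector field $c\,\varphi(-m)$ is $c/4$-Lipschitz in $m$, so standard Euler-scheme comparison applies.

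A second caveat is that $\gamma$ is held fixed here, whereas Theorem~\ref{thm:loop} lets $\gamma$ grow. Since $c_{\mathrm{BN}}$ is increasing in $\gamma$, holding $\gamma$ at its current value only understates the BN speedup. The stated ratio is therefore a conservative instantaneous version of the speedup.
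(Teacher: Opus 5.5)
Your proposal is correct and follows essentially the same route as the paper. The paper proves monotonicity and divergence through Lemma~\ref{lem:monotone} by the same limit-passing contradiction you give as your second variant. It then separates variables in $\dot m=c\,\varphi(-m)$ and cancels the $c$-independent bracket $(M+e^{M})-(m_0+e^{m_0})$ exactly as you do; your Euler-comparison remark just makes rigorous the ``$c$ small'' approximation that the paper invokes informally.
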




\section{Memorization mitigation}
\label{sec:mem_mitigation}

\begin{figure}[t]
    \centering
    \includegraphics[width=\linewidth]{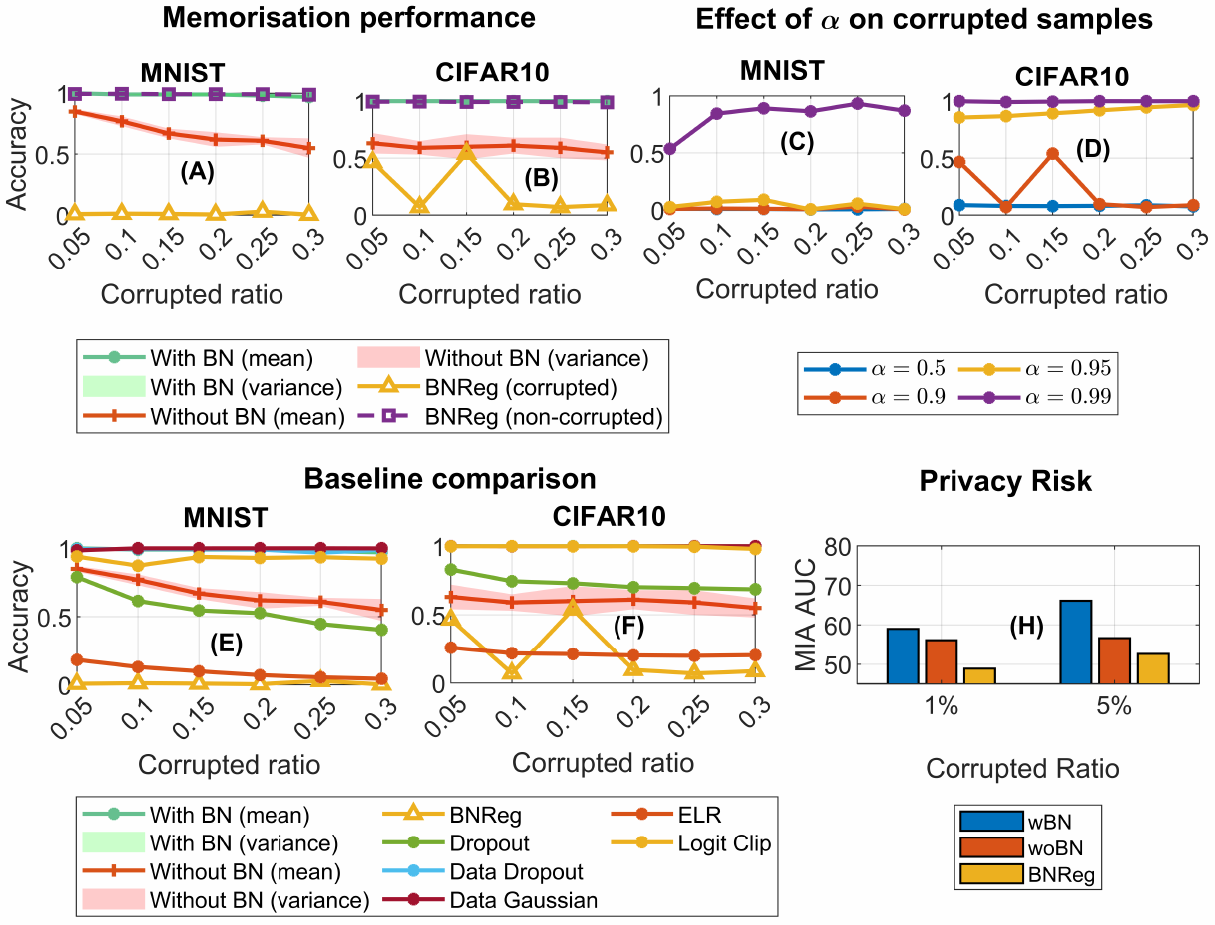}
    \caption{Memorization is reduced significantly when training with the BNReg algorithm (Figs. A, B). Figs. C, D ablate the regularization weight. Figs. E, F compare BNReg to existing works. Fig. H compares BNReg with baselines under MIA.}
    \label{fig:regularisation}
\end{figure}

Motivated by \ref{cor:amplification}, we propose $\gamma/\sigma$-aware approach named \textbf{BNReg}; we constrain this ratio during learning by introducing an additional $\gamma/\sigma$ regularization term in the loss function. Due to the classification penalty lying in the log space, we also consistently project the ratio regularization to the same log space. Formally, given $L$-layer model $f(\cdot)$ with weights $W$ and BN parameters $\gamma_i, \sigma_i$ at the layer $i$, we minimize the objective function: 
\begin{equation}
    J = \alpha \sum_{i=1}^{n} \text{CE}(f(x_i), y_i) + (1-\alpha)\sum_{j=1}^{L}\log\left((\gamma_j/\sigma_j)^2\right)
\end{equation} 
where $\alpha$ denotes a hyperparameter for balancing two terms. 

\noindent\textbf{Results and Ablation study.} Figure \ref{fig:regularisation} A and B confirm that BNReg (yellow lines) significantly reduces the memorization on the corrupted set compared to removing BN in the architecture from 100\% to mostly 0\% while maintaining the non-corrupted samples' accuracy (generalization) close to 100\%. We conduct an ablation study for the hyperparameter $\alpha$ that controls the preference weight between the predictive objective and the regularization term. Figures~\ref{fig:regularisation} C and D show that decreasing $\alpha$ reduces outlier memorization, indicating a memorization–generalization trade-off.

\noindent\textbf{Comparison to existing works.} BNReg consistently outperforms other memorization mitigation methods, which target diverse aspects: (1) data space (Data Dropout, Data Gaussian), (2) model space (Dropout, LogitClip~\cite{wei2023mitigating}), and (3) loss space (ERL~\cite{liu2020early}); c.f. Figure ~\ref{fig:regularisation} E, F.

\noindent\textbf{Privacy evaluation.} To evaluate the actual privacy mitigation achieved by BNReg, we conduct MIA on BN-Lenet trained with BNReg. As shown in Figure~\ref{fig:regularisation} H, incorporating BNReg during training reduces MIA leakage more effectively than simply removing BN layers, demonstrating the mitigation benefit of our regularized algorithm.
\section{Related work}
\label{sec:related works}

\textbf{BN and Generalization.} Literature showed that BN accelerates learning convergence \cite{ioffe2015batch}, stabilizes the learning process \cite{bjorck2018understanding}, and smoothes the loss landscape \cite{santurkar2018does}. Other works \cite{lyu2022understanding, hochreiter1997flat} proved theoretically that BN reduces loss sharpness, guiding the model toward flatter minima that generalize better \cite{hochreiter1997flat,keskar2016large,neyshabur2017exploring}. 
These works primarily focus on generalization, rather than its role in memorization.


\noindent\textbf{BN and Security.} Existing work found that BN decrease adversarial robustness \cite{galloway2019batch,benz2021batch}. BN fails to capture the statistical characteristics of the mixture of clean data and perturbed data, thereby degrading generalization under adversarial training \cite{Xie2020Intriguing}.
Other work \cite{wang2022removing, zhang2022achieving} used normalization-free architectures to enhance robustness. Although these studies reveal a potential link between BN and various attacking mechanisms, the connection between BN and model memorization remains largely unexplored.

\noindent\textbf{Memorization and Architectures.} \cite{mo2021quantifying,baldock2021deep,maini2023can} show that memorization can be associated with specific parts of a model.  \cite{stephenson2021geometry} shows that memorization is localized at the last layers of a model. \cite{wongso2023using} exploits information theory and shows a similar finding that all layers start to memorize label noise data, but earlier layers quickly become stable. \cite{hintersdorf2024finding}, \cite{zhang2024does} and ~\cite{singhal2026impact} localize neurons responsible for memorization in diffusion models, CNNs and Transformers. Our work differs in that rather than concentrating on the effects of neurons on memorization, we study the interplay between BN layers and memorization.
\section{Conclusion}
\label{sec:conclusion}

We investigate the privacy implications of BN. Extensive empirical evaluations across diverse architectures and datasets reveal that incorporating BN introduces notable privacy vulnerabilities. We further provide theoretical analysis to support these observations. Our results suggest practitioners should carefully consider the privacy implications of BN in practical applications.

{
    \small
    \bibliographystyle{splncs04}
    \bibliography{main}

@String(CVPR  = {IEEE Conf. Comput. Vis. Pattern Recog.})

@String(ECCV  = {Eur. Conf. Comput. Vis.})

@String(NeurIPS = {Adv. Neural Inform. Process. Syst.})

@String(ICML  = {Int. Conf. Mach. Learn.})

@String(ICLR  = {Int. Conf. Learn. Represent.})

@String(CVPR  = {CVPR})

@String(ECCV  = {ECCV})

@String(NeurIPS = {NeurIPS})

@String(ICML  = {ICML})

@String(ICLR  = {ICLR})

@article{deng2012mnist,
  title={The mnist database of handwritten digit images for machine learning research},
  author={Deng, Li},
  journal={SPM},
  volume={29},
  number={6},
  pages={141--142},
  year={2012}
}

@article{xiao2017fashion,
  title={Fashion-mnist: a novel image dataset for benchmarking machine learning algorithms},
  author={Xiao, Han and Rasul, Kashif and Vollgraf, Roland},
  journal={arXiv preprint arXiv:1708.07747},
  year={2017}
}

@article{krizhevsky2009learning,
  title={Learning multiple layers of features from tiny images},
  author={Krizhevsky, Alex and Hinton, Geoffrey and others},
  year={2009},
  publisher={Toronto, ON, Canada}
}

@article{lecun2002gradient,
  title={Gradient-based learning applied to document recognition},
  author={LeCun, Yann and Bottou, L{\'e}on and Bengio, Yoshua and Haffner, Patrick},
  journal={Proc. IEEE},
  volume={86},
  number={11},
  pages={2278--2324},
  year={2002}
}

@inproceedings{he2016deep,
  title={Deep residual learning for image recognition},
  author={He, Kaiming and Zhang, Xiangyu and Ren, Shaoqing and Sun, Jian},
  booktitle={CVPR},
  pages={770--778},
  year={2016}
}

@inproceedings{wang2022removing,
  title={Removing batch normalization boosts adversarial training},
  author={Wang, Haotao and Zhang, Aston and Zheng, Shuai and Shi, Xingjian and Li, Mu and Wang, Zhangyang},
  booktitle={ICML},
  pages={23433--23445},
  year={2022},
  organization={PMLR}
}

@inproceedings{zhang2022achieving,
  title={Achieving Both Model Accuracy and Robustness by Adversarial Training with Batch Norm Shaping},
  author={Zhang, Brian and Ma, Shiqing},
  booktitle={ICTAI},
  pages={591--598},
  year={2022},
  organization={IEEE}
}

@inproceedings{koh2017understanding,
  title={Understanding black-box predictions via influence functions},
  author={Koh, Pang Wei and Liang, Percy},
  booktitle={ICML},
  pages={1885--1894},
  year={2017},
  organization={PMLR}
}

@inproceedings{feldman20,
 author = {Feldman, Vitaly and Zhang, Chiyuan},
 booktitle = {NeurIPS},
 editor = {H. Larochelle and M. Ranzato and R. Hadsell and M.F. Balcan and H. Lin},
 pages = {2881--2891},
 title = {What Neural Networks Memorize and Why: Discovering the Long Tail via Influence Estimation},
 volume = {33},
 year = {2020}
}

@inproceedings{carlini2022membership,
  title={Membership inference attacks from first principles},
  author={Carlini, Nicholas and Chien, Steve and Nasr, Milad and Song, Shuang and Terzis, Andreas and Tramer, Florian},
  booktitle={SP},
  pages={1897--1914},
  year={2022}
}

@inproceedings{huang2017densely,
  title={Densely connected convolutional networks},
  author={Huang, Gao and Liu, Zhuang and Van Der Maaten, Laurens and Weinberger, Kilian Q},
  booktitle={CVPR},
  pages={4700--4708},
  year={2017}
}

@inproceedings{Xie2020Intriguing,
title={Intriguing Properties of Adversarial Training at Scale},
author={Cihang Xie and Alan Yuille},
booktitle={ICLR},
year={2020}
}

@inproceedings{ioffe2015batch,
  title={Batch normalization: Accelerating deep network training by reducing internal covariate shift},
  author={Ioffe, Sergey and Szegedy, Christian},
  booktitle={ICLR},
  pages={448--456},
  year={2015}
}

@article{santurkar2018does,
  title={How does batch normalization help optimization?},
  author={Santurkar, Shibani and Tsipras, Dimitris and Ilyas, Andrew and Madry, Aleksander},
  journal={NeurIPS},
  volume={31},
  year={2018}
}

@INPROCEEDINGS{9378171,
  author={Yao, Zhewei and Gholami, Amir and Keutzer, Kurt and Mahoney, Michael W.},
  booktitle={Big Data}, 
  title={PyHessian: Neural Networks Through the Lens of the Hessian}, 
  year={2020},
  volume={},
  number={},
  pages={581-590}
}

@article{Kong2023BatchNormVulnerability,
  author    = {Kong, Fan and Liu, Fei and Xu, Kai and others},
  title     = {Why does batch normalization induce the model vulnerability on adversarial images?},
  journal   = {WWW},
  volume    = {26},
  pages     = {1073--1091},
  year      = {2023}
}

@inproceedings{benz2021batch,
  title={Batch normalization increases adversarial vulnerability and decreases adversarial transferability: A non-robust feature perspective},
  author={Benz, Philipp and Zhang, Chaoning and Kweon, In So},
  booktitle={CVPR},
  pages={7818--7827},
  year={2021}
}

@article{bjorck2018understanding,
  title={Understanding batch normalization},
  author={Bjorck, Nils and Gomes, Carla P and Selman, Bart and Weinberger, Kilian Q},
  journal={NeurIPS},
  volume={31},
  year={2018}
}

@article{lyu2022understanding,
  title={Understanding the generalization benefit of normalization layers: Sharpness reduction},
  author={Lyu, Kaifeng and Li, Zhiyuan and Arora, Sanjeev},
  journal={NeurIPS},
  volume={35},
  pages={34689--34708},
  year={2022}
}

@article{galloway2019batch,
  title={Batch normalization is a cause of adversarial vulnerability},
  author={Galloway, Angus and Golubeva, Anna and Tanay, Thomas and Moussa, Medhat and Taylor, Graham W},
  journal={arXiv preprint arXiv:1905.02161},
  year={2019}
}

@article{carlini2023extracting,
  title={Extracting training data from diffusion models},
  author={Carlini, Nicholas and Hayes, Jamie and Nasr, Milad and Jagielski, Matthew and Choquette-Choo, Christopher A and Balle, Borja and Tramer, Florian and Wallace, Eric and Song, Dawn and others},
  journal={USENIX},
  year={2023}
}

@inproceedings{carlini2019secret,
  title={The secret sharer: Evaluating and testing unintended memorization in neural networks},
  author={Carlini, Nicholas and Liu, Chang and Kos, Jernej and Erlingsson, {\'U}lfar and Song, Dawn},
  booktitle={USENIX},
  pages={267--284},
  year={2019}
}

@inproceedings{shokri2017membership,
  title={Membership inference attacks against machine learning models},
  author={Shokri, Reza and Stronati, Marco and Song, Congzheng and Shmatikov, Vitaly},
  booktitle={SP},
  pages={3--18},
  year={2017},
  organization={IEEE}
}

@inproceedings{song2021systematic,
  title={Systematic evaluation of privacy risks of machine learning models},
  author={Song, Congzheng and Shokri, Reza},
  booktitle={USENIX},
  pages={2615--2632},
  year={2021}
}

@article{mo2021quantifying,
  title={Quantifying information leakage from gradients},
  author={Mo, Fan and Borovykh, Anastasia and Malekzadeh, Mohammad and Haddadi, Hamed and Demetriou, Soteris},
  journal={arXiv preprint arXiv:2105.13929},
  year={2021},
  publisher={ArXiv}
}

@article{baldock2021deep,
  title={Deep learning through the lens of example difficulty},
  author={Baldock, Robert and Maennel, Hartmut and Neyshabur, Behnam},
  journal={NeurIPS},
  volume={34},
  pages={10876--10889},
  year={2021}
}

@article{maini2023can,
  title={Can neural network memorization be localized?},
  author={Maini, Pratyush and Mozer, Michael C and Sedghi, Hanie and Lipton, Zachary C and Kolter, J Zico and Zhang, Chiyuan},
  journal={arXiv preprint arXiv:2307.09542},
  year={2023}
}

@article{stephenson2021geometry,
  title={On the geometry of generalization and memorization in deep neural networks},
  author={Stephenson, Cory and Padhy, Suchismita and Ganesh, Abhinav and Hui, Yue and Tang, Hanlin and Chung, SueYeon},
  journal={arXiv preprint arXiv:2105.14602},
  year={2021}
}

@inproceedings{wongso2023using,
  title={Using sliced mutual information to study memorization and generalization in deep neural networks},
  author={Wongso, Shelvia and Ghosh, Rohan and Motani, Mehul},
  booktitle={AISTATS},
  pages={11608--11629},
  year={2023},
  organization={PMLR}
}

@article{hintersdorf2024finding,
  title={Finding nemo: Localizing neurons responsible for memorization in diffusion models},
  author={Hintersdorf, Dominik and Struppek, Lukas and Kersting, Kristian and Dziedzic, Adam and Boenisch, Franziska},
  journal={NeurIPS},
  volume={37},
  pages={88236--88278},
  year={2024}
}

@inproceedings{10.5555/3600270.3601234,
author = {Carlini, Nicholas and Terzis, Andreas and Jagielski, Matthew and Tramer, Florian and Papernot, Nicolas and Zhang, Chiyuan},
title = {The privacy onion effect: memorization is relative},
year = {2022},
isbn = {9781713871088},
address = {Red Hook, NY, USA},
booktitle = {NeurIPS},
articleno = {964},
numpages = {14},
series = {NIPS '22}
}

@article{hochreiter1997flat,
  title={Flat minima},
  author={Hochreiter, Sepp and Schmidhuber, J{\"u}rgen},
  journal={Neural Comput.},
  volume={9},
  number={1},
  pages={1--42},
  year={1997}
}

@article{keskar2016large,
  title={On large-batch training for deep learning: Generalization gap and sharp minima},
  author={Keskar, Nitish Shirish and Mudigere, Dheevatsa and Nocedal, Jorge and Smelyanskiy, Mikhail and Tang, Ping Tak Peter},
  journal={arXiv preprint arXiv:1609.04836},
  year={2016}
}

@article{neyshabur2017exploring,
  title={Exploring generalization in deep learning},
  author={Neyshabur, Behnam and Bhojanapalli, Srinadh and McAllester, David and Srebro, Nati},
  journal={NeurIPS},
  volume={30},
  year={2017}
}

@article{dosovitskiy2020image,
  title={An image is worth 16x16 words: Transformers for image recognition at scale},
  author={Dosovitskiy, Alexey and Beyer, Lucas and Kolesnikov, Alexander and Weissenborn, Dirk and Zhai, Xiaohua and Unterthiner, Thomas and Dehghani, Mostafa and Minderer, Matthias and Heigold, Georg and Gelly, Sylvain and others},
  journal={arXiv preprint arXiv:2010.11929},
  year={2020}
}

@article{zhang2016understanding,
  title={Understanding deep learning requires rethinking generalization},
  author={Zhang, Chiyuan and Bengio, Samy and Hardt, Moritz and Recht, Benjamin and Vinyals, Oriol},
  journal={arXiv preprint arXiv:1611.03530},
  year={2016}
}

@article{carlini2022privacy,
  title={The privacy onion effect: Memorization is relative},
  author={Carlini, Nicholas and Jagielski, Matthew and Zhang, Chiyuan and Papernot, Nicolas and Terzis, Andreas and Tramer, Florian},
  journal={NeurIPS},
  volume={35},
  pages={13263--13276},
  year={2022}
}

@article{tirumala2022memorization,
  title={Memorization without overfitting: Analyzing the training dynamics of large language models},
  author={Tirumala, Kushal and Markosyan, Aram and Zettlemoyer, Luke and Aghajanyan, Armen},
  journal={NeurIPS},
  volume={35},
  pages={38274--38290},
  year={2022}
}

@article{zhang2023counterfactual,
  title={Counterfactual memorization in neural language models},
  author={Zhang, Chiyuan and Ippolito, Daphne and Lee, Katherine and Jagielski, Matthew and Tram{\`e}r, Florian and Carlini, Nicholas},
  journal={NeurIPS},
  volume={36},
  pages={39321--39362},
  year={2023}
}

@inproceedings{netzer2011reading,
  title={Reading digits in natural images with unsupervised feature learning},
  author={Netzer, Yuval and Wang, Tao and Coates, Adam and Bissacco, Alessandro and Wu, Baolin and Ng, Andrew Y and others},
  booktitle={NIPS workshop on deep learning and unsupervised feature learning},
  volume={2011},
  number={2},
  pages={4},
  year={2011}
}

@article{huynh2022vision,
  title={Vision transformers in 2022: An update on tiny imagenet},
  author={Huynh, Ethan},
  journal={arXiv preprint arXiv:2205.10660},
  year={2022}
}

@article{ulyanov2016instance,
  title={Instance normalization: The missing ingredient for fast stylization},
  author={Ulyanov, Dmitry and Vedaldi, Andrea and Lempitsky, Victor},
  journal={arXiv preprint arXiv:1607.08022},
  year={2016}
}

@inproceedings{wu2018group,
  title={Group normalization},
  author={Wu, Yuxin and He, Kaiming},
  booktitle={ECCV},
  pages={3--19},
  year={2018}
}

@inproceedings{shen2020powernorm,
  title={Powernorm: Rethinking batch normalization in transformers},
  author={Shen, Sheng and Yao, Zhewei and Gholami, Amir and Mahoney, Michael and Keutzer, Kurt},
  booktitle={ICML},
  pages={8741--8751},
  year={2020}
}

@article{hoffer2018norm,
  title={Norm matters: efficient and accurate normalization schemes in deep networks},
  author={Hoffer, Elad and Banner, Ron and Golan, Itay and Soudry, Daniel},
  journal={NeurIPS},
  volume={31},
  year={2018}
}

@misc{ba2016layernormalization,
      title={Layer Normalization}, 
      author={Jimmy Lei Ba and Jamie Ryan Kiros and Geoffrey E. Hinton},
      year={2016},
      archivePrefix={arXiv},
      url={https://arxiv.org/abs/1607.06450}, 
}

@article{liu2020early,
  title={Early-learning regularization prevents memorization of noisy labels},
  author={Liu, Sheng and Niles-Weed, Jonathan and Razavian, Narges and Fernandez-Granda, Carlos},
  journal={NeurIPS},
  volume={33},
  pages={20331--20342},
  year={2020}
}

@inproceedings{wei2023mitigating,
  title={Mitigating memorization of noisy labels by clipping the model prediction},
  author={Wei, Hongxin and Zhuang, Huiping and Xie, Renchunzi and Feng, Lei and Niu, Gang and An, Bo and Li, Yixuan},
  booktitle={ICML},
  pages={36868--36886},
  year={2023}
}

@article{singhal2026impact,
  title={Impact of Layer Norm on Memorization and Generalization in Transformers},
  author={Singhal, Rishi and Kim, Jung-Eun},
  journal={NeurIPS},
  volume={38},
  pages={77239--77291},
  year={2026}
}

@inproceedings{zhang2024does,
  title={How does a deep learning model architecture impact its privacy? a comprehensive study of privacy attacks on $\{$CNNs$\}$ and transformers},
  author={Zhang, Guangsheng and Liu, Bo and Tian, Huan and Zhu, Tianqing and Ding, Ming and Zhou, Wanlei},
  booktitle={USENIX},
  pages={6795--6812},
  year={2024}
}

@inproceedings{ye2024leave,
  title={Leave-one-out distinguishability in machine learning},
  author={Ye, Jiayuan and Borovykh, Anastasia and Hayou, Soufiane and Shokri, Reza},
  booktitle={ICLR},
  volume={2024},
  pages={40267--40303},
  year={2024}
}

@article{hoyos2023representation,
  title={The representation jensen-shannon divergence},
  author={Hoyos-Osorio, Jhoan K and Sanchez-Giraldo, Luis G},
  journal={arXiv preprint arXiv:2305.16446},
  year={2023}
}
}

\clearpage

\appendix

\tableofcontents
\setcounter{equation}{12}
\setcounter{figure}{8}

\begin{center}
    \textbf{ \Large Appendices}
\end{center}
\label{sec:append}

\noindent In the appendices, we extend our theory framework to understand how BN accelerates memorization with proofs and theoretical insights in Section \ref{app:theory}. We show the pseudocode for synthesizing corrupted datasets in Section \ref{sec:ap:pseudos}. We further examine characteristics of models on memorizing corrupted samples in Section \ref{sec:ap:detailed_analysis}. Experimental setup details, including hyperparameters, MIA settings, and evaluation metrics, are presented in Section \ref{sec:ap:implementation_details}. In Section \ref{sec:ap:diff_hyperparams}, we investigate the effects of different parameters, including learning rates, optimizers, normalization techniques, and model architectures, on memorization. Then, we discuss the trade-off between memorization and generalization in Section \ref{sec:ap:tradeoff}. Finally, we study per-sample influence, another proxy to estimate memorization of BN models and no-BN models in Section \ref{sec:ap:per_sample}.

\section{Mechanistic Insights}
\label{app:theory}

Our empirical results consistently show that BN amplifies the memorization of
outlier samples across architectures and datasets. This appendix gives the full
derivations. The analysis proceeds in four stages:

\begin{enumerate}\itemsep2pt
\item \textbf{Exact backward pass} (\S\ref{app:backward}). We derive the BN
backward as an orthogonal projection and bound the tail sample's leverage. This
is the primary technical result; the fixed-statistics model of prior work is
recovered as its large-batch limit.
\item \textbf{Per-step amplification} (\S\ref{app:perstep}). BN amplifies the
per-step margin growth on a tail sample by
$(\gamma/\sigma)^{2}\kappa_{\mathrm{eff}}^{2}$, with
$\kappa_{\mathrm{eff}}\in[0,1]$ a batch-size-controlled attenuation, and the
growth is nonnegative for every batch composition.
\item \textbf{Why $\gamma/\sigma>1$} (\S\ref{app:gamma-sigma}). $\gamma$ is
driven by the tail and $\sigma$ by the bulk; this population asymmetry, not a
contingent empirical fact, is what produces the ratio.
\item \textbf{Multi-step speedup} (\S\ref{app:multistep}). The per-step advantage
compounds into a $(\sigma/\gamma)^{2}\kappa_{\mathrm{eff}}^{-2}$ reduction in
time-to-memorize.
\end{enumerate}

We use the notation of Section~\ref{sec:theory} throughout:
$h_i=w^{\top}x_i+b$, $\mu=\tfrac1B\sum_i h_i$,
$\sigma^{2}=\tfrac1B\sum_i(h_i-\mu)^{2}$, $u_i=(h_i-\mu)/\sigma$,
$\hat h_i=\gamma u_i+\beta$, $z_i=a\hat h_i+c$,
$\varphi(v)=1/(1+e^{-v})$, $m_i=y_iz_i$,
$\rho_i=\partial\ell_i/\partial z_i=-y_i\varphi(-m_i)$, $L=\sum_i\ell_i$,
$X\in\mathbb{R}^{B\times d}$ with rows $x_i^{\top}$, and $K=XX^{\top}$. We index
the tail sample by $\star$ and write $t=u_{\star}$.

\subsection{The exact BN backward pass}
\label{app:backward}

\subsubsection{Proof of Lemma~\ref{lem:projection}}

\begin{proof}
\emph{Step 1: the two constraints.} By definition of $\mu$,
$\sum_i(h_i-\mu)=0$, hence $\mathbf{1}^{\top}u=\sum_i u_i=0$. By definition of
$\sigma^{2}$,
$\sum_i u_i^{2}=\sigma^{-2}\sum_i(h_i-\mu)^{2}=\sigma^{-2}\cdot B\sigma^{2}=B$,
so $\|u\|_2^{2}=B$.\footnote{With the numerical stabilizer, $u_i=(h_i-\mu)/
\sqrt{\sigma^{2}+\epsilon}$ and $\|u\|_2^{2}=B\sigma^{2}/(\sigma^{2}+\epsilon)\le B$,
so every inequality below holds a fortiori.}

\emph{Step 2: differentiating the normalization.} Fix $j$ and compute
$\partial u_i/\partial h_j$. From $u_i=(h_i-\mu)/\sigma$,
\[
\frac{\partial u_i}{\partial h_j}
=\frac{\delta_{ij}-\partial\mu/\partial h_j}{\sigma}
-\frac{h_i-\mu}{\sigma^{2}}\frac{\partial\sigma}{\partial h_j}
=\frac{\delta_{ij}-\tfrac1B}{\sigma}-\frac{u_i}{\sigma}\frac{\partial\sigma}{\partial h_j}.
\]
Differentiating $\sigma^{2}=\tfrac1B\sum_k(h_k-\mu)^{2}$ gives
$2\sigma\,\partial\sigma/\partial h_j=\tfrac2B\sum_k(h_k-\mu)(\delta_{kj}-\tfrac1B)
=\tfrac2B(h_j-\mu)$, using $\sum_k(h_k-\mu)=0$. Hence
$\partial\sigma/\partial h_j=u_j/B$ and
\begin{equation}
\frac{\partial u_i}{\partial h_j}
=\frac1\sigma\Big(\delta_{ij}-\frac1B-\frac{u_iu_j}{B}\Big)
=\frac1\sigma P_{ij}.
\label{eq:jacobian}
\end{equation}

\emph{Step 3: $P$ is an orthogonal projector.} Write
$q_1=\mathbf{1}/\sqrt{B}$ and $q_2=u/\sqrt{B}$. By Step~1,
$\|q_1\|_2=\|q_2\|_2=1$ and $q_1^{\top}q_2=\mathbf{1}^{\top}u/B=0$, so
$\{q_1,q_2\}$ is orthonormal and
$P=I-q_1q_1^{\top}-q_2q_2^{\top}$ is the orthogonal projector onto
$\operatorname{span}\{q_1,q_2\}^{\perp}$. Consequently $P=P^{\top}=P^{2}$ and
$\|P\|_2=1$.

\emph{Step 4: chain rule.} Since $z_i=a\gamma u_i+(a\beta+c)$ we have
$\partial L/\partial u_i=a\gamma\rho_i$, and by \eqref{eq:jacobian},
\[
\frac{\partial L}{\partial h_j}
=\sum_i\frac{\partial L}{\partial u_i}\frac{\partial u_i}{\partial h_j}
=\frac{a\gamma}{\sigma}\sum_i \rho_i P_{ij}
=\frac{a\gamma}{\sigma}\,(P\rho)_j,
\]
using $P=P^{\top}$. Expanding the two rank-one terms yields
\eqref{eq:exact-backward-components}. Finally, \eqref{eq:jacobian} states that
$\partial u/\partial h=\tfrac1\sigma P$, which is the same symmetric operator
appearing in the backward pass.
\end{proof}

\begin{remark}[What the fixed-statistics model discards]
Retaining only $\delta_{ij}$ in \eqref{eq:jacobian} gives
$\partial L/\partial h_j\approx\tfrac{a\gamma}{\sigma}\rho_j$, i.e.\ replaces the
projector $P$ by the identity. The discarded terms are exactly the
mean-subtraction channel ($-\tfrac1B\mathbf{1}\mathbf{1}^{\top}$) and the
variance-coupling channel ($-\tfrac1B uu^{\top}$). Both are rank one, which is
why their total effect is controlled by a scalar, $\kappa$.
\end{remark}

\subsubsection{Proof of Lemma~\ref{lem:leverage}}

\begin{proof}
By Step~1 above, $\sum_i u_i=0$ and $\sum_i u_i^{2}=B$. Writing $t=u_{\star}$,
the first constraint gives $t=-\sum_{j\neq\star}u_j$, and Cauchy--Schwarz gives
\[
t^{2}=\Big(\sum_{j\neq\star}u_j\Big)^{2}\le(B-1)\sum_{j\neq\star}u_j^{2}
=(B-1)\big(B-t^{2}\big).
\]
Rearranging, $t^{2}\big(1+(B-1)\big)\le B(B-1)$, i.e.\ $t^{2}\le B-1$.
Substituting into \eqref{eq:kappa},
\[
\kappa=1-\frac{1+t^{2}}{B}\ \ge\ 1-\frac{1+(B-1)}{B}=0,
\]
and $\kappa<1$ since $t^{2}\ge0$ implies $\kappa\le1-1/B$. Equality
$t^{2}=B-1$ holds iff Cauchy--Schwarz is tight, i.e.\ all $u_j$, $j\neq\star$,
are equal, giving $u_j=-t/(B-1)$; this is the single-spike batch. Consistency
check: $\kappa=P_{\star\star}$ is a diagonal entry of an orthogonal projector, so
$\kappa=e_{\star}^{\top}P^{2}e_{\star}=\|Pe_{\star}\|_2^{2}\in[0,1]$
independently of the above. For fixed $t$, $\kappa=1-(1+t^{2})/B\to1$ as
$B\to\infty$.
\end{proof}

\begin{remark}[Interpretation]
$\kappa$ is the leverage complement of the tail sample: $1-\kappa=(1+t^{2})/B$ is
the fraction of the normalization directions the sample itself occupies, split
into $1/B$ for the mean and $t^{2}/B$ for the variance. The second term is
precisely the "the outlier inflates $\sigma$ and moderates its own step"
effect, now quantified rather than asserted.
\end{remark}

\subsection{Per-step amplification}
\label{app:perstep}

\subsubsection{Proof of Proposition~\ref{prop:exact-margin}}

\begin{proof}
\emph{Step 1: the weight gradient.} By the chain rule and
Lemma~\ref{lem:projection}, with $\partial h_j/\partial w=x_j$,
\begin{equation}
g:=\nabla_w L=\sum_j\frac{\partial L}{\partial h_j}x_j
=\frac{a\gamma}{\sigma}X^{\top}P\rho .
\label{eq:wgrad}
\end{equation}
The gradient step is $\Delta w=-\eta g$.

\emph{Step 2: propagate to pre-activations.} To first order in $\eta$,
$\delta h=X\Delta w=-\eta\frac{a\gamma}{\sigma}XX^{\top}P\rho
=-\eta\frac{a\gamma}{\sigma}KP\rho$.

\emph{Step 3: propagate through the normalization, exactly.} The batch statistics
are re-evaluated after the update. By \eqref{eq:jacobian}, to first order,
\[
\delta u=\frac{\partial u}{\partial h}\,\delta h=\frac1\sigma P\,\delta h
=-\eta\frac{a\gamma}{\sigma^{2}}\,PKP\rho .
\]
Note that the \emph{same} projector appears here as in Step~1; this is the
symmetry noted in Lemma~\ref{lem:projection}. Hence
\[
\delta z=a\gamma\,\delta u=-\eta\frac{(a\gamma)^{2}}{\sigma^{2}}\,PKP\rho .
\]

\emph{Step 4: isolate the tail sample.} In the memorization regime the bulk is
well classified, $\rho_j\approx0$ for $j\neq\star$, so $\rho=\rho_{\star}e_{\star}$
and
\[
\delta z_{\star}=-\eta\frac{(a\gamma)^{2}}{\sigma^{2}}\rho_{\star}(PKP)_{\star\star}.
\]
Multiplying by $y_{\star}$ and using
$-y_{\star}\rho_{\star}=y_{\star}^{2}\varphi(-m_{\star})=\varphi(-m_{\star})$,
\[
\Delta m_{\star}=y_{\star}\,\delta z_{\star}
=\eta\,\varphi(-m_{\star})\Big(\frac{a\gamma}{\sigma}\Big)^{2}(PKP)_{\star\star}.
\]

\emph{Step 5: nonnegativity and the explicit form.} Writing $p_{\star}=Pe_{\star}$
and using $P=P^{\top}=P^{2}$,
\[
(PKP)_{\star\star}=e_{\star}^{\top}PXX^{\top}Pe_{\star}
=\|X^{\top}p_{\star}\|_2^{2}\ \ge\ 0 .
\]
The entries of $p_{\star}$ follow from $P_{ij}=\delta_{ij}-\tfrac1B-\tfrac{u_iu_j}{B}$
evaluated at $j=\star$: $(p_{\star})_{\star}=1-\tfrac1B-\tfrac{t^{2}}{B}=\kappa$
and $(p_{\star})_j=-\tfrac{1+u_jt}{B}$ for $j\neq\star$. Since $PKP\succeq0$ for
any Gram matrix $K\succeq0$, $\Delta m_{\star}\ge0$ regardless of batch
composition.
\end{proof}

\begin{remark}[Backward approximation vs.\ evaluation convention]
Step~3 differentiates the normalization when re-evaluating the margin, so the
result is first order in $\eta$ but makes \emph{no} fixed-statistics assumption
in either the forward or backward direction. This is the substantive difference
from the earlier analysis, which froze $(\mu,\sigma)$ in both.
\end{remark}

\subsubsection{Proof of Corollary~\ref{cor:amplification}}

\begin{proof}
For the no-BN model, $z_i=ah_i+c$ and there is no normalization, so
$\nabla_w\ell_{\star}=a\rho_{\star}x^{\star}$, $\delta h_{\star}
=-\eta a\rho_{\star}\|x^{\star}\|_2^{2}$ and
$\Delta m_{\mathrm{noBN}}=\eta\varphi(-m_{\star})a^{2}\|x^{\star}\|_2^{2}$,
which is \eqref{eq:exact-margin} with $P$ replaced by $I$ and $\gamma/\sigma$ by
$1$. At a matched initialization both models share $x^{\star}$, $m_{\star}$ and
hence $\varphi(-m_{\star})$; dividing, the factors $\eta$ and
$\varphi(-m_{\star})$ cancel and
\[
\mathcal{A}=\Big(\frac{\gamma}{\sigma}\Big)^{2}
\frac{\|X^{\top}p_{\star}\|_2^{2}}{\|x^{\star}\|_2^{2}}
=\Big(\frac{\gamma}{\sigma}\Big)^{2}\kappa_{\mathrm{eff}}^{2}.
\]
For the bound under bulk orthogonality, expand
\[
\|X^{\top}p_{\star}\|_2^{2}
=\kappa^{2}\|x^{\star}\|_2^{2}
+2\kappa\!\!\sum_{j\neq\star}\!(p_{\star})_j\langle x_j,x^{\star}\rangle
+\Big\|\sum_{j\neq\star}(p_{\star})_jx_j\Big\|_2^{2}.
\]
If $\langle x_j,x^{\star}\rangle\approx0$ for $j\neq\star$, the cross term
vanishes and the remaining term is nonnegative, giving
$\kappa_{\mathrm{eff}}^{2}\ge\kappa^{2}$; if in addition the bulk term is
negligible relative to $\kappa^{2}\|x^{\star}\|_2^{2}$, then
$\kappa_{\mathrm{eff}}^{2}\to\kappa^{2}$. A gauge-free upper bound also holds in
general: $\|X^{\top}p_{\star}\|_2^{2}\le\|X\|_2^{2}\|p_{\star}\|_2^{2}
=\|X\|_2^{2}\,\kappa$, using $\|p_{\star}\|_2^{2}=e_{\star}^{\top}P^{2}e_{\star}
=\kappa$.
\end{proof}

\subsubsection{Proof of Corollary~\ref{cor:fixed-stats}}

\begin{proof}
Fix $t$ and $\|x^{\star}\|_2$ and let $B\to\infty$. By \eqref{eq:kappa},
$\kappa=1-(1+t^{2})/B\to1$. The off-diagonal entries of $p_{\star}$ satisfy
$|(p_{\star})_j|=|1+u_jt|/B=O(1/B)$, so with bounded $\|x_j\|_2$ the bulk
back-reaction $\big\|\sum_{j\neq\star}(p_{\star})_jx_j\big\|_2=O(B^{-1/2})$ under
the standard assumption that the $u_j$ are $O(1)$ and the $x_j$ are not
systematically aligned. Hence $\kappa_{\mathrm{eff}}\to1$ and
\eqref{eq:exact-margin} reduces to
$\Delta m_{\star}=\eta\varphi(-m_{\star})s^{2}\|x^{\star}\|_2^{2}$ with
$s=a\gamma/\sigma$, and $\mathcal{A}\to(\gamma/\sigma)^{2}$. The upper bound
$\mathcal{A}\le(\gamma/\sigma)^{2}$ under bulk orthogonality follows from
$\kappa_{\mathrm{eff}}\le1$, which in turn follows from $\|P\|_2=1$:
$\|X^{\top}Pe_{\star}\|_2\le\|X\|_2\|Pe_{\star}\|_2\le\|X\|_2$, and in the
orthogonal-bulk case $\|X^{\top}p_{\star}\|_2^{2}
=\kappa^{2}\|x^{\star}\|_2^{2}+O(B^{-1})\le\|x^{\star}\|_2^{2}$.
\end{proof}

\subsubsection{Proof of Proposition~\ref{prop:direction}}

\begin{proof}
Substituting $\rho=\rho_{\star}e_{\star}$ into \eqref{eq:wgrad},
\[
\nabla_wL_{\mathrm{BN}}=\frac{a\gamma\rho_{\star}}{\sigma}X^{\top}p_{\star}
=\frac{a\gamma\rho_{\star}}{\sigma}
\Big[\kappa x^{\star}-\frac1B\sum_{j\neq\star}(1+u_jt)x_j\Big],
\]
using the entries of $p_{\star}$ computed in Step~5 of the proof of
Proposition~\ref{prop:exact-margin}. For the no-BN model,
$\nabla_wL_{\mathrm{noBN}}=a\rho_{\star}x^{\star}$. The cosine between the two is
\[
\cos\theta=\frac{\langle X^{\top}p_{\star},x^{\star}\rangle}
{\|X^{\top}p_{\star}\|_2\,\|x^{\star}\|_2}
=\frac{\kappa\|x^{\star}\|_2^{2}-\tfrac1B\sum_{j\neq\star}(1+u_jt)\langle x_j,x^{\star}\rangle}
{\|X^{\top}p_{\star}\|_2\,\|x^{\star}\|_2},
\]
which reduces to $\kappa\|x^{\star}\|_2/\|X^{\top}p_{\star}\|_2$ when the bulk is
orthogonal to $x^{\star}$, and equals $1$ iff
$X^{\top}p_{\star}\parallel x^{\star}$, i.e.\ iff the back-reaction term vanishes.
\end{proof}

\begin{remark}[Why direction matters for the memorization claim]
\label{rem:direction-matters}
A pure rescaling of the no-BN gradient would leave every normalized update
direction unchanged and could not, by itself, distinguish memorization from
faster feature learning. Proposition~\ref{prop:direction} shows BN does not
merely rescale: the back-reaction term is nonzero even when
$\rho_j\approx0$ for all $j\neq\star$, so BN routes the outlier's error signal
onto the bulk's input directions through $\mu$ and $\sigma$. Empirically we
therefore report, alongside gradient norms, the alignment
\[
\cos\Big(\nabla_w\ell_{\star},\ \tfrac{1}{|\mathcal{B}|-1}\textstyle\sum_{j\neq\star}\nabla_w\ell_j\Big),
\]
predicting \emph{lower} alignment under BN (more outlier-specific updates) rather
than merely larger norms.
\end{remark}

\begin{remark}[Per-sample gradients are not batch-independent under BN]
\label{rem:dpsgd}
Equation~\eqref{eq:wgrad} shows that a sample's contribution to $\nabla_wL$
depends on the whole batch through $P$. Under BN there is therefore no
batch-independent "per-sample gradient" to clip, which complicates the
per-sample clipping step of DP-SGD beyond the usual bookkeeping; see
\S\ref{app:limitations}.
\end{remark}



\subsection{Why $\gamma/\sigma>1$ emerges}
\label{app:gamma-sigma}

\subsubsection{Proof of Proposition~\ref{prop:gamma-exact}}

\begin{proof}
The normalized activations $u_i$ are functions of $w$, $b$ and the batch only;
they do not depend on $\gamma$. Hence, from $z_i=a(\gamma u_i+\beta)+c$,
\[
\frac{\partial L}{\partial\gamma}
=\sum_i\frac{\partial\ell_i}{\partial z_i}\frac{\partial z_i}{\partial\gamma}
=a\sum_i\rho_iu_i=aB\,\overline{\rho u},
\]
with no approximation of any kind: the fixed-statistics simplification concerns
$\partial u/\partial h$, which does not enter here. For the tail sample,
$|\partial\ell_{\star}/\partial\gamma|=|\rho_{\star}||a||t|
=\varphi(-m_{\star})|a||t|=\Theta(|t|)$ for as long as the sample remains unfit
($\varphi(-m_{\star})$ bounded away from $0$). For a typical sample either
$u_i\approx0$ (activation near the mean) or $\varphi(-m_i)\approx0$
(well classified), so $|\partial\ell_i/\partial\gamma|\approx0$; in expectation
over a mini-batch of typical samples, $\overline{\rho u}$ is the empirical
correlation between the loss residual and the normalized activation, which is
$O(B^{-1/2})$ when the two are uncorrelated.
\end{proof}

\subsubsection{Proof of Proposition~\ref{prop:sigma-bulk}}

\begin{proof}
By definition $\sum_i(h_i-\mu)^{2}=B\sigma^{2}$ and $(h_{\star}-\mu)^{2}
=t^{2}\sigma^{2}$, so the tail sample's share is exactly $t^{2}/B$, bounded by
$(B-1)/B$ via Lemma~\ref{lem:leverage}. Writing
$B\sigma^{2}=t^{2}\sigma^{2}+\sum_{j\neq\star}(h_j-\mu)^{2}$ and
$\sigma_{\mathrm{bulk}}^{2}:=\tfrac{1}{B-1}\sum_{j\neq\star}(h_j-\mu)^{2}$ gives
$\sigma^{2}=\frac{B-1}{B-t^{2}}\,\sigma_{\mathrm{bulk}}^{2}$, hence
$\sigma=\sigma_{\mathrm{bulk}}\big(1+O(t^{2}/B)\big)$ for $t^{2}\ll B$. At
$B=256$: $t=3$ gives a $3.5\%$ share and $t=5$ a $9.8\%$ share.
\end{proof}

\begin{remark}[The population asymmetry, stated precisely]
\label{rem:asymmetry}
Propositions~\ref{prop:gamma-exact} and \ref{prop:sigma-bulk} show that the
numerator and denominator of the amplification ratio are driven by disjoint
sample populations. The drive on $\gamma$ from a single unfit tail sample is
$\Theta(|t|)$ and does not average away, whereas its contribution to $\sigma$ is
$O(t^{2}/B)$ and is further diluted as $B$ grows. Consequently $\gamma$ rises
while $\sigma$ is pinned by the bulk, and $\gamma/\sigma$ increases with
\emph{memorization load}. The empirical counterpart of this statement, and the
one we test, is that $\gamma/\sigma$ should increase monotonically with the
corruption ratio $k$ within a fixed architecture, and should remain near its
initialization when no memorizable tail is present. This converts
$\gamma/\sigma>1$ from an assumption into a prediction of the mechanism.
\end{remark}

\subsubsection{Proof of Theorem~\ref{thm:loop}}

Throughout this proof $n$ is the training-step index and $t=u_{\star}$ the
tail deviation; $\alpha,\beta$ are as in \eqref{eq:alphabeta}.

\begin{proof}
\emph{Part (i).} Gradient descent on $\gamma$ using the tail sample gives
$\gamma_{n+1}=\gamma_{n}-\eta\,\partial\ell_{\star}/\partial\gamma$.
By Proposition~\ref{prop:gamma-exact},
$\partial\ell_{\star}/\partial\gamma=-y^{\star}\varphi(-m_{n})\,a\,t$. When
the label, the classifier coefficient and the deviation direction align, i.e.\
$y^{\star}a\,t>0$ (the model has associated the channel's activation direction
with the label), we have $\partial\ell_{\star}/\partial\gamma<0$ and
\[
\Delta\gamma_{n}=-\eta\frac{\partial\ell_{\star}}{\partial\gamma}
=\eta\,\varphi(-m_{n})\,|a|\,|t|
\;=\;\beta\,\varphi(-m_n)\;>0 .
\]

\emph{Part (ii).} By Proposition~\ref{prop:sigma-bulk} the tail sample perturbs
$\sigma$ only at $O(t^{2}/B)$, and the bulk contribution to $\sigma$ stabilizes
once bulk features converge, so $\sigma$ is stationary to that order over the
horizon on which $\gamma$ grows. Then $\gamma_{n}\uparrow$ implies
$(\gamma_{n}/\sigma)^{2}\uparrow$. The attenuation $\kappa_{\mathrm{eff}}$
in Corollary~\ref{cor:amplification} depends only on $t$, $B$ and the batch
geometry $K$, not on $\gamma$, so the amplification
$\mathcal{A}=(\gamma_{n}/\sigma)^{2}\kappa_{\mathrm{eff}}^{2}$ inherits the
monotonicity, and by Proposition~\ref{prop:exact-margin} $\Delta m$ increases at
the next step, i.e.\ $\Delta m_n=\alpha\gamma_n^{2}\varphi(-m_n)$. This
closes the loop \eqref{eq:loop}. Note that
$\kappa_{\mathrm{eff}}$ enters as a constant factor of the loop gain and, being
strictly positive whenever $t^{2}<B-1$ (Lemma~\ref{lem:leverage}), cannot change
its sign.

\emph{Part (iii), quiescence.} By (i) the sequence $\gamma_n$ is increasing,
so $\Delta m_n=\alpha\gamma_n^{2}\varphi(-m_n)\ge\alpha\gamma_0^{2}\varphi(-m_n)$
and the margin dominates the constant-gain dynamics of Lemma~\ref{lem:monotone}
with $c=\alpha\gamma_0^{2}$. Hence $m_n$ is strictly increasing with
$m_n\to+\infty$, so $\varphi(-m_n)\to0$ and both increments
$\Delta\gamma_n=\beta\varphi(-m_n)$ and
$\Delta m_n=\alpha\gamma_n^{2}\varphi(-m_n)$ tend to $0$.

\noindent\emph{Part (iii), the invariant.} Dividing the two update equations, the
common factor $\varphi(-m_n)$ cancels \emph{identically}, so in the
continuous-time limit
\[
\frac{d\gamma}{dm}
=\frac{\beta\,\varphi(-m)}{\alpha\gamma^{2}\varphi(-m)}
=\frac{\beta}{\alpha\gamma^{2}}
\quad\Longrightarrow\quad
\alpha\gamma^{2}\,d\gamma=\beta\,dm,
\]
and integrating gives \eqref{eq:invariant}. The cancellation of $\varphi$ makes
the $\gamma$--$m$ relation independent of the loss profile: \eqref{eq:invariant}
holds for any strictly positive loss derivative, not only the logistic sigmoid.

\emph{Part (iii), divergence.} For $m\gg0$ we have $\varphi(-m)\simeq
e^{-m}$, and substituting $\gamma_n^{2}=\Theta(m_n^{2/3})$ from
\eqref{eq:invariant} into $\Delta m_n=\alpha\gamma_n^{2}\varphi(-m_n)$ gives
$e^{m}m^{-2/3}\,dm=\Theta(1)\,dn$, hence $e^{m_n}m_n^{-2/3}=\Theta(n)$ and
\[
m_n=\log n+\tfrac23\log\log n+O(1)=\log n+O(\log\log n),
\]
which is the post-saturation regime of Proposition~\ref{prop:multistep}.
Substituting into \eqref{eq:invariant} yields \eqref{eq:gamma-rate}. Consequently
\[
\varphi(-m_n)=\Theta\big(n^{-1}(\log n)^{-2/3}\big),
\qquad
\sum_{n}\Delta\gamma_n=\beta\sum_{n}\varphi(-m_n)=\infty ,
\]
since $\sum_n n^{-1}(\log n)^{-2/3}$ diverges ($2/3<1$). The claim in the earlier
version of this theorem that the loop ``self-terminates at a steady state'' is therefore false. The error was to infer summability of $\Delta\gamma_n$ from the exponential decay of $\varphi$: $\varphi(-m_n)$ does decay exponentially in the \emph{margin}, but the margin grows only logarithmically in the \emph{step count}, which leaves a harmonic-type tail.

\emph{Part (iii), threshold crossing.} $\gamma_n$ is strictly increasing and
unbounded, so for any $\sigma>0$ the set $\{n:\gamma_n>\sigma\}$ is a nonempty
up-set; let $N$ be its least element. Inverting \eqref{eq:gamma-rate},
\[
N\;\approx\;\exp\!\Big(m_0+\tfrac{\alpha}{3\beta}\big(\sigma^{3}-\gamma_0^{3}\big)\Big),
\]
which is finite and computable from $(\eta,a,t,\sigma,\kappa_{\mathrm{eff}},
\|x^{\star}\|_2)$. If $\sigma\le\gamma_0$ then $\gamma_0/\sigma\ge1$ already and
$N=0$ by monotonicity; with the standard initialization $\gamma_0=1$ this covers
every specialized channel whose activation spread is at most the initialization
scale.
\end{proof}


\begin{remark}[Gauge caveat, restated]
As noted in Remark~\ref{rem:gauge}, BN's invariance to $\|w\|$ means the absolute
magnitude of $\gamma/\sigma$ is not scale-free: rescaling $w\mapsto cw$ leaves
the function unchanged while scaling $\sigma\mapsto c\sigma$. Part (iii)'s
conclusion $\gamma_n/\sigma>1$ for $n\ge N$ is therefore stated relative to a fixed initialization gauge. The gauge-invariant content of the theory is the matched BN vs.\ no-BN ratio of Corollary~\ref{cor:amplification}, in which the parametrization cancels, together with the monotone dependence of $\gamma/\sigma$ on corruption ratio within a fixed run.
\end{remark}

\subsection{Multi-step memorization speedup}
\label{app:multistep}

\subsubsection{Discrete dynamics}

From Proposition~\ref{prop:exact-margin}, the margin trajectory on a fixed tail
sample at fixed $\gamma$ obeys
\begin{equation}
\begin{aligned}
m_{n+1}
&= m_{n}+c\,\varphi(-m_{n}),
\qquad
c=\eta\,s^2\kappa_{\mathrm{eff}}^2\|x^{\star}\|_2^2,\\
c_{\mathrm{BN}}
&=\Big(\frac{\gamma}{\sigma}\Big)^2
  \kappa_{\mathrm{eff}}^2\,c_{\mathrm{noBN}} .
\end{aligned}
\label{eq:discrete}
\end{equation}
\noindent Here $n$ is the step index; the tail deviation $t$ enters only through $\kappa_{\mathrm{eff}}$.

\begin{lemma}[Monotone divergence]
\label{lem:monotone}
For any $c>0$ and any $m_0\in\mathbb{R}$, the sequence \eqref{eq:discrete} is
strictly increasing and $m_{n}\to+\infty$. The same holds for any dynamics $m_{n+1}=m_n+c_n\varphi(-m_n)$ with $c_n\ge c>0$, by comparison.
\end{lemma}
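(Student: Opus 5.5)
Since $\varphi(v)=1/(1+e^{-v})\in(0,1)$ for every real $v$, the increment $c\,\varphi(-m_n)$ is strictly positive whenever $c>0$. Strict monotonicity is then immediate: $m_{n+1}-m_n=c\,\varphi(-m_n)>0$ for all $n$. The substantive claim is divergence, and I will prove it by contradiction via a monotone-limit argument.

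Suppose $m_n$ were bounded above. As an increasing bounded sequence it converges to some finite $m^{\ast}$, and since $m_n\le m^{\ast}$ and $\varphi$ is increasing, $\varphi(-m_n)\ge\varphi(-m^{\ast})>0$ for every $n$. Summing the increments gives
\[
m_n\ \ge\ m_0+n\,c\,\varphi(-m^{\ast})\ \longrightarrow\ +\infty,
\]
which contradicts boundedness. Hence $m_n\to+\infty$. Equivalently, and without invoking the limit, on any interval $\{m\le M\}$ each step advances the margin by at least $c\,\varphi(-M)>0$. The sequence therefore exceeds $M$ after at most $\lceil (M-m_0)/(c\,\varphi(-M))\rceil+1$ steps, and since it never decreases it stays above $M$ thereafter. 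Because $M$ is arbitrary, this gives divergence with an explicit crossing time.

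For the comparison clause, take $m_{n+1}=m_n+c_n\varphi(-m_n)$ with $c_n\ge c>0$. Each increment satisfies $c_n\varphi(-m_n)\ge c\,\varphi(-m_n)>0$, so the sequence is again strictly increasing. The same argument goes through with $c$ as the uniform lower bound: if $m_n$ were bounded by $m^{\ast}$, the increments would be at least $c\,\varphi(-m^{\ast})$, contradicting boundedness. I deliberately avoid a pathwise comparison of the form ``$m_n\ge$ the constant-$c$ trajectory'', because the map $m\mapsto m+c\,\varphi(-m)$ need not be monotone for large $c$. The uniform-increment argument bypasses that issue, which makes the direct bound the safer route and the one I will use.

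There is no real obstacle here. The only point that needs care is this non-monotonicity of the update map, and the uniform-lower-bound argument already handles it.
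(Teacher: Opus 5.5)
Your proof is correct and is essentially the paper's argument: positivity of $\varphi$ gives strict monotonicity, and a finite limit $M$ would keep the increments bounded below by $c\,\varphi(-M)>0$, which is a contradiction. The paper phrases this as $\Delta m_n\to c\,\varphi(-M)$ where you use a summed lower bound, and it likewise handles the $c_n\ge c$ clause through the increment bound $\Delta m_n\ge c\,\varphi(-m_n)$ rather than a pathwise comparison, which matches your caution about the non-monotone update map.
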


\begin{proof}
$\varphi(-m_{n})>0$ for all finite $m_{n}$, so
$\Delta m_{n}=c\varphi(-m_{n})>0$ and the
sequence is strictly increasing. Suppose $m_{n}\to M<\infty$. Then
$\varphi(-m_{n})\to\varphi(-M)>0$, so
$\Delta m_{n}\to c\varphi(-M)>0$, contradicting
convergence to a finite limit. Hence $m_{n}\to+\infty$.
For the comparison statement, $\Delta m_n\ge c\varphi(-m_n)>0$ and the same contradiction applies.
\end{proof}

\subsubsection{Proof of Proposition~\ref{prop:multistep}}

\begin{proof}
When $c$ is small relative to the margin scale, \eqref{eq:discrete} is
well approximated by the ODE $\dot m=c\,\varphi(-m)=c/(1+e^{m})$. Separating
variables and integrating from $(m_0,0)$ to $(M,T)$,
\begin{equation}
\begin{aligned}
\int_{m_0}^{M}(1+e^{m})\,dm
&= \int_0^{T}c\,dn = cT
\quad\Longrightarrow\\
T(c;m_0,M)
&= \frac{1}{c}
\Big[(M+e^{M})-(m_0+e^{m_0})\Big].
\end{aligned}
\end{equation}
The bracket depends only on $m_0$ and $M$, not on $c$ or $s$, so
\[
\frac{T_{\mathrm{noBN}}}{T_{\mathrm{BN}}}
=\frac{c_{\mathrm{BN}}}{c_{\mathrm{noBN}}}
=\frac{(a\gamma/\sigma)^{2}\kappa_{\mathrm{eff}}^{2}}{a^{2}}
=\Big(\frac{\gamma}{\sigma}\Big)^{2}\kappa_{\mathrm{eff}}^{2},
\]
independently of $m_0$, $M$, $\|x^{\star}\|_2$ and $\eta$.
\end{proof}

\begin{remark}[Robustness of the scaling]
The speedup holds for any target margin $M>m_0$, whether the model is in the
pre-saturation regime ($m\ll0$), near the decision boundary ($m\approx0$), or
confidently memorized ($m\gg0$). Asymptotically: for $m\ll0$, $\varphi(-m)\approx1$
and the margin grows linearly, $m_{n}\approx m_0+cn$, a
$(\gamma/\sigma)^{2}\kappa_{\mathrm{eff}}^{2}$ rate advantage; for $m\gg0$,
$\varphi(-m)\approx e^{-m}$ and $m(n)\approx\log(cn+e^{m_0})$, so the
BN model retains a persistent offset
$m_{\mathrm{BN}}(n)-m_{\mathrm{noBN}}(n)\approx2\log\!\big((\gamma/\sigma)\kappa_{\mathrm{eff}}\big)$.
This $m_n=\log n+O(1)$ behaviour at fixed $\gamma$ is what
Theorem~\ref{thm:loop}(iii) refines to $m_n=\log n+O(\log\log n)$ once the slow
growth of $\gamma_n$ is taken into account.
This two-regime behaviour matches Figure~3: early on, BN models reduce loss on
noisy samples dramatically faster; later, both converge but BN reaches lower
loss.
\end{remark}

\subsection{Discussion, validation and limitations}
\label{app:limitations}

\paragraph{Why outliers are disproportionately affected.}
Three compounding mechanisms, all now expressed in exact quantities:
\begin{enumerate}\itemsep2pt
\item \emph{Persistent tail activations.} Outliers maintain large $|t|$
throughout training. In \eqref{eq:gamma-grad} the drive on $\gamma$ is
$\Theta(|t|)$, whereas typical samples with $u_i\approx0$ contribute $\approx0$.
\item \emph{Channel specialization.} Channels with small $\sigma$ correspond to
specialized, low-entropy features; outliers activate them atypically, and these
channels produce the largest $\gamma/\sigma$. Our layerwise analysis (Figure~10)
confirms early layers exhibit the largest ratios.
\item \emph{Self-reinforcing dynamics.} Theorem~\ref{thm:loop}: outliers drive
$\gamma$ upward, which amplifies their own memorization. The loop is absent in
no-BN models, where no analogue of $\gamma$ exists. The loop does not
terminate at a finite $\gamma_{\infty}$, but its residual drive decays like
$n^{-1}(\log n)^{-2/3}$ and is negligible well before the end of training
(Corollary~\ref{cor:finite-horizon}).
\end{enumerate}


\paragraph{Multi-layer amplification.}
In an $L$-layer network each BN layer applies its own projector $P^{(l)}$ and
ratio $\gamma_l/\sigma_l$. The backward gradient at the input accumulates
contributions from all of them, with effective amplification involving terms of
the form $\prod_{l}(\gamma_l/\sigma_l)\kappa^{(l)}_{\mathrm{eff}}$. Since
$\|P^{(l)}\|_2=1$, each layer's projector is a contraction, so the product form
gives an upper bound; our observation that $\gamma/\sigma$ is largest in early
layers (Figure~10) suggests the total amplification exceeds what a single-layer
analysis predicts.

\paragraph{Implications for differential privacy.}
Two consequences follow. First, BN amplifies outlier gradient norms by
$(\gamma/\sigma)^{2}\kappa_{\mathrm{eff}}^{2}$, so in DP-SGD either the clipping
threshold $C$ must be raised (increasing the noise required for the same
guarantee), or disproportionately more outlier information is destroyed by
clipping. Second, and more fundamentally,
Remark~\ref{rem:dpsgd} shows that under BN a sample's gradient contribution is
not independent of its batch, so the per-sample clipping primitive that DP-SGD
assumes is not cleanly available. This suggests a structural, not merely
quantitative, tension between BN and DP training.

\begin{algorithm} [t]
	\SetKwInOut{Input}{Input}
	\SetKwInOut{Output}{Output}
	\Input{Dataset $D$, classes set $C$, flipping ratio $k$}
	\Output{Set $D'$ that contains label flipping instances}
    Initialize $D'=\emptyset$; \\
    $\delta=|D|\cdot k$; \\
    \While{$|D'|<\delta$}{
    randomly pick $(x,y)\in D$; \\
    randomly pick $y^*\in C/\{y\}$; \\
    $D'= D'.\text{insert}(\{(x,y^*)\})$; \\
    $D=D/\{(x,y)\}$; \\
    }
    $D'=D'.\text{union}(D)$; \\
    \Return $D'$ \;
	\caption{Corrupted label data synthesis}
    \label{ap:alg:hard_memorization}
\end{algorithm}

\begin{algorithm} [t]
	\SetKwInOut{Input}{Input}
	\SetKwInOut{Output}{Output}
	\Input{Dataset $D$, other dataset $D^*$, label class $C$, noisy data ratio $k$}
	\Output{Set $D'$ that contains noisy instances}
    randomly pick a specific label $y\in C$; \\
    randomly select $X^*=\{x : (x, .)\in D^*\}$; \\
    $\delta=|D|\cdot k$; \\
    initialize $D'=\emptyset$; \\
    initialize $i=0$; \\
    \While{$i < \delta$}{
        randomly pick $x\in X^*$; \\
        $D'=D'.\text{insert}(\{(x,y)\})$; \\
        $i \leftarrow i + 1$;
    }
    $D'=D'.\text{union}(D)$;\\
    \Return $D'$ \;
	\caption{OOD data synthesis}
    \label{ap:alg:soft_memorization}
\end{algorithm}
\section{Corrupted data synthesis pseudos}
\label{sec:ap:pseudos}
We perform corrupted data and out-of-distribution data generation. Details of these algorithms are shown in Algorithm \ref{ap:alg:hard_memorization} and Algorithm \ref{ap:alg:soft_memorization}.


\section{Detailed experiments analysis}
\label{sec:ap:detailed_analysis}

\subsection{Memorization under controlled budgets}
\noindent\textbf{Disentangling the BN from optimization budget, testing loss, and utility confound.} We compare models incorporating BN and without BN that have the same computational budget, including optimization steps, testing loss, and utility. Figure \ref{fig:fixed_control} reveals that BN models consistently memorize corrupted samples with higher values than no-BN models over diverse datasets. Specifically, we compute the margin of memorization between BN models and no-BN models given the same budget, and then we report the margin distribution in Figure \ref{fig:fixed_control_hist}. These distributions skew toward positive values, indicating that BN models have a higher capacity for noisy data memorization than no-BN models.

\begin{figure}[!h]
    \centering
    \includegraphics[width=\columnwidth]{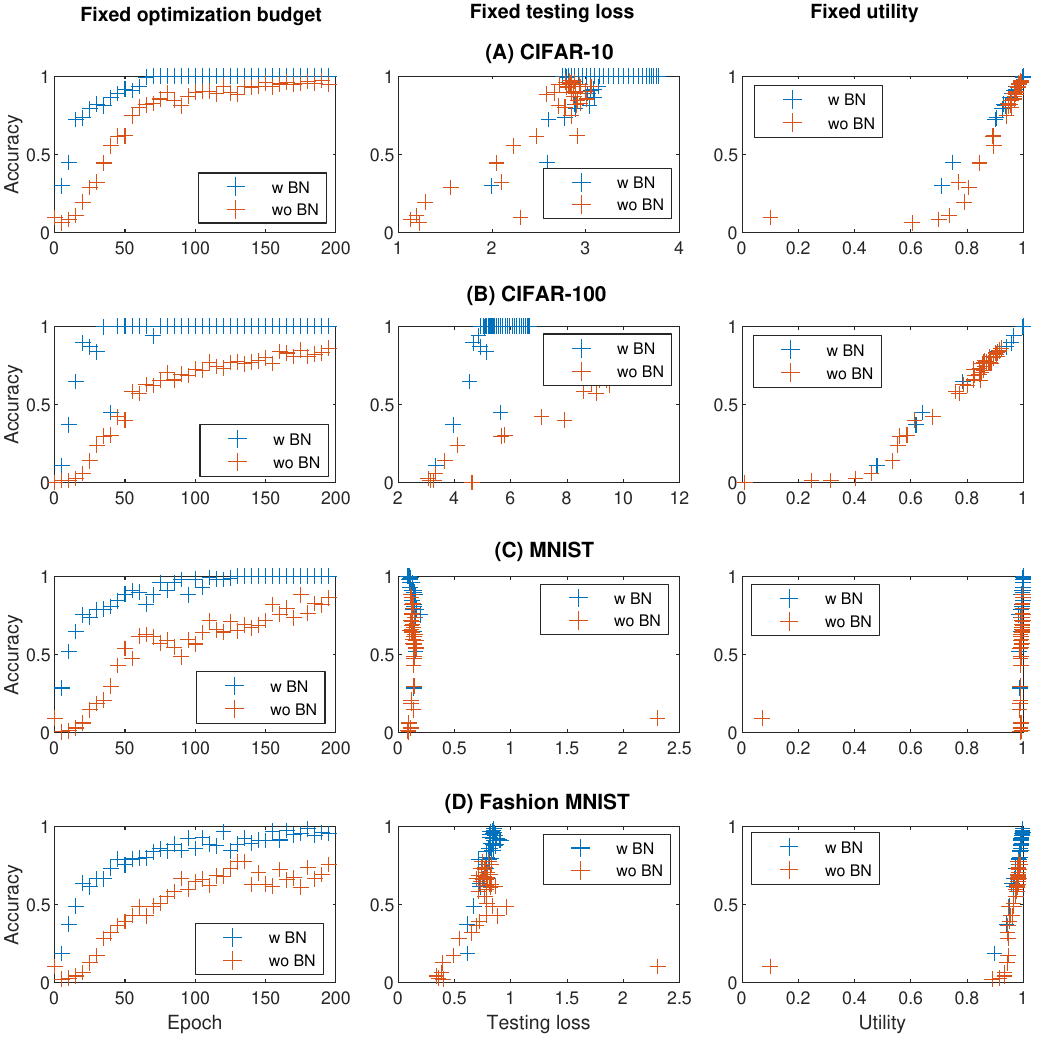}
    \caption{Distribution of the difference between BN models and no-BN models given a fixed budget of aspects.}
    \label{fig:fixed_control}
\end{figure}

\begin{figure}[!h]
    \centering
    \includegraphics[width=\columnwidth]{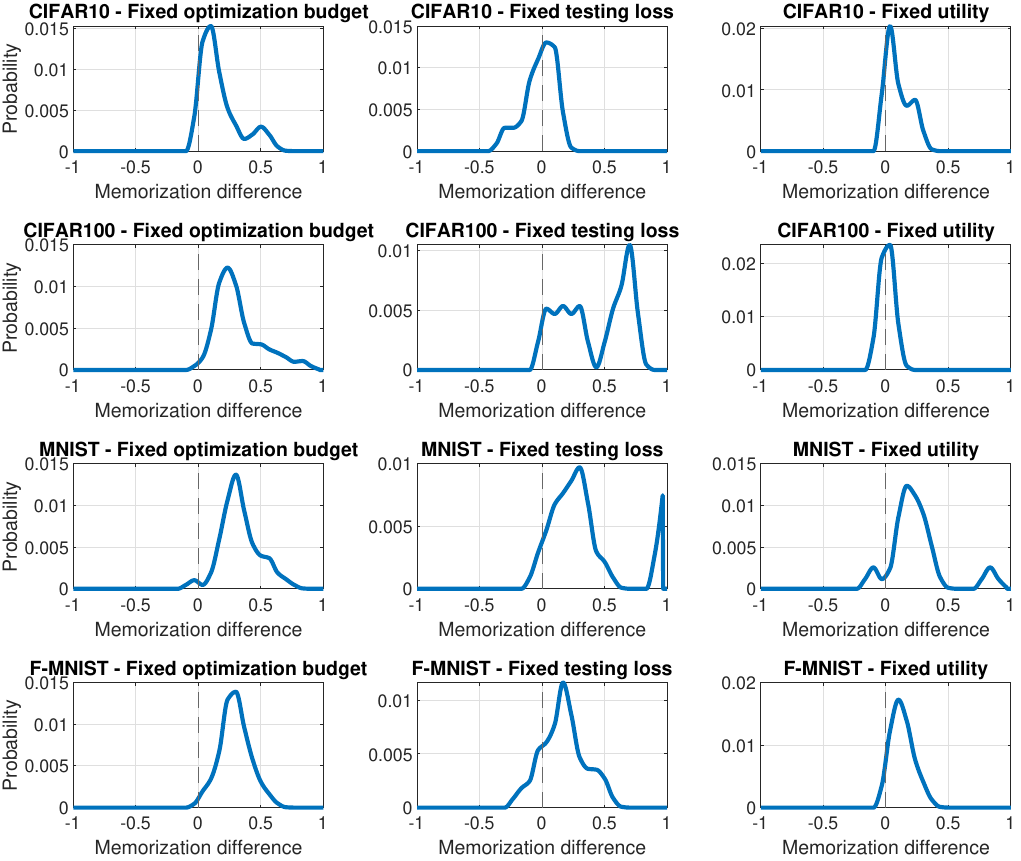}
    \caption{Distribution of the difference between BN models and no-BN models given a fixed budget of aspects.}
    \label{fig:fixed_control_hist}
\end{figure}

\noindent\textbf{Disentangling the BN from a loss confound.} To disentangle the impact, we analyze the loss-normalized influence in Figure \ref{fig:loss-matched}, which shows consistent results: a higher BN loss-normalized gradient for tail samples. We record the per-sample loss at the same training checkpoint, time-averaged over training. Pooling samples across BN and NoBN models, we compute the mean gradient norm separately for each model.
\begin{figure}[!h]
    \centering
    \vspace{-10pt}
    \includegraphics[width=0.5 \textwidth]{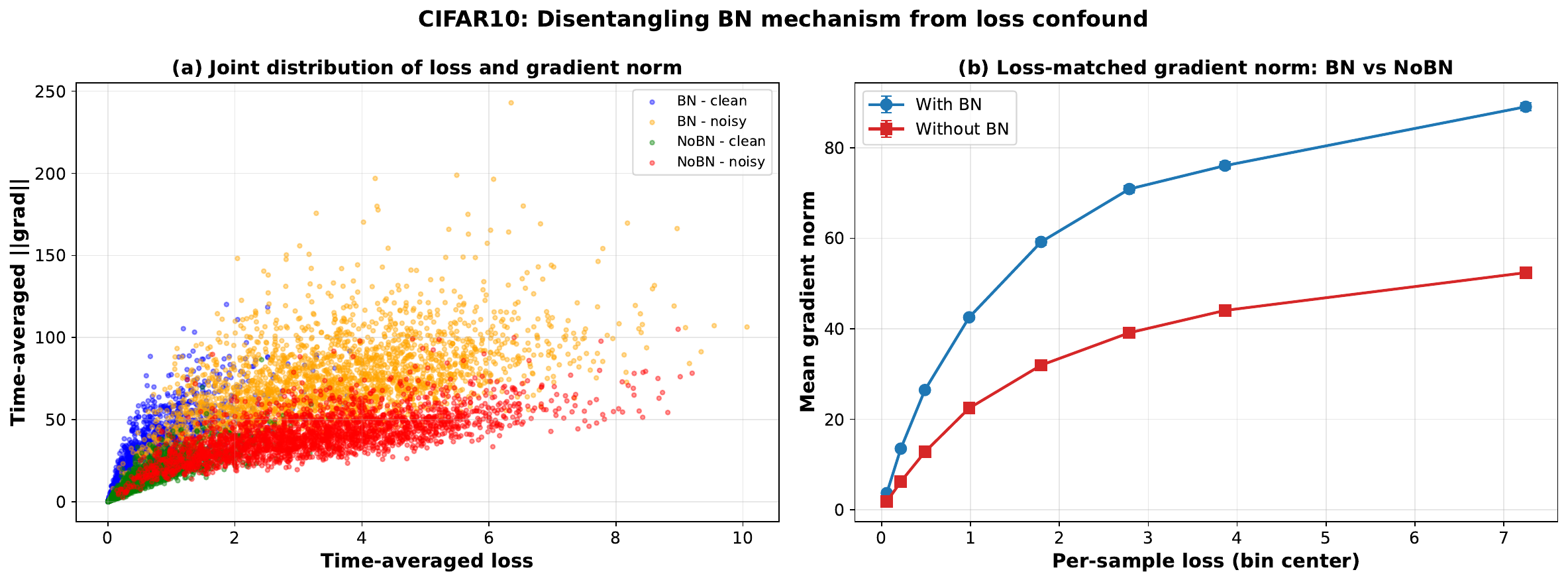}
    \vspace{-10pt}
    \caption{\textbf{Disentangling the BN from a loss confound.}
    (a)~Joint distribution of time-averaged per-sample loss and grad norm
    on CIFAR-10 (k=5\%); 
    (b)~Loss-matched comparison of mean grad norm.}
    \vspace{-10pt}
    \label{fig:loss-matched}
\end{figure}

\subsection{\texorpdfstring{Extended analysis of $\gamma/\sigma$ ratios}
{Extended analysis of gamma/sigma ratios}}
\label{sec:ap:bn_params}
Figure \ref{fig:ap:parameter_ratio} shows the ratio between the scale $\gamma$ and the variance parameter $\sigma$ of BN in Lenet, training across multiple datasets and multiple noisy factor $k$. Over multiple settings, the results consistently confirm that the ratio $\gamma/\sigma$ is much greater than 1 on all model layers. Connected with Proposition~3, the ratio highlights the strong gradient norm amplification on corrupted samples, therefore increasing the model memorization of these data points.

\subsection{Learning characteristics analysis}
\label{subsec:ap:learning_curve}

Figure \ref{fig:ap:loss} presents training loss curves on CIFAR10 of Lenet, Resnet34, Resnet50 over various values of $k\in \{5\%, 10\%, 15\%, 20\%, 25\%, 30\%\}$. There are four curves for each sub-figure, including the accuracy of models with and without BN on a clean set (nonmem-set) and a noisy set (mem-set). Since the trends of models with BN decrease dramatically and below the curves of models without BN, it highlights the fast convergence speed and memorization for models incorporating BN on both the clean and noisy set.

\begin{figure}[t]
    \centering
    \includegraphics[width=\columnwidth]{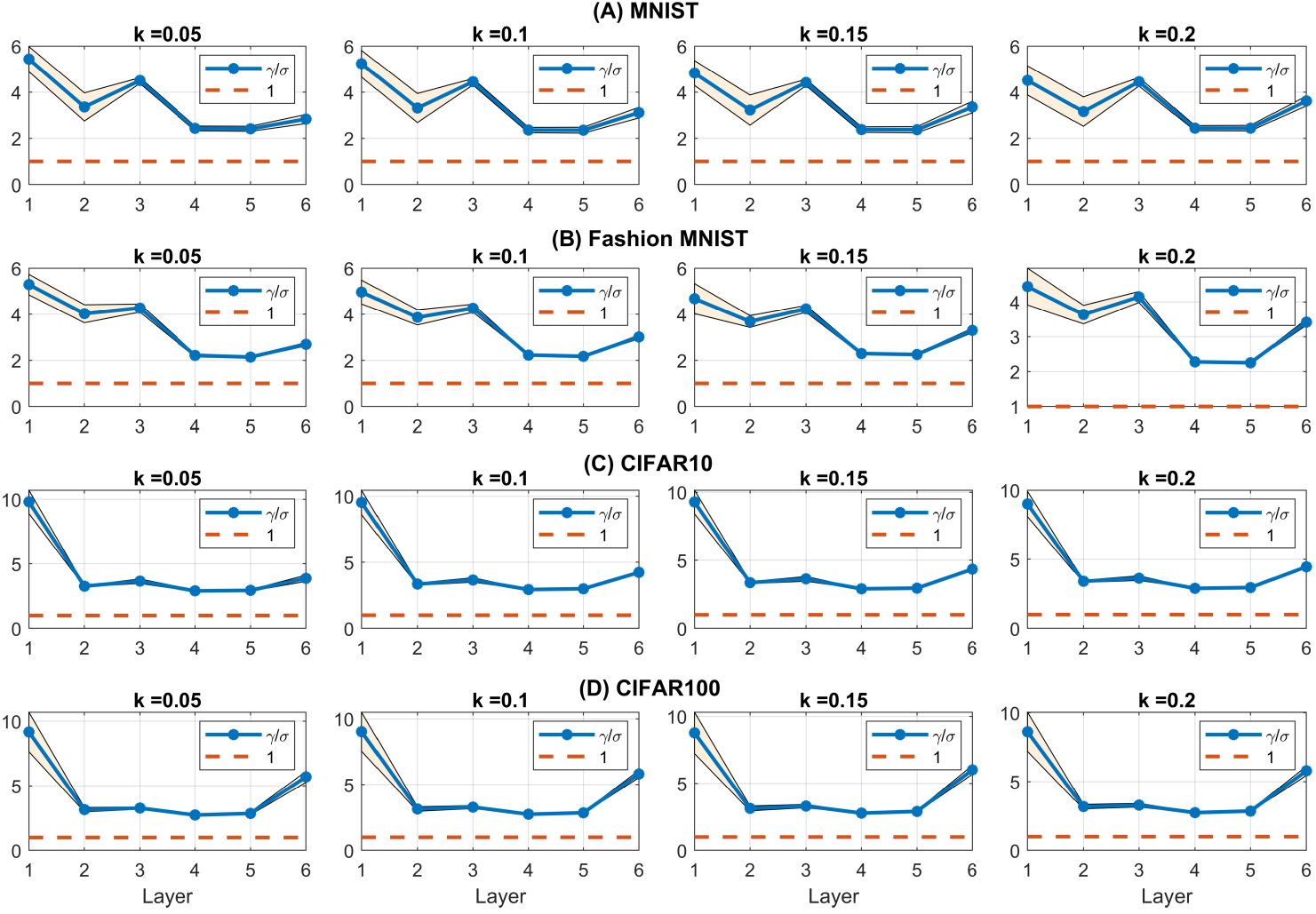}
    \caption{Ratio between two parameters ($\gamma$, $\sigma$) of Batch Norm layers of Lenet training on datasets with $k=0.1$. (Median values of repeated five running times)}
    \label{fig:ap:parameter_ratio}
\end{figure}

\begin{figure}[t]
    \centering
    \includegraphics[width=1\columnwidth]{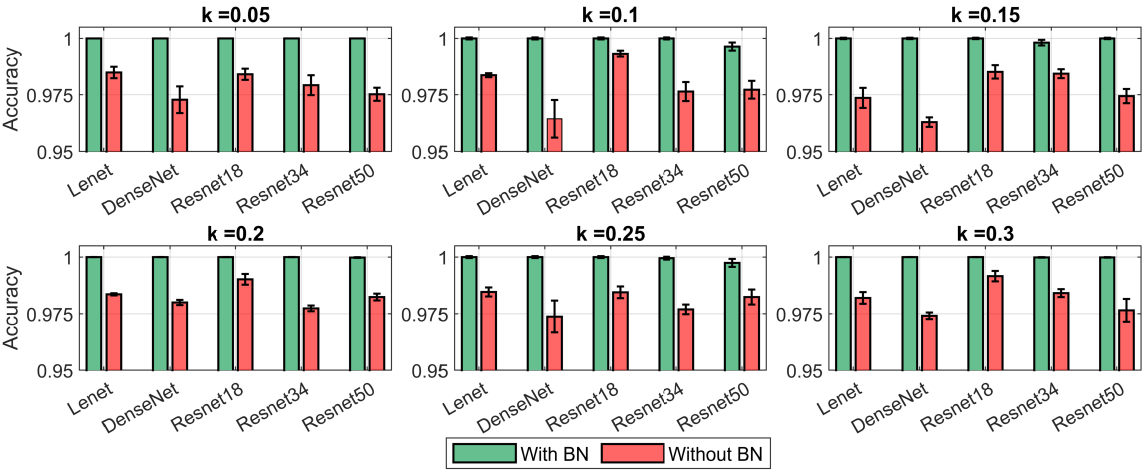}
    \caption{Comparison of memorization performance of model training with and without BN on natural OOD data across different neural architectures.}
    \label{fig:ap:ood_acc}
\end{figure}

\subsection{Extended analysis of memorization on OOD data}
\label{subsec:ap:diff_hyperparams}
Figure \ref{fig:ap:ood_acc} shows the memorization capacity of the BN model and the no-BN model in OOD data distributions with respect to various values $k\in \{0.05, 0.1, 0.15, 0.2, 0.25, 0.3\}$ with different model architectures. Generally, our observation is consistent across OOD ratios, where the performance of models incorporating BN is higher than that of models without BN by more than $3\%$. It indicates that BN increases the memorization effects on the OOD set.

\subsection{MIA on baseline datasets (without OOD)}
\label{subsec:mia_w/o}
\noindent\textbf{Algorithm description.} Along with the study of the privacy attack on OOD datasets, we further explore the sensitivity of models integrating BN solely on clean datasets. Intuitively, this setting encourages the MIA algorithm to concentrate on samples that are naturally memorized by the model's viewpoint rather than manually created noisy samples by data viewpoints. Specifically, ResNet-based architectures are trained on the clean CIFAR-10 dataset. After that, we exploit the LIRA attack and measure the success rate by AUC metrics to compare the vulnerable leakage between BN models and no-BN models.\\

\noindent\textbf{Results.} Our results in Figure \ref{fig:no_mem_mia} illustrate that even in the clean setting, models including BN achieve higher attacking AUC at least $1\%$ than models excluding BN. This result indicates that models incorporating BN are more privacy vulnerable against MIA than models with no BN.

\begin{figure}[t]
    \centering
    \includegraphics[width=0.5\linewidth]{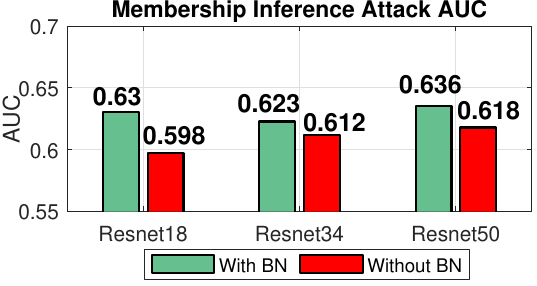}
    \caption{Comparison of memorization performance of model training with and without BN on the clean dataset across different neural architectures.}
    \label{fig:no_mem_mia}
\end{figure}

\begin{figure}[t]
    \centering
    \includegraphics[width=\columnwidth]{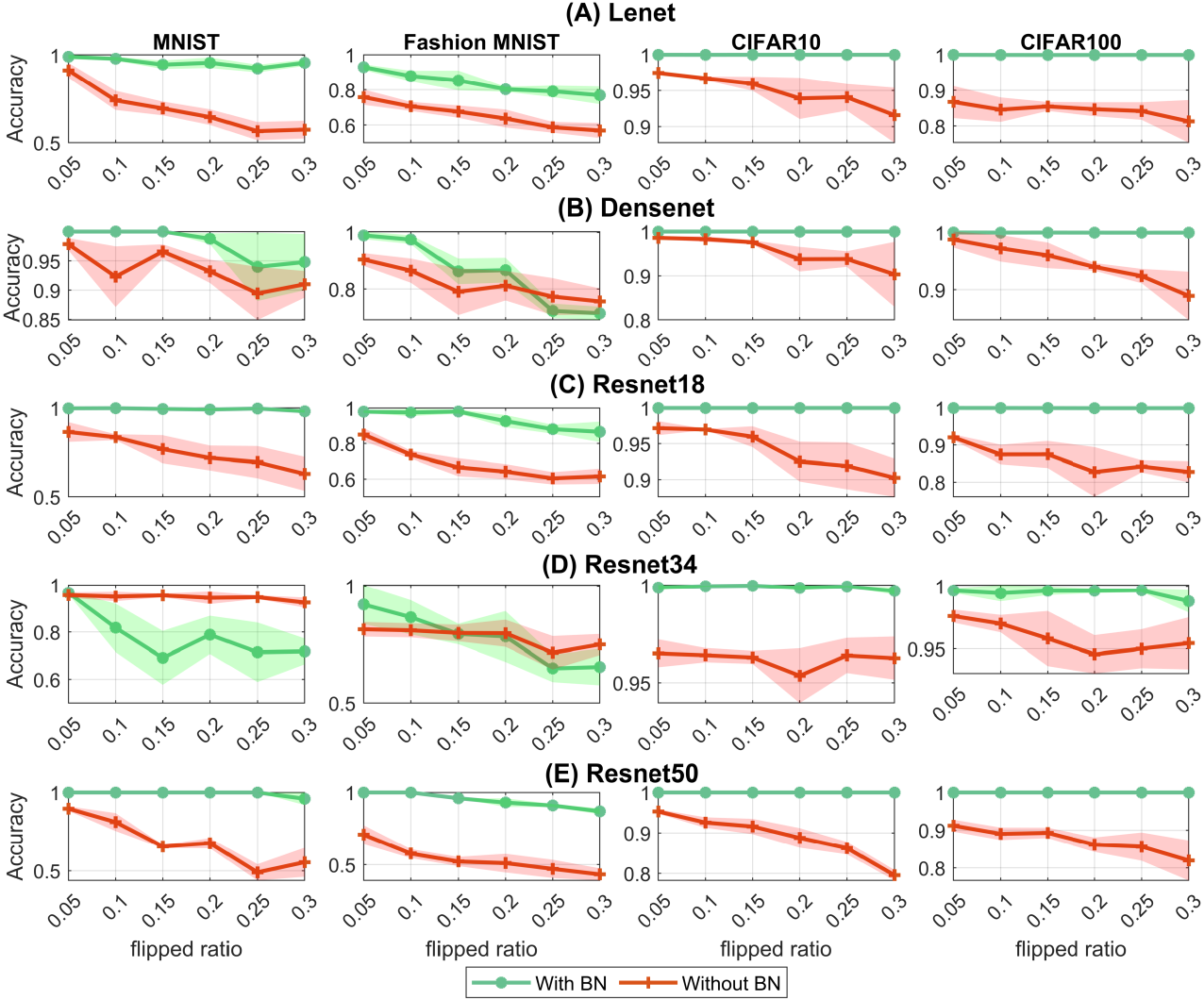}
    \caption{Accuracy of Lenet, DenseNet, Resnet18, Resnet34 with and without BN on noisy set of MNIST, FashionMNIST, CIFAR10 and CIFAR100 using $10^{-3}$ learning rate (Best viewed in color)}
    \label{fig:ap:hard_memorization}
\end{figure}

\subsection{MIA on models given the same clean accuracy}
\textcolor{blue}{Figure~\ref{fig:mia_clean} compares MIA performance on BN models and no-BN models with the same clean accuracy. No-BN models with clean accuracy close to BN models show lower privacy leakage than BN models. Therefore, the privacy gap remains even with the same clean accuracy control.}

\subsection{MIA results statistical comparison}
Figure ~\ref{fig:mia_stats} shows that the variance in MIA across architectures and ratios is similar between BN and no-BN. \textcolor{blue}{To make a robust comparison, we conduct a one-tailed t-test with the hypothesis as follows: \\
- $H_0$: MIA performs equally on BN and no-BN models. \\
- $H_1$: MIA performs worse on no-BN than BN models. \\
Specifically, there are five neural architectures and six corrupted ratios, leading to total 30 results for each group with 29 degrees of freedom. It yields a p-value of 0.11\% ($<$ 5\%), rejecting $H_0$ and accepting $H_1$.}

\section{Experimental setup details}
\label{sec:ap:implementation_details}
\vspace{2mm}\noindent\textbf{Hyperparameters.} We train all models for 200 epochs on the respective classification datasets. Adam optimizer is used with a fixed learning rate equal to $1e^{-4}$ and a batch size of $256$. We disable data augmentation and dropout layers to mitigate the randomness during training. To ensure the robustness of our results, each experiment is repeated five times, and we report the mean and standard deviation of the obtained performance metrics.

\vspace{2mm}\noindent\textbf{MIA settings.} We use 32 shadow models with architectures matching the target model to approximate its behavior. As a SOTA MIA method, LIRA motivates our choice over loss-threshold or metric-based attacks to measure privacy leakage in models with and without BN layers.

\vspace{2mm}\noindent\textbf{Evaluation metrics.} We employ different evaluation metrics to analyze the effect under three distinct scenarios. For \textit{noisy data influence}, we use classification accuracy to quantify the degree of memorization of models on noisy datasets. For \textit{membership inference attack influence} setting, the Receiver Operating Characteristic (ROC) curve and Area Under the Curve (AUC) are used to evaluate the memorization effects under an adversarial setting. Finally, we adopt sample-wise gradient metrics for \textit{per-sample influence} setting, which quantify the influence of each sample on the learned model.

\section{Ablation studies}
\label{sec:ap:diff_hyperparams}
\subsection{Effect of learning rates} 
In order to assess the behavior of models incorporating BN on a memorization setting, we conduct an experiment of BN effect on memorization with $lr=1e^{-3}$, which is shown in Figure \ref{fig:ap:hard_memorization}. Divergent models are exploited, such as Lenet (A), Densenet (B), Resnet18 (C), Resnet34 (D), and Resnet50 (E), with varied corrupted label ratios $k\in\{5\%, 10\%, 15\%, 20\%, 25\%, 30\%\}$. The observation is still similar to our finding in the main paper. The no-BN models show low performance on noisy data ($18/20$ cases), hence indicating a weak memorization capacity compared to models with BN layers.

\begin{figure}[t]
    \centering
    \includegraphics[width=\columnwidth]{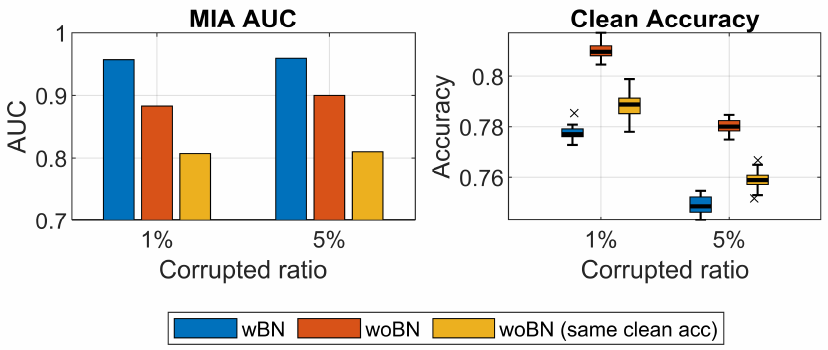}
    \caption{MIA attack on BN models and no-BN models under the same clean accuracy}
    \label{fig:mia_clean}
\end{figure}

\begin{figure}[t]
    \centering
    \includegraphics[width=0.7\columnwidth]{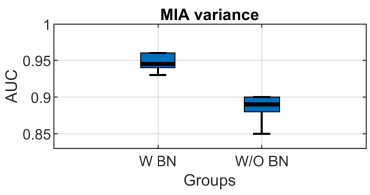}
    \caption{Statistical comparison on MIA between BN models and no-BN models}
    \label{fig:mia_stats}
\end{figure}

\subsection{Effect of batch sizes}

\begin{figure}[!h]
    \centering
    \includegraphics[width=\columnwidth]{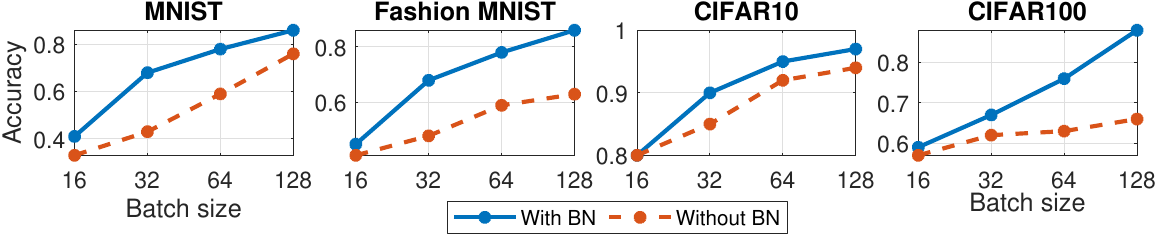}
    \caption{Batch size comparison on ResNet18 (5\%).}.
    \label{fig:batchsize}
\end{figure}
We study the effect of batch size in the forced memorization setting (Fig~\ref{fig:batchsize}). The results confirm increased memorization in BN models across batch sizes.

\subsection{Effect of optimizers}
\begin{figure}[!h]
    \centering
    \includegraphics[width=\columnwidth]{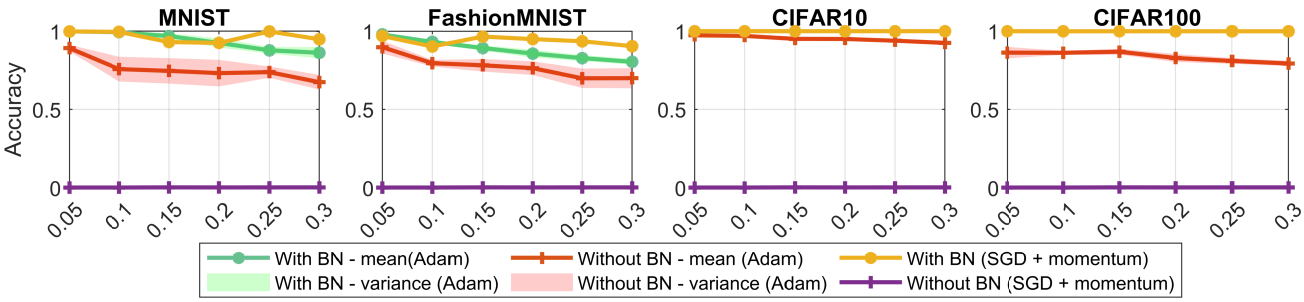}
    \caption{Performance of ResNet18 with different optimizers}
    \label{fig:sgd}
\end{figure}

Figure \ref{fig:sgd} confirms that models with BN (green, yellow lines) have a stronger memorization accuracy than models without BN (red, purple lines).   

\subsection{Effect of normalization techniques}
\begin{figure}[!h]
    \centering
    \includegraphics[width=0.9\linewidth]{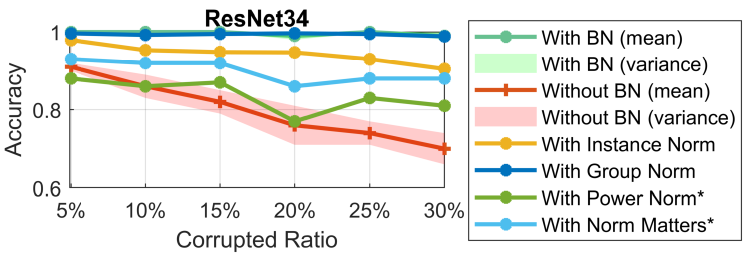}
    \caption{Performance of models with different normalization techniques over multiple corrupted ratios}
    \label{fig:norm}
\end{figure}

\label{subsec:ap:normalization}
We analyze the memorization capacity of ResNet18/34 with GroupNorm (GN), InstanceNorm (IN), LayerNorm (LN), and BatchNorm (BN) on CIFAR-10. Figure~\ref{fig:norm} shows \textbf{all normalization strategies consistently amplify memorization} compared to no normalization across all corruption ratios $k$.

\begin{table*}[t]
\centering
\begin{tabular}{c|c|c|c|c}
\toprule
Method           & MNIST           & FashionMNIST   & CIFAR10   & CIFAR100            \\ \midrule
ViT ($k=1\%$) & \textbf{0.993$\pm$0.001} & \textbf{0.993$\pm$0.003} & \textbf{0.997$\pm$0.002} & \textbf{1.0$\pm$0.0}\\
ViT w/o LN ($k=1\%$) & 0.055$\pm$0.006 & 0.055$\pm$0.01 & 0.259$\pm$0.037 & 0.494$\pm$0.11 \\ \midrule
ViT ($k=5\%$) & \textbf{0.992$\pm$0.001} & \textbf{0.995$\pm$0.003} & \textbf{0.999$\pm$0.002} & \textbf{0.999$\pm$0.0} \\
ViT w/o LN ($k=5\%$) & 0.135$\pm$0.065 & 0.157$\pm$0.098 & 0.212$\pm$0.033 & 0.466$\pm$0.243 \\ \midrule
\end{tabular}
\caption{Performance of ViT with the inclusion and removal of Layer Norm on the corrupted labels dataset.}
\label{tab:ap:vit}
\end{table*}

\subsection{Effect of model architectures}
\label{subsec:ap:model_architectures}
\noindent\textbf{Algorithm description.} In this section, we additionally study the privacy vulnerability of the Layer Normalization (LN) layer, which has been widely used on transformer-based architectures---the backbone of recent foundation models. The key advantage of LN compared to BN is that it is independent on the mini-batch size and consistent between training and testing times, hence stabilizing the training procedures, especially on sequential models. In particular, LN computes mean and standard deviation statistics through individual neurons in the layer rather than through samples of a mini-batch like BN. To analyze the impact of LN on privacy, we adopt Vision Transformer (ViT) with the inclusion and removal of LN and train on the corrupted labels dataset.

\noindent\textbf{Results.} Table \ref{tab:ap:vit} presents the accuracy of ViT with and without LN on corrupted label datasets with the noisy ratio $k\in\{0.01, 0.05\}$. The result shows that ViT with LN is more accurate compared to the one without LN. Models incorporating LN can fit very well on the out-of-distribution data, hence demonstrating the capacity of memorization dependent on LN layers.

\subsection{Effect of complex datasets}
\label{subsec:ap:complex_dataset}
\begin{figure}[!h]
    \centering
    \includegraphics[width=0.8\columnwidth]{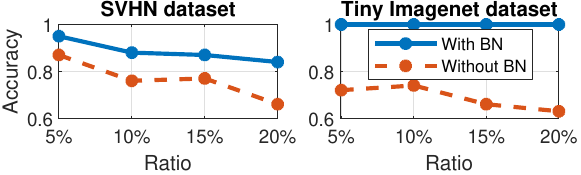}
    \caption{Memorization accuracy in SVHN and Tiny ImageNet}
    \label{fig:complex}
\end{figure}

We report the performance of ResNet34 on two complex datasets: SVHN and TinyImageNet (c.f. Fig~\ref{fig:complex}). These results are consistent with our observation that BN increases memorization.



\section{Trade-off between memorization and generalization}
\label{sec:ap:tradeoff}

\begin{figure}[!h]
    \centering
    \includegraphics[width=\columnwidth]{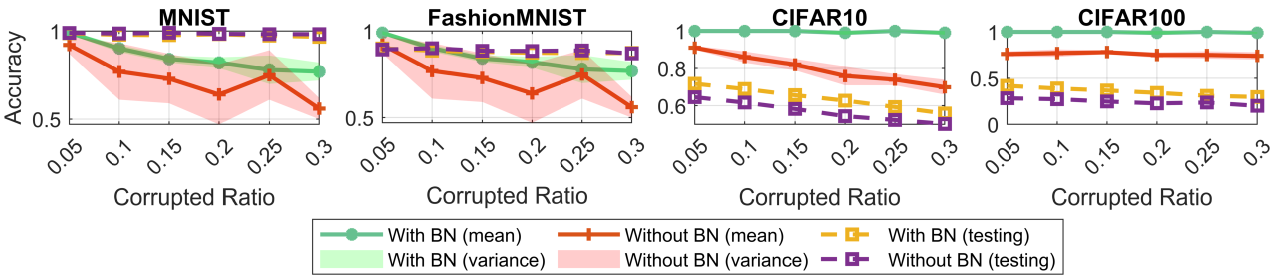}
    \caption{Performance of ResNet34 on various datasets (Best view in color)}
    \label{fig:generalisation}
\end{figure}

\begin{figure}[h!]
    \centering
    \includegraphics[width=0.45\columnwidth]{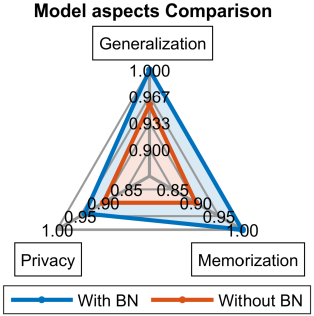}
    \caption{ResNet34 generalization, memorization and MIA performance}
    \label{fig:interplay}
\end{figure}
Figure~\ref{fig:generalisation} shows test accuracy alongside memorization accuracy. The gap in test accuracy is smaller than the memorization gap, indicating that removing BN sacrifices modest generalization while substantially reducing privacy risk. Moreover, Fig.~\ref{fig:interplay} unifies 3 aspects, comparing BN and no-BN. Increased memorization accuracy corresponds to higher privacy vulnerability under MIA but higher accuracy on the clean dataset.

\section{Per-sample influence}
\label{sec:ap:per_sample}

\noindent\textbf{Methodology.} We adopt a per-sample influence analysis framework inspired by prior work on understanding dataset memorization through curvature and gradient-based metrics\cite{koh2017understanding,feldman20}. In particular, we approximate the influence of each training point on the model parameters by computing the $\ell_2$ norm of the gradient of the loss with respect to the model parameters, $|\nabla_\theta \ell(x_i, y_i)|_2$. This measure has been shown to correlate with sample difficulty, out-of-distribution status, and memorization propensity, providing a practical proxy for quantifying the impact of individual data points on training dynamics. We design two experiments on MNIST, FashionMNIST, CIFAR10, and CIFAR100 where $1\%$ and $5\%$ of the training set is corrupted by randomly flipping labels. These mislabeled samples act as atypical instances that the network can only fit by memorization (unintended memorization). For each dataset, two convolutional neural networks with identical architectures are trained to convergence, one including BN layers and the other without BN. After training, we compute per-sample gradient norms for all training points and compare the distributions of clean versus noisy samples across both model variants.

\noindent\textbf{Results.} Our results shown in Figure \ref{fig:grad_dist}  reveal a consistent trend across all datasets. In models with BN, the distribution of gradient norms for noisy samples is significantly shifted towards higher values relative to both clean samples (within the same BN model) and noisy samples in the non-BN model. This indicates that BN accentuates the influence of mislabeled examples, making them stand out more sharply from clean data. In contrast, models without BN exhibit smaller and less distinguishable gradient norms for noisy samples, suggesting weaker memorization signals. These findings are coherent with our previous findings and imply that BN, while beneficial for optimization stability and generalization on clean data, may simultaneously increase the visibility and memorization of atypical samples. Such behavior has important implications for privacy, as amplified memorization signals may increase susceptibility to membership inference and related attacks.

\begin{figure*}[t]
\centering
\begin{subfigure}{0.24\textwidth}
    \centering
    \includegraphics[width=\linewidth]{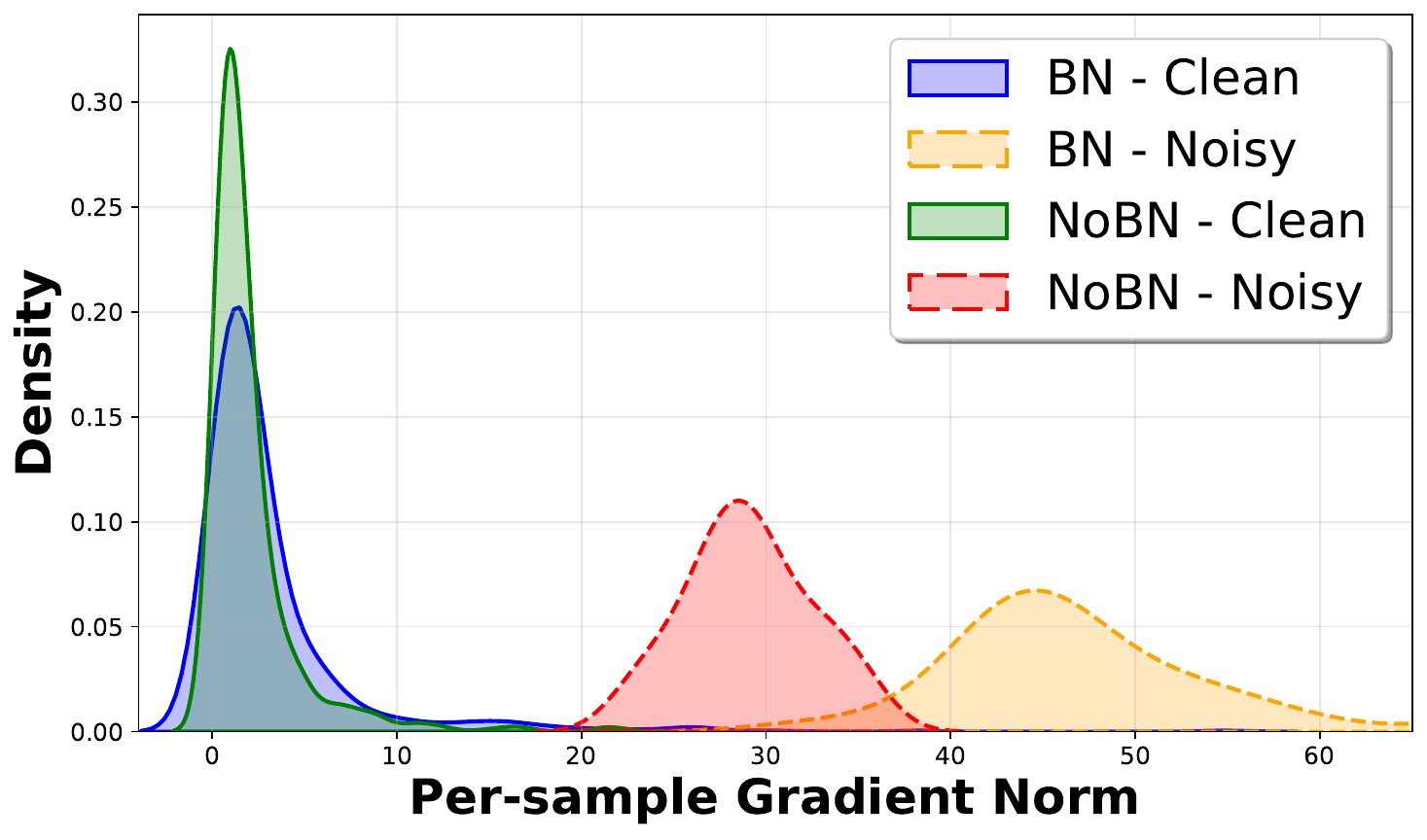}
    \caption{MNIST ($k=5\%$)}
    \label{fig:mnist_kde}
\end{subfigure}
\hfill
\begin{subfigure}{0.24\textwidth}
    \centering
    \includegraphics[width=\linewidth]{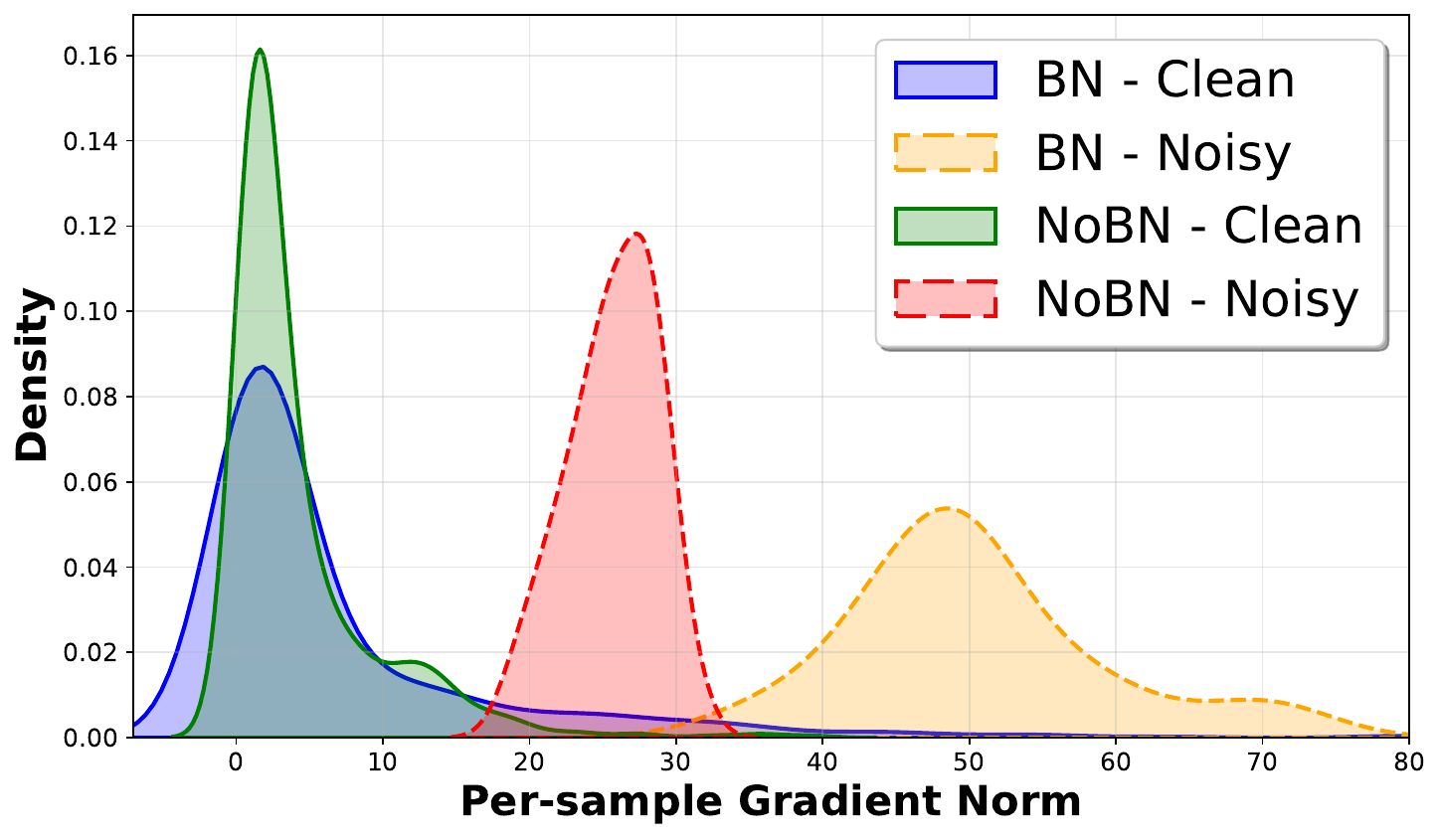}
    \caption{F-MNIST ($k=5\%$)}
    \label{fig:fashion_kde}
\end{subfigure}
\hfill
\begin{subfigure}{0.24\textwidth}
    \centering
    \includegraphics[width=\linewidth]{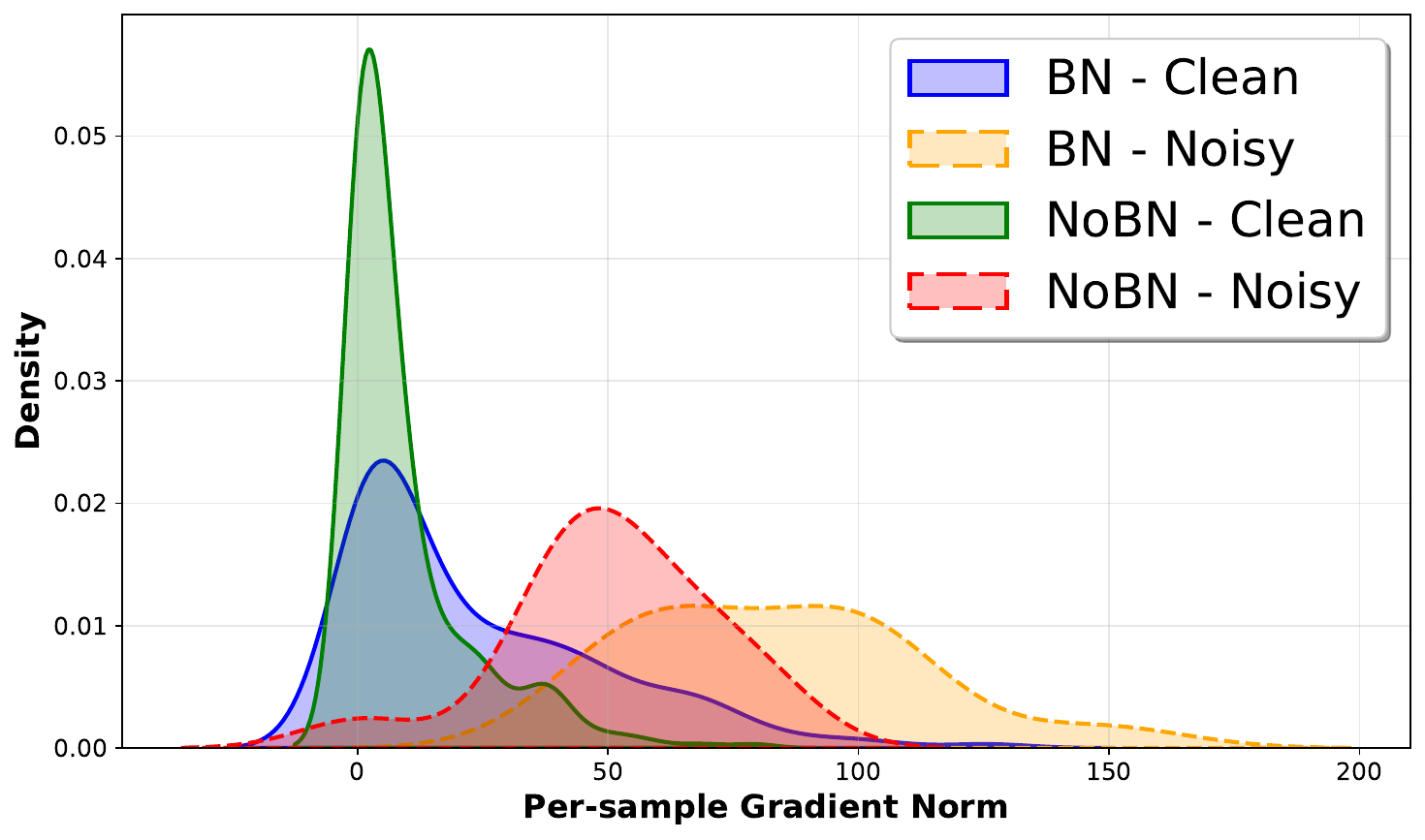}
    \caption{CIFAR10 ($k=5\%$)}
    \label{fig:cifar10_kde}
\end{subfigure}
\hfill
\begin{subfigure}{0.24\textwidth}
    \centering
    \includegraphics[width=\linewidth]{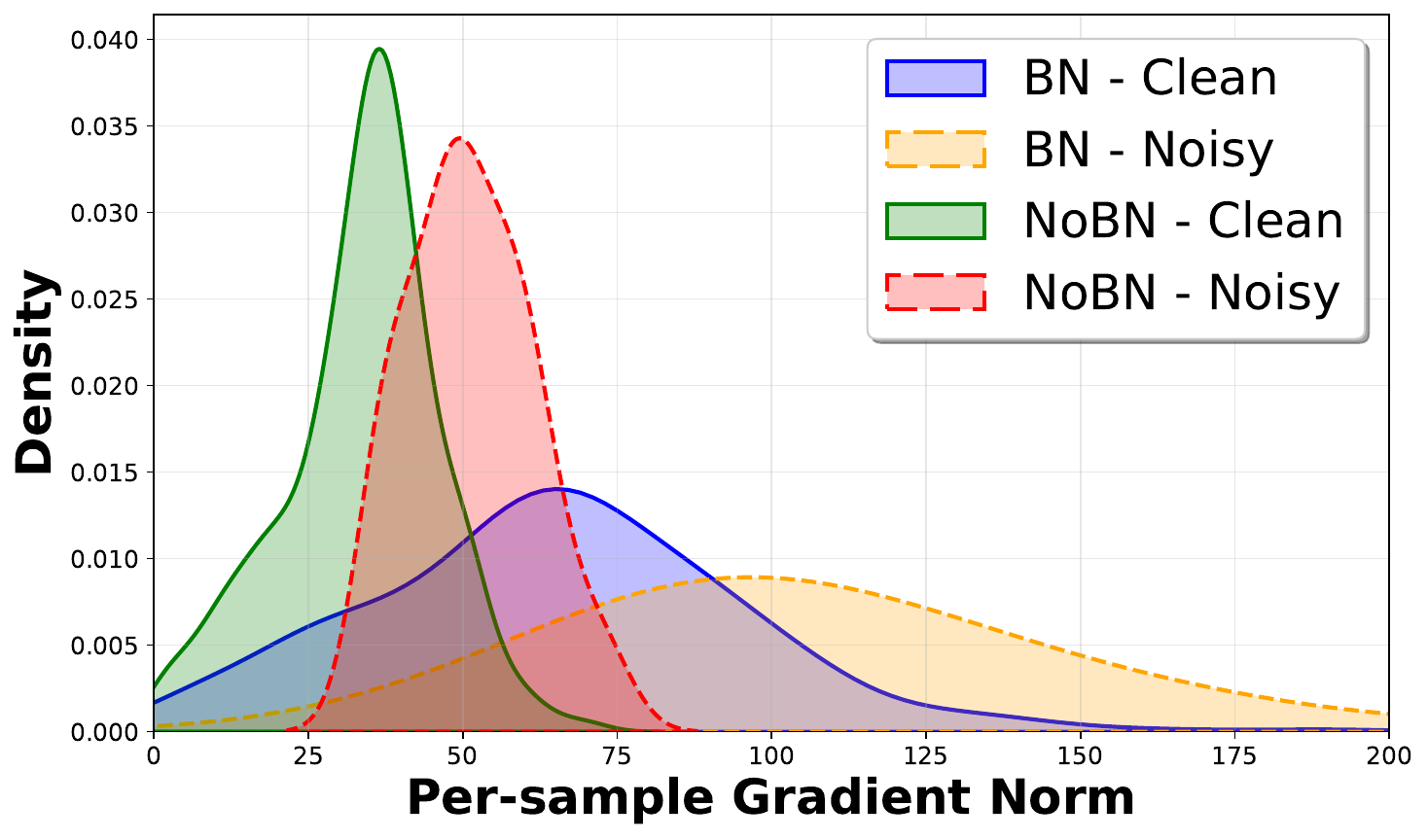}
    \caption{CIFAR100 \textcolor{red}{($k=5\%$)}}
    \label{fig:cifar100_kde}
\end{subfigure}
\hfill
\begin{subfigure}{0.24\textwidth}
    \centering
    \includegraphics[width=\linewidth]{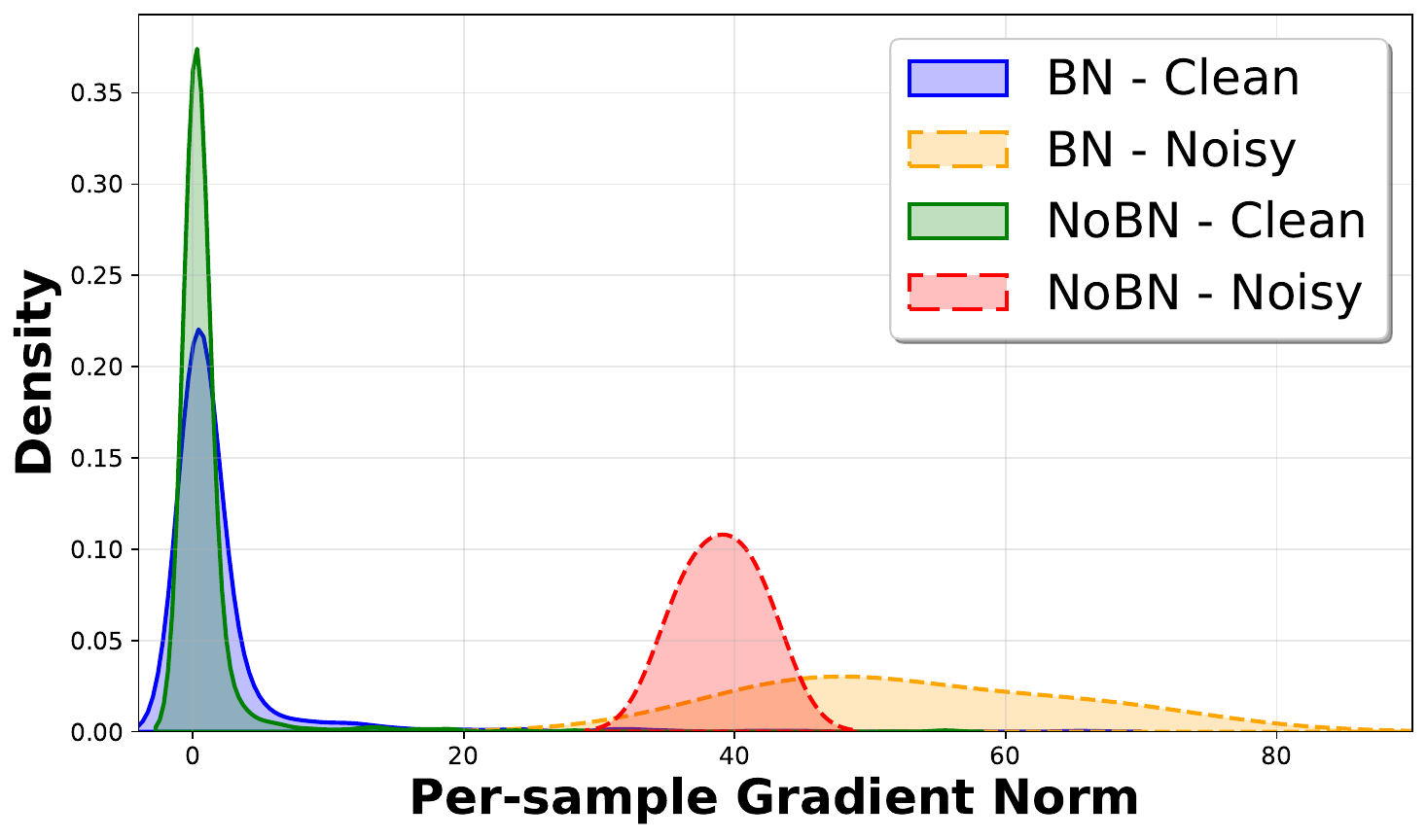}
    \caption{MNIST ($k=1\%$)}
    \label{fig:mnist_kde1}
\end{subfigure}
\hfill
\begin{subfigure}{0.24\textwidth}
    \centering
    \includegraphics[width=\linewidth]{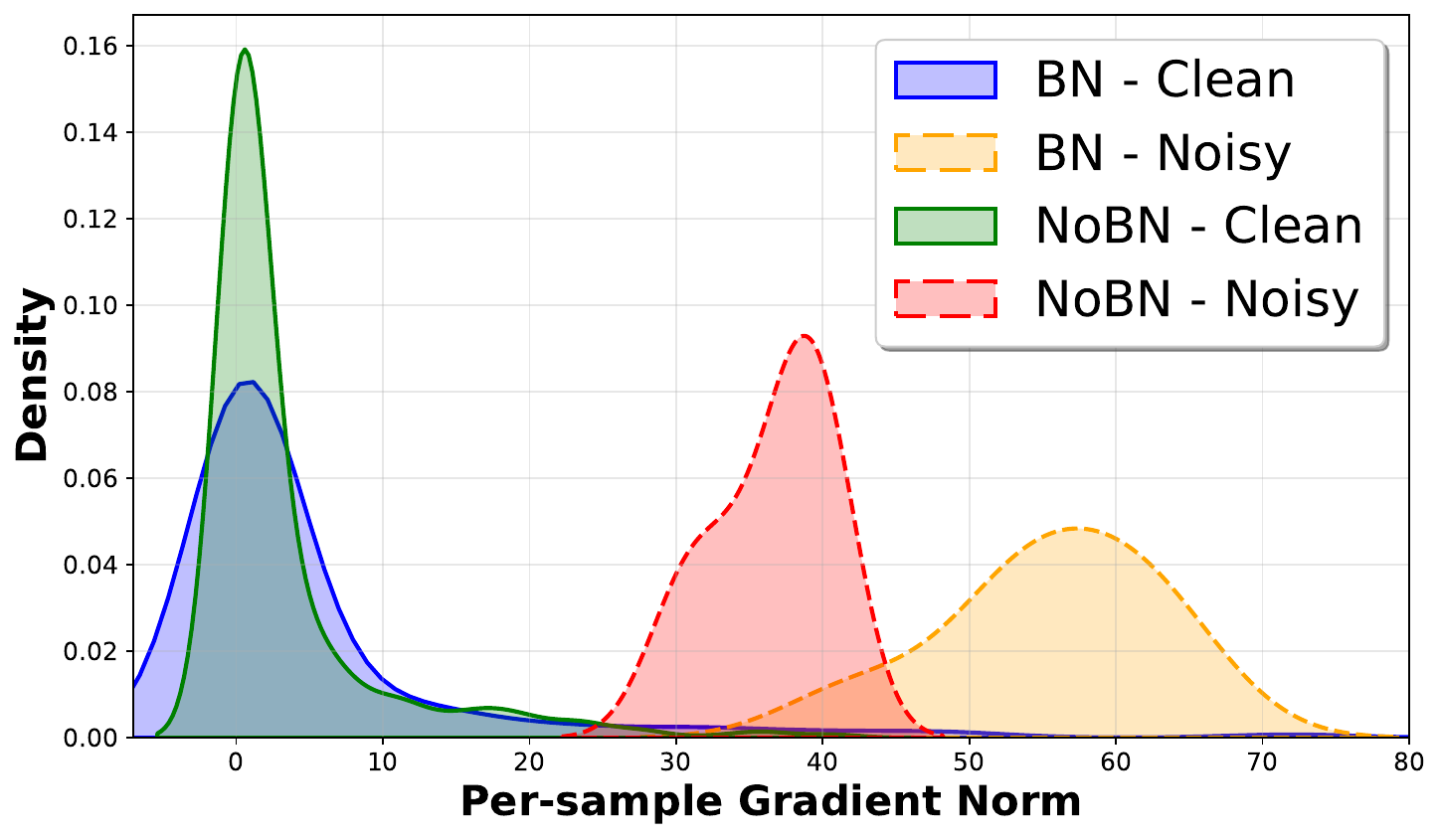}
    \caption{F-MNIST ($k=1\%$)}
    \label{fig:fashion_kde1}
\end{subfigure}
\hfill
\begin{subfigure}{0.24\textwidth}
    \centering
    \includegraphics[width=\linewidth]{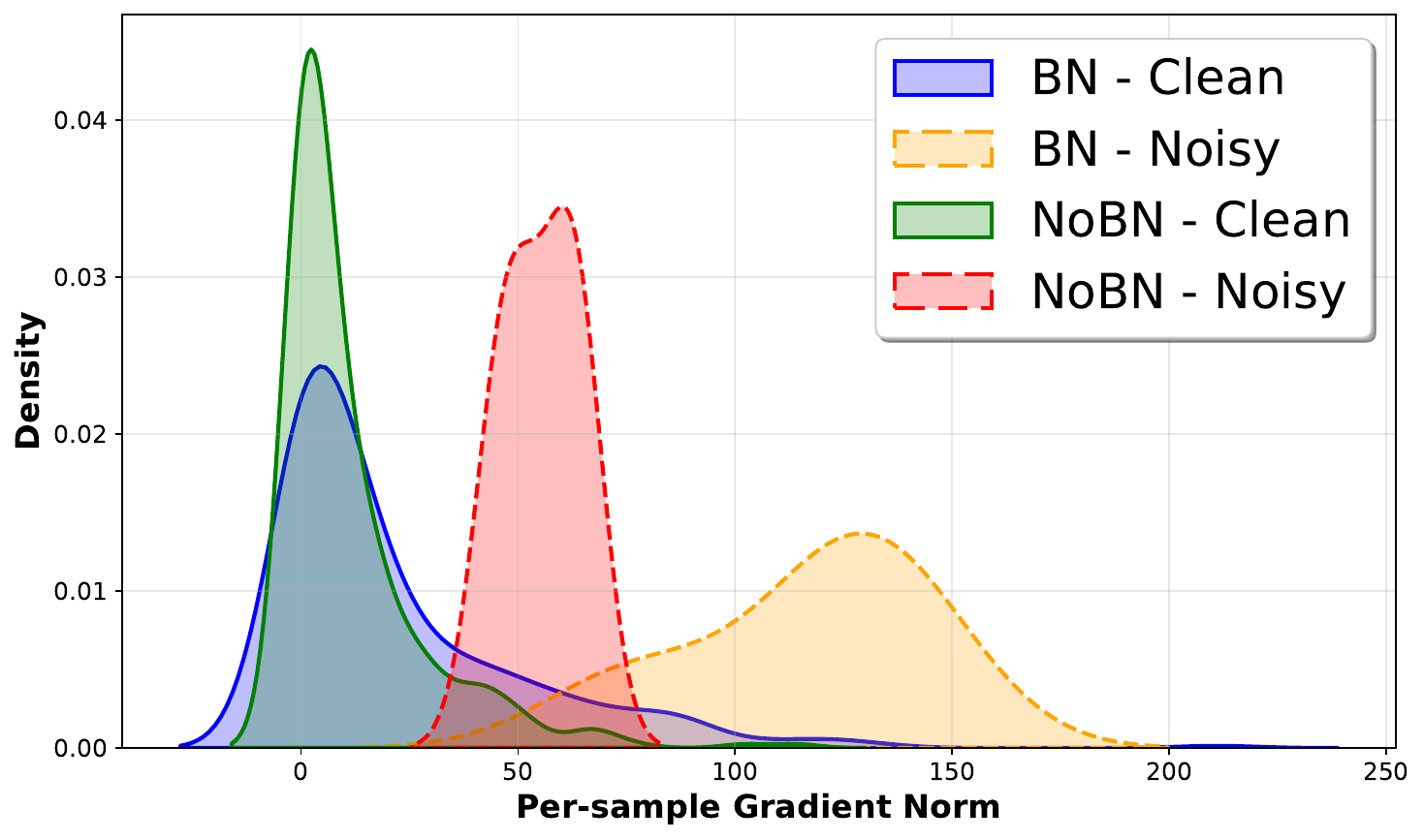}
    \caption{CIFAR10 ($k=1\%$)}
    \label{fig:cifar10_kde1}
\end{subfigure}
\hfill
\begin{subfigure}{0.24\textwidth}
    \centering
    \includegraphics[width=\linewidth]{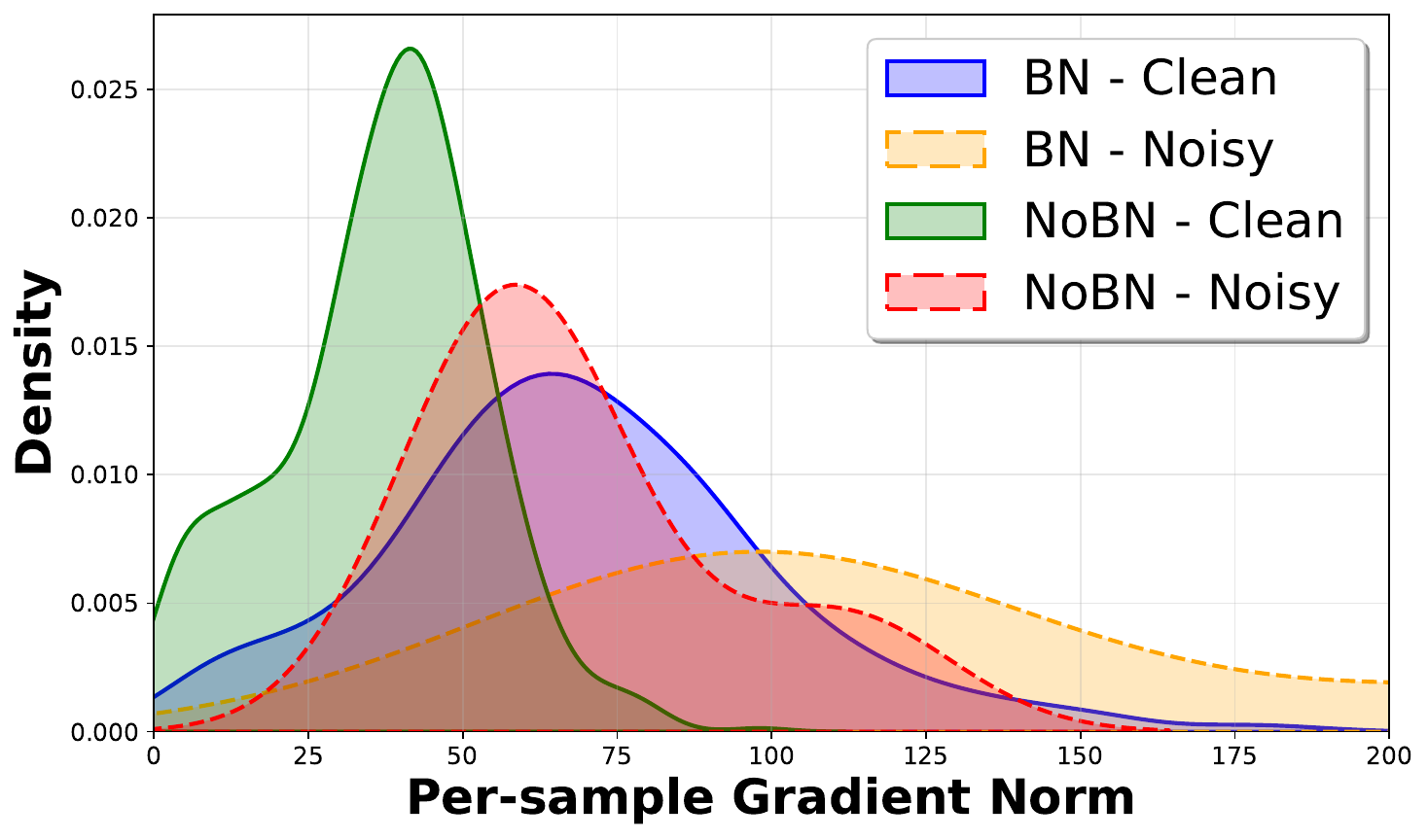}
    \caption{CIFAR100 \textcolor{red}{($k=1\%$)}}
    \label{fig:cifar100_kde1}
\end{subfigure}
\caption{Per-sample gradient norm distributions across different datasets. Each subfigure shows kernel density estimates of gradient norms for clean and noisy samples, comparing models with and without batch normalization. The x-axis represents the per-sample gradient norm, and the y-axis represents density. All datasets contain $5\%$ (top) and $1\%$ (bottom) label noise. (Best viewed in color)}
\label{fig:grad_dist}
\end{figure*}

\begin{figure*}[t]
    \centering
    \includegraphics[width=\textwidth]{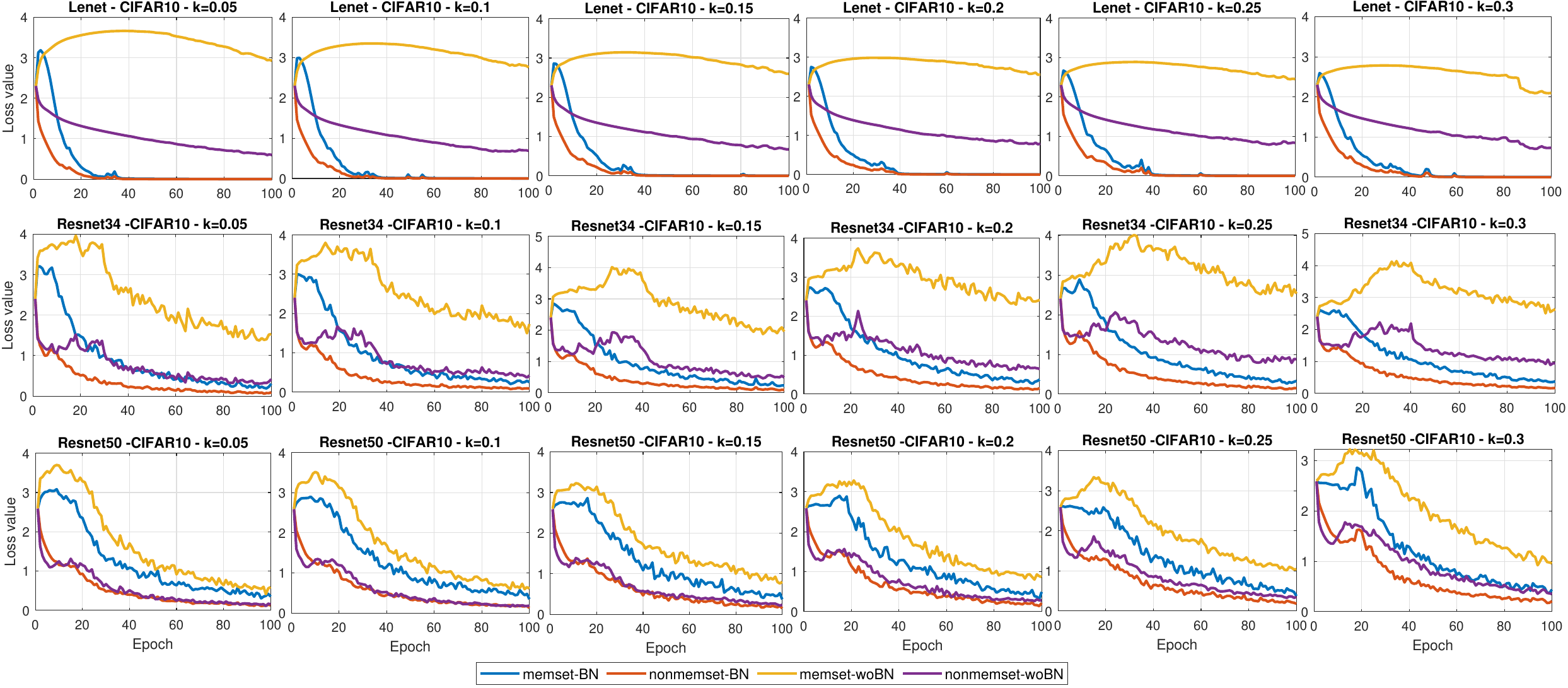}
    \caption{Loss values (a mean of five repetitions) by epoch on a memset (the original data) and a nonmemset (the flipped data) of models using BN and without using BN.}
    \label{fig:ap:loss}
\end{figure*}

\end{document}